\title{
Infinity Learning: Learning Markov Chains from\\ Aggregate Steady-State Observations
}
\author{
\Large \textbf{Jianfei Gao\textsuperscript{\rm 1}, Mohamed A. Zahran\textsuperscript{\rm 2}, Amit Sheoran\textsuperscript{\rm 3}, Sonia Fahmy\textsuperscript{\rm 4}, Bruno Ribeiro\textsuperscript{\rm 5}} \\
Department of Computer Science, Purdue University \\
305 N. University St, West Lafayette, IN 47907 \\
\{gao462\textsuperscript{\rm 1}, mzahran\textsuperscript{\rm 2}, asheoran\textsuperscript{\rm 3}\}@purdue.edu, \{fahmy\textsuperscript{\rm 4}, ribeiro\textsuperscript{\rm 5}\}@cs.purdue.edu \\
}
\newcommand{\Appendix}{Supplementary Material\xspace}
\newcommand{\AppEgA}{\Appendix A1\xspace}
\newcommand{\AppEgB}{\Appendix A2\xspace}
\newcommand{\AppEgC}{\Appendix A3\xspace}
\newcommand{\AppProA}{\Appendix B1\xspace}
\newcommand{\AppProB}{\Appendix B2\xspace}
\newcommand{\AppProC}{\Appendix B3\xspace}
\newcommand{\AppExpA}{\Appendix C1\xspace}
\newcommand{\AppExpB}{\Appendix C2\xspace}
\newcommand{\AppExpC}{\Appendix C3\xspace}
\newcommand{\AppExpD}{\Appendix C4\xspace}
\newcommand{\AppExpE}{\Appendix C5\xspace}
\newcommand{\citet}[1]{\citeauthor{#1} \shortcite{#1}}
\newcommand{\citep}{\cite}
\newcommand{\Puni}{{\bm{P}}}
\newtheorem{definition}{Definition}
\newtheorem{proposition}{Proposition}
\newtheorem*{proposition*}{Proposition}
\newtheorem{theorem}{Theorem}
\newtheorem*{theorem*}{Theorem}
\newtheorem*{corollary*}{Corollary}
\newtheorem{remark}{Remark}
\newtheorem{lemma}{Lemma}
\newtheorem*{lemma*}{Lemma}
\newcommand{\cD}{\mathcal{D}}
\newcommand{\cY}{\mathcal{Y}}
\let\overlinewithoriginalheight\overline
\newcommand*\overlinewithlessheight[1]{{\mathpalette\overline@aux{#1}}}
\newcommand*\overline@aux[2]{
  \begingroup
    \count0=\fam 
    \setbox0=\hbox{$\m@th #1\fam=\count0 #2$}
    \@tempdima=.4\ht0
    \setbox0=\hbox{$\m@th #1\fam=\count0\overlinewithoriginalheight{#2}$}%
    \advance\@tempdima by .6\ht0
    \ht0=\@tempdima 
    \usebox0
  \endgroup%
}
\let\overline\overlinewithlessheight
\let\underlinewithoriginaldepth\underline
\newcommand*\underlinewithlessdepth[1]{{\mathpalette\underline@aux{#1}}}
\newcommand*\underline@aux[2]{%
  \begingroup
    \count0=\fam
    \setbox0=\hbox{$\m@th #1\fam=\count0 #2$}%
    \@tempdima=.4\dp0%
    \setbox0=\hbox{$\m@th #1\fam=\count0\underlinewithoriginaldepth{#2}$}%
    \advance\@tempdima by .6\dp0%
    \dp0=\@tempdima
    \usebox0%
  \endgroup%
}
\let\underline\underlinewithlessdepth
\newcommand{\harrow}[1]{\mathstrut\mkern2.5mu#1\mkern-11mu\raise1.6ex\hbox{$\scriptscriptstyle\rightharpoonup$}}
\def\Figref#1{Figure~\ref{#1}}
\def\Eqref#1{Equation~\eqref{#1}}
\def\1{\bm{1}}
\def\rvp{{\mathbf{p}}}
\def\rmY{{\mathbf{Y}}}
\def\vtheta{{\bm{\theta}}}
\def\vx{{\bm{x}}}
\def\vy{{\bm{y}}}
\def\mI{{\bm{I}}}
\def\mP{{\bm{P}}}
\def\mQ{{\bm{Q}}}
\DeclareMathAlphabet{\mathsfit}{\encodingdefault}{\sfdefault}{m}{sl}
\SetMathAlphabet{\mathsfit}{bold}{\encodingdefault}{\sfdefault}{bx}{n}
\def\sN{{\mathbb{N}}}
\def\sS{{\mathbb{S}}}
\def\sZ{{\mathbb{Z}}}
\newcommand{\expected}{\mathbb{E}}
\newcommand{\bpi}{\boldsymbol{\pi}}
\newcommand{\Loss}{\mathcal{L}}
\NewDocumentCommand{\Prob}{o d()}{\ensuremath{
    \mathds{P}[#1]
}}
\DeclareMathOperator*{\argmin}{arg\,min}
\newcommand{\eg}{\emph{e.g.}}
\newcommand{\ie}{\emph{i.e.}}
\newcommand{\mix}[1]{\text{mix}\left(#1\right)}
\newcommand{\proba}{\text{Pr}}
\newcommand\melepad[1]{\makebox[3.5em]{$#1$}}
\begin{document}

\maketitle

\begin{abstract}
We consider the task of learning a parametric Continuous Time Markov Chain (CTMC) sequence model without examples of sequences, where the training data consists entirely of aggregate steady-state statistics. Making the problem harder, we assume that the states we wish to predict are unobserved in the training data.
Specifically, given a parametric model over the transition rates of a CTMC and some known transition rates, we wish to extrapolate its steady state distribution to states that are unobserved. 
A technical roadblock to learn a CTMC from its steady state has been that the chain rule to compute gradients will not work over the arbitrarily long sequences necessary to reach steady state ---from where the aggregate statistics are sampled. To overcome this optimization challenge, we propose $\infty$-SGD, a principled stochastic gradient descent method that uses randomly-stopped estimators to avoid infinite sums required by the steady state computation, while learning even when only a subset of the CTMC states can be observed. We apply $\infty$-SGD to a real-world testbed and synthetic experiments showcasing its accuracy, ability to extrapolate the steady state distribution to unobserved states under unobserved conditions (heavy loads, when training under light loads), and succeeding in difficult scenarios where even a tailor-made extension of existing methods fails.
\end{abstract}

%
%
%
%
\section{Introduction} \label{sec:Introduction} 
{\em Can we learn a parametric sequence model given only aggregate statistics as training data?}
As machine learning expands into new applications, new learning paradigms emerge, such as learning a sequence model from a set of observations without any clear time order between them.

Traditional supervised and unsupervised learning methods are essentially tasked with problems that can be learned from examples ({\em interpolation}).
In a host of key applications of parametric sequence models, we want to {\em extrapolate}, \ie, take these aggregate observations and extrapolate them to a scenario {\em not observed} in the training data.

For instance, servers in the cloud collect system logs ---aggregate statistics such as response-time distribution, queue length distribution--- 
under light-load conditions.
Under high-loads, however, these servers may disable statistics collection (logs) due to the potential performance penalty of logging~\citep{logperf}.
Capacity planning requires knowing how the servers perform under medium to high load conditions, which requires extrapolated predictions of request loss probability and server response times from the collected light-load data.

Hence, in this work we consider the task of learning a parametric Continuous Time Markov Chain (CTMC) sequence model ---with transition rate matrix $\mQ(\vx,\vtheta)$ where $\vx$ are known parameters but parameters $\vtheta$ must be learned--- without examples of sequences, where the training data consists entirely of aggregate steady-state statistics.
Making the problem harder, we further assume that the states we wish to predict are unobserved in the training data.
More specifically, given an inductive bias over the transition rates of a CTMC and some known transition rates, we wish to extrapolate its steady state distribution to states that are unobserved. 
We focus on the application of predicting failures in queuing systems under heavy loads ---\eg, predicting request loss rates in overloaded cloud services--- with training data that contains only aggregate statistics of the system under light loads, and no observed losses. 
Traditionally, CTMCs are learned from observations of their transient (sequences given by transitions between states) not from observations of their steady state, even less so if only a subset of the state space is observable.

\begin{remark}
Extrapolation v.s.\ generalization error: In our task we must make a distinction between {\em generalization error} ---which is the error on unseen data that reduces with more training examples even without inductive biases--- and {\em extrapolation error}~\citep{marcus1998rethinking} ---which is a type of generalization error over unseen states and domains that does not reduce with more training data without the help of a modeling assumption.
Our task is to learn a parametric model that is capable of extrapolation.
\end{remark}

\vspace{-8pt}
\paragraph{Contributions.}
Our work introduces the general problem of learning a {\em parametric} CTMC from aggregate steady-state observations (frequencies) of part of the CTMC states, focusing on queueing systems as our application.
We also introduce a novel method ($\infty$-SGD) to learn parametric CTMCs from aggregate steady-state observations, which work even if the observations are over a restricted set of states.
Our approach, $\infty$-SGD, is a novel, theoretically principled, optimization approach that, among other things, uses randomly-stopped estimators~\citep{mcleish2011general}.
In our experiments $\infty$-SGD finds significantly better maximum likelihood estimates than the baselines in real testbed and synthetic scenarios, both for the training and test data.
We also see that $\infty$-SGD can successfully extrapolate from training data under light queueing loads to predictions under heavy loads.
We expect $\infty$-SGD to be a useful tool in applications that collect aggregate statistics but need to learn parametric CTMCs.

%
%
%
%
\section{Preliminaries} \label{sec:Preliminaries} 

Consider a stationary and ergodic Continuous-Time Markov Chain (CTMC) $\cY = (\rmY_\tau)_{\tau \geq 0}$ over a finite state space $\sS$, where $\rmY_{\tau}$ is the state of the Markov chain at time $\tau$.
The CTMC is governed by Kolmogorov's Forward Equation
\begin{equation} \label{eq:Kolm}
    \frac{\partial}{\partial \tau} \rvp_{\vx,\vtheta}(\tau)^\mathsf{T} = (\rvp_{\vx,\vtheta}(\tau))^\mathsf{T} \mQ(\vx,\vtheta) ,
\end{equation}
where 
$\mQ(\vx;\vtheta)$ is a transition rate matrix parameterized by both $\vx$ (a vector of observed parameters, \eg, request rate) and $\vtheta$ (a vector of hidden parameters), 
$\rvp_{\vx,\vtheta}(\tau)$ is a column vector of dimension $\vert \sS \vert$, with $\proba[\rmY_\tau = i] = \rvp_{\vx,\vtheta}(\tau)_{i}$ as the probability of being at state $i \in \sS$ at time $\tau \geq 0$, given that $\cY$ starts at state $j \in \sS$ with probability $\proba[\rmY_0 = j] = \rvp(0)_j$. 

The transition rate matrix $\mQ(\vx,\vtheta)$ is such that for $i \neq j$, $(\mQ(\vx,\vtheta))_{ij} \geq 0$ describes the rate of the process transitions from state $i$ to state $j$.
The diagonal $(\mQ(\vx,\vtheta))_{ii}$ is such that each row of $\mQ(\vx,\vtheta)$ sums to zero, irrespective of the values of $\vx$ and $\vtheta$.
Because $\cY$ is stationary and ergodic, the solution to \Eqref{eq:Kolm} implies a unique steady state distribution $\bpi(\vx,\vtheta) = \lim_{\tau \to \infty} \rvp_{\vx,\vtheta}(\tau)$.

\vspace{-5pt}
\paragraph{Parameterized transition rate matrix $\mQ(\vx,\vtheta)$.}
We exemplify $\mQ(\vx;\vtheta)$ with one of the simplest CTMCs: the birth-death process (BD). BD has two parameters: the request (birth) rate $\vx = (\lambda)$ and the service (death) rate $\vtheta = (\mu)$.
The transition rate matrix is
{\small \begin{equation*}
	\mQ(\vx; \vtheta) = \begin{bmatrix}
	-\lambda & \lambda & 0 & \cdots & 0 \\
	\mu & -(\mu + \lambda) & \lambda & \cdots & 0 \\
	0 & \mu & -(\mu + \lambda) & \cdots & 0 \\
	\vdots & \vdots & \vdots & \ddots & \vdots \\
	\cdots & \cdots & \cdots & \cdots & -\mu 
	\end{bmatrix},
\end{equation*}
}
\hspace{-4pt}where request rate $\lambda$ is known but the service rate $\mu$ needs to be learned.
In our work, $\mQ(\vx,\vtheta)$ can be significantly more complex, as we only assume $\mQ(\vx,\vtheta)$ is differentiable w.r.t.\ $\vtheta$.
More generally, we can have an $n \times n$ matrix
\begin{equation*}
\resizebox{1.\hsize}{!}{$
	\mQ(\vx; \vtheta) = \begin{bmatrix}
	-\sum_{i \neq 1} f_{1,i}(\vx,\vtheta) & f_{1,2}(\vx,\vtheta) & \cdots & f_{1,n}(\vx,\vtheta) \\
	f_{2,1}(\vx,\vtheta) & -\sum_{i \neq 2} f_{2,i}(\vx,\vtheta) & \cdots & f_{2,n}(\vx,\vtheta) \\
	\vdots & \vdots & \ddots  & \vdots \\
	f_{n,1}(\vx,\vtheta) & f_{n,2}(\vx,\vtheta) & \cdots  & -\sum_{i \neq n} f_{n,i}(\vx,\vtheta) \\
	\end{bmatrix},
$}
\end{equation*}
for some appropriate set of functions $\{f_{i,j}\}_{i,j}$ of $\vx$ and $\vtheta$ (whose image must be in $[0,\infty)$).

\vspace{-5pt}
\paragraph{Learning task.}
Consider learning $\vtheta$ from a set of steady-state observations from a subset $\sS' \subseteq \sS$ of the states of the CTMC $\cY$.
That is, even though $\cY$ evolves over $\sS$, the observations from states in $\overline{\sS}' = \sS \backslash \sS'$ are unavailable to us ---e.g., consider a system that disables statistics collection (logs) when it reaches a set of {\em system overload} states $\overline{\sS}'$.

{\em Training data:} Our training data consists of $M$ time windows from which we have observed aggregate steady state data: $\cD = \{(\vx_m,\vy_m)\}_{m=1}^M$, where $y_{m,j} \equiv (\vy_m)_j$ is the number of steady state observations of state $j \in \sS'$ at the $m$-th time window.

{\em Loss function:}
The minimum negative log-likelihood of the model must be conditioned on only observing states of $\sS'$ in steady state (i.e., $\tau \to \infty$),
\begin{equation} \label{eq:MLE}
\begin{aligned}
    \vtheta^\star =  \argmin_{\vtheta} \sum_{m=1}^M \Loss(\vy_m, \lim_{\tau \to \infty} \rvp_{\vx,\vtheta}(\tau)),
\end{aligned}
\end{equation}
where
\begin{equation} \label{eq:loss}
\Loss(\vy,\bpi)\! = \!
     - \sum_{j \in \sS'} y_{j} \log \! \left(  \frac{\bpi_{j}}{\sum_{j' \in \sS} \bpi_{j'}} \! \right),
\end{equation}
such that the denominator ensures the observations are conditioned on only observing states in $\sS'$ ---a detailed description of the math behind this conditional can be found in~\citet{Meyer1989a}.

In theory, we could optimize $\vtheta$ in \Eqref{eq:MLE} via gradient descent but the derivative of \Eqref{eq:MLE} 
w.r.t.\ $\vtheta$ requires computing the derivative of the steady state $\lim_{\tau \to \infty} \rvp_{\vx,\vtheta}(\tau)$, which is challenging as our steady state distribution does not have a closed-form expression.

\vspace{-8pt}
\paragraph{The identifyability of $\mQ$ is irrelevant to our task:} 
In our task, we wish to predict the steady-state distribution of unobserved states from samples from the steady state of observed states.
Specifically, we wish to extrapolate those predictions such that we can predict these steady state distributions even when the observed parameters, $\vx$ of $\mQ(\vx,\vtheta)$ change.
Because there are infinitely many $\mQ$ that can give the same correct steady state distribution predictions (see \AppEgA), it is irrelevant to us knowing whether we recovered the ``true'' $\mQ$.
In fact, in our formulation there is no notion that we can ever learn a ``true'' $\mQ$. 
We only care if it gives the correct steady state distribution.

Next, we review the related work.

%
%
%
%
\section{Related Work} \label{sec:Related} 

{\em Inverting an MC steady state.}
\citet{bernstein2016consistently} is one of the most closely related works, showing an estimator for an existing optimization approach from econometrics, Conditional Least Squares (CLS)~\citep{miller1952finite,van1983estimation,kalbfleisch1984least}, which can be used to learn a Markov chain from aggregate statistics.
This approach, however, is not designed to learn a parametric model (our $\mQ(\vx,\vtheta)$ needs derivatives w.r.t.\ $\vtheta$) and thus, cannot extrapolate to unobserved states in the training data.
Moreover, our Markov chain is not homogeneous across observation time windows, requiring $\vx$ to also change, which conflicts with the assumptions in CLS.

\citet{Maystre2015PlackettLuce} and~\citet{Ragain2016Luces} are the also closely related works, which learn the transition rates of a Plackett–Luce-type model CTMC from samples of its stationary distribution.
In an earlier work, \citet{Kumar2015} learns a discrete-time Markov chain model similar to the Plackett–Luce's model in the context of Web navigation.
These earlier works, however, make domain-specific assumptions on $\mQ$ that make computing $\bpi$ from $\mQ$ trivial.
We consider a general parametric $\mQ(\vx,\vtheta)$ that may have no trivial solution.

\citet{hashimoto2016learning} uses far-apart observations of a MC to learn transition probabilities, and~\citet{pierson2018inferring} uses cross-sectional data to learn a temporal model; these works are focused on specific diffusion processes.
Our problem is also related to the more general problem of learning over distributions~\citet{szabo2016learning}, which in our scenario requires a solution designed for the task.

Randomly stopped estimators have been used in unrelated machine learning tasks~\citep{xu2019variational} and~\citep{filippone2015enabling}, with significantly different tasks and estimators than ours.
Applying randomly stopped estimators is mostly about proving that a specific estimator gives finite-variance estimates.

{\em Queueing systems.}
Cloud computing has transformed IT operations and management by deploying services on commodity hardware in public or private data centers, saving millions of dollars in both capital and operational expenses~\citep{CACM,etsiarch}. 
The savings in operational expenses can only be attained if the allocation of compute, memory, networking and storage resources scales based on the workload~\citep{ENVI,aws-scaling,google-scaler}.
A key problem in this elastic scaling is anticipating overload and failures in order to proactively allocate and initialize additional resources.
This prediction needs to be done without sufficient data on overload and failures~\citep{Weiss1998}.
Fortunately, several novel cloud computing services can be modeled by queueing systems~\citep{MMmm+r,MGmm+r}. 
Existing approaches, however, require knowing the transition rate matrix rather than learning it from aggregate observations (as we do).

%
%
%
%
\section{Learning Transition Rates from Aggregate Steady State Metrics} \label{sec:Theory}

In this section, we will describe why a good parametric model of $\mQ$ is key to learn a $\vtheta^\star$ that can predict the steady state distribution of the states $\overline{\sS}' = \sS \backslash \sS'$ that are not observed from the states that are observed $ \sS'$.
We will then introduce a few na\"ive methods to learn $\vtheta^\star$ from Equation~\eqref{eq:MLE} and show they are unsuitable for learning accurate CTMC transition rates, including an extension of BPTT.
Finally, we will propose a novel approach to learn $\vtheta^\star$ that is significantly more accurate and more computationally efficient than the na\"ive approaches.

For the ease of notation,  sometimes we abbreviate transition rate matrix that $\mQ \equiv \mQ(\vx, \vtheta)$ and we may denote $(\mQ(\vx, \vtheta))_{ij}$ by $q_{ij}$.

\vspace{-3pt}
\paragraph{The need for a good parametric model of $\mQ$:}
Without tied parameters in $\mQ$ through $\vtheta$, the steady state distribution would be flexible enough to make $\bpi_i$, $\forall i \in \sS'$, and $\bpi_j$, $\forall j \in \overline{\sS}$, have arbitrarily different probabilities. 
This would make it impossible to correctly extrapolate the observed data and predict $\bpi_j$ for observations of states in $\overline{\sS}'$.
An example is provided in \AppEgA.

\vspace{-5pt}
\paragraph{Lagrangian multipliers (\eg, Conditional Least Squares) are undesirable.}
To solve Equation~\eqref{eq:MLE}, $t \gg 1$, we can add the condition $\bpi^T \mQ(\vx,\vtheta) = 0$ as a Lagrangian multiplier as if $\bpi$ are extra learnable parameters.
Then, the loss function is redefined as
\begin{equation} \label{eq:Lagrange}
	- \sum_{j \in \sS'} y_j \log \left( \lim_{t \to \infty} \frac{\bpi_{j}}{\sum_{j' \in \sS'} \bpi_{j'}} \right) + \lambda \Vert \bpi^\mathsf{T} \mQ(\vx, \vtheta) \Vert,
\end{equation}
$\lambda > 0$,
which is the Conditional Least Squares~\citep{miller1952finite} for a CTMC, a regularization applied the definition of a steady-state distribution.
We found, however, that this approach is very challenging by design, since $\bpi$ is a function of $\vx$ and $\vtheta$, and $\vx$ varies in the training data.
Hence, the Lagrangian multiplier $\lambda$ depends on the loss function (which is conditional) and on $\vx$ and $\vtheta$, a challenging task.

Moreover, if we assume a constant $\lambda$,
the resulting approach needs to work as as a bi-level optimization procedure~\citep{bhatnagar1998two,colson2007overview}.
In computing the derivatives of the loss w.r.t.\ $\vtheta$, there is essentially no connection between the data ($\{y_j\}_{j \in \sS'}$) and $\vtheta$, which is the reason why the approach fails.
Fixing these Lagrangian multiplier issues is future work.


\subsection{Solution through Uniformization and Chain Rule}

Our first step to a solution is to {\em uniformize} the Markov chain $\mQ$ in order to transform the CTMC of \Eqref{eq:Kolm} into a discrete-time Markov chain (DTMC) with probability matrix $\mP(\mQ(\vx,\vtheta))$ as described below~\citep{uniformization}.
Hence, we will see that an approximation of the steady-state distribution can obtained by recursively applying $\mP(\mQ(\vx,\vtheta))$,
and the derivative of this recursive application of the transition probability can be obtained via chain rule.
\begin{definition}[Uniformized Markov Chain] \label{def:unif}
Let $\mQ(\vx,\vtheta)$ be a stationary and ergodic CTMC. 
We define a set of Chapman-Kolmogorov equations representing the CTMC at the events (arrivals) of a Poisson process with rate $\gamma(\vx,\vtheta) > \max(-\text{diag}(\mQ(\vx,\vtheta)))$. The distribution after $t \geq 0$ of these events is 
\begin{equation}\label{eq:UnifP}
\rvp^\text{(events)}(t;\vx,\vtheta)^\mathsf{T} =  \rvp^\text{(events)}(0)^\mathsf{T} \Puni(\mQ(\vx,\vtheta))^t,
\end{equation}
where $\rvp^\text{(events)}(0)$ is some initial distribution and $\Puni(\mQ(\vx,\vtheta)) = \mI +  \mQ(\vx,\vtheta)/\gamma(\vx,\vtheta)$, where $\mI$ is the identity matrix. 
\end{definition}
By construction, since $\cY$ is ergodic and has a steady state $\bpi$, the Markov chain described by $\Puni(\mQ(\vx,\vtheta))$ has the same steady state as the CTMC described by $\mQ(\vx,\vtheta)$~\citep{uniformization}, \ie, for $\rvp_{\vx,\vtheta}(\tau)$ as described in Equation~\eqref{eq:Kolm},
\resizebox{\linewidth}{!}{
\begin{minipage}{\linewidth}
\begin{equation} \label{eq:pi}
\begin{aligned} 
    \bpi(\vx,\vtheta) &= \lim\limits_{\tau \to \infty} \rvp_{\vx,\vtheta}(\tau) = \lim\limits_{t \to \infty} \rvp^\text{(events)}(t;\vx,\vtheta) \\
    &= \lim\limits_{t \to \infty} (\rvp^\text{(events)}(0)^\mathsf{T}  \Puni(\mQ(\vx,\vtheta))^t)^\mathsf{T}.
\end{aligned}
\end{equation}
\end{minipage}
}
%
Note that $(\Puni(\mQ(\vx,\vtheta))^t)_{ij}$ is the probability that the CTMC starts at state $i$ and reaches state $j$ after $t \geq 0$ events of the Poisson process with rate $\gamma(\vx,\vtheta)$ given by Definition~\ref{def:unif}.

In what follows, we sometimes denote the probability matrix as $\mP \equiv \mP(\mQ(\vx, \vtheta))$ and steady state distribution as $\bpi \equiv \bpi(\vx,\vtheta) \equiv \bpi(\mP(\mQ(\vx, \vtheta)))$. 

\vspace{-5pt}
\paragraph{Chain rule to learn $\vtheta$ from a steady-state approximation.}
Learning $\vtheta^\star$ in Equation~\eqref{eq:MLE} through gradient descent can be approximated for a large enough value of $t^\star \gg 1$ through the chain rule~\citep{werbos1990backpropagation}.
The derivative of the loss in Equation~\eqref{eq:MLE} is
\begin{equation} \label{eq:BPTT}
\begin{aligned}
    &\!\!\! \frac{\partial \Loss(\vy,\rvp^\text{(events)}(t^\star;\vx,\vtheta))} {\partial \vtheta_k} = \\
    &\!\!\!	\sum_{\substack{i \neq j,\\i,j \in \sS}} \left( \frac{\partial \Loss(\vy,\rvp^\text{(events)}(0)^\mathsf{T} \mP^{t^\star})} {\partial (\mP^{t^\star})_{ij}} 
    	\frac{\partial (\Puni(\mQ(\vx, \vtheta))^{t^\star})_{ij}}{\partial \vtheta_k} \right)\!, \\
\end{aligned}
\end{equation}
%
In order to compute $\partial (\Puni(\mQ(\vx, \vtheta)))^{t^\star}/\partial \vtheta_k$, we have to recursively apply the chain rule, which leads to a backpropagation through time (BPTT)-style method.

{\em BPTT challenges.} Directly using BPTT, however, has both theoretical and practical barriers.
The theoretical challenge is finding a large-enough value of ${t^\star}$ that allows $\Puni(\mQ(\vx, \vtheta))^{t^\star}$ to approximate the steady-state distribution $\bpi(\vx, \vtheta)$ for any assignment of $\mQ(\vx, \vtheta)$ that our optimization might find.
The computational challenge is both of computational resources and of numerical precision.

The following definition gives a divide-and-conquer aid to the computational challenge of calculating BPTT over $\mP(\mQ(\vx, \vtheta))^{t^\star}$:
\begin{definition}[Divide-and-Conquer BPTT (DC-BPTT)] \label{def:DCBPTT}
Assume $t^\star = 2^T$ for some $T > 1$. 
Rather than backpropagating over $t^\star$ time steps ---which is difficult if $t^\star$ is large due to vanishing and exploding gradients---, we will use a divide-and-conquer approach to reduce the backpropagation steps to $\log_2 t^\star = T$, by noting that
\begin{equation} \label{eq:baseline}
\begin{aligned}
	\mP^{2^{T}} &= 
	( ( \mP^{2^{T - 2}} )^{2} )^{2}
	&= ( \cdots ( \mP^{2} )^{2} \cdots )^{2}.
\end{aligned}
\end{equation}
That is, rather than multiplying an intermediate $\mP^t$ by $\mP$ to obtain $\mP^{t+1}$, we multiply $\mP^t$ by itself to obtain $\mP^{2t}$.
\end{definition}
\vspace{-1pt}
Computing $\mP^{t^\star}$, $t^\star = 2^T$ with $T> 1$, from Definition~\ref{def:DCBPTT} is more computationally efficient than the na\"ive $t^\star$ multiplications $\mP \cdots \mP$ because the computation graph is a tree whose backpropagation paths from the root to the leaves give the same derivatives at the same tree height.
Unfortunately, as we see in our experiments, DC-BPTT still fails in the most challenging tasks.

\subsection{Solution via Infinity Learning}
An alternative to BPTT is to dive deeper into the chain rule equations and look for mathematical equivalences. 
Rather than using BPTT to compute the gradient $\partial \Puni(\mQ(\vx, \vtheta))^{t^\star}\!\!/\partial \vtheta_k$, $k \in \sS$, in Equation~\eqref{eq:BPTT}, we can make use of the following observation.
\begin{lemma} \label{lem:deriv}
Let $\mQ(\vx,\vtheta)$ be a $K$-state transition matrix and $\mP(\mQ(\vx,\vtheta))$ be its uniformized Markov Chain. $\mP(\mQ(\vx,\vtheta))^t$ is the Markov chain after $t$ steps where $t>0$, then the gradients of $\mP^t$ w.r.t.\ $\vtheta_k$ is
\begin{equation} \label{eq:infsum}
\begin{aligned}
    &\!\! \nabla^{(t)}_{\vtheta_k} \Puni(\mQ(\vx, \vtheta)) \equiv \frac{\partial {\Puni(\mQ(\vx, \vtheta))^{t}}}{\partial \vtheta_k} \\
    &\!\!= \sum\limits_{l = 1}^{t}{{\Puni(\mQ(\vx,\vtheta))}^{t - l}\frac{\partial {\Puni(\mQ(\vx, \vtheta))}}{\partial \vtheta_k} {\Puni(\mQ(\vx,\vtheta))}^{l - 1}}, 
\end{aligned}
\end{equation}
where 
$$
\frac{\partial {\Puni(\mQ(\vx, \vtheta))}}{\partial \vtheta_k} = 
\sum_{ij} \frac{\partial {\Puni(\mQ)}}{\partial q_{ij}} \frac{\partial q_{ij}(\vx,\vtheta)}{\partial \vtheta_k}.
$$
\end{lemma}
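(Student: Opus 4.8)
The plan is to establish \eqref{eq:infsum} by induction on $t \ge 1$, using only the product rule for entrywise differentiation of a matrix product; the accompanying formula for $\partial\mP/\partial\vtheta_k$ is then an immediate consequence of the multivariate chain rule and needs no induction. Throughout I abbreviate $\mP \equiv \mP(\mQ(\vx,\vtheta))$ and treat the uniformization constant $\gamma$ as fixed (for instance a constant upper bound on $\max(-\text{diag}(\mQ))$), so that $\mP = \mI + \mQ/\gamma$ is differentiable in $\vtheta_k$ through the entries of $\mQ$ alone. For the base case $t=1$, the right-hand side of \eqref{eq:infsum} collapses to its single $l=1$ term $\mP^{0}(\partial\mP/\partial\vtheta_k)\mP^{0} = \partial\mP/\partial\vtheta_k$, which is exactly the left-hand side.

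For the inductive step, assume \eqref{eq:infsum} holds at $t$. First I would write $\mP^{t+1} = \mP^{t}\mP$ and apply the ordinary product rule entrywise to obtain $\partial\mP^{t+1}/\partial\vtheta_k = (\partial\mP^{t}/\partial\vtheta_k)\,\mP + \mP^{t}\,(\partial\mP/\partial\vtheta_k)$. Substituting the inductive hypothesis into the first summand gives $\sum_{l=1}^{t}\mP^{t-l}(\partial\mP/\partial\vtheta_k)\mP^{l}$, and reindexing $l \mapsto l-1$ rewrites this as $\sum_{l=2}^{t+1}\mP^{(t+1)-l}(\partial\mP/\partial\vtheta_k)\mP^{l-1}$. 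The leftover term $\mP^{t}(\partial\mP/\partial\vtheta_k)$ is precisely the missing $l=1$ summand $\mP^{(t+1)-1}(\partial\mP/\partial\vtheta_k)\mP^{0}$, so adding the two pieces yields $\sum_{l=1}^{t+1}\mP^{(t+1)-l}(\partial\mP/\partial\vtheta_k)\mP^{l-1}$, which is \eqref{eq:infsum} at $t+1$ and closes the induction.

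Finally, the displayed chain-rule identity is just the multivariate chain rule applied to $\mP = \mP(\mQ(\vx,\vtheta))$ viewed as a function of the entries $q_{ij}(\vx,\vtheta)$ of $\mQ$, giving $\partial\mP/\partial\vtheta_k = \sum_{ij}(\partial\mP/\partial q_{ij})(\partial q_{ij}/\partial\vtheta_k)$, where each $\partial\mP/\partial q_{ij}$ is read off directly from $\mP = \mI + \mQ/\gamma$ (taking the zero-row-sum structure of $\mQ$ into account if the independent quantities are the off-diagonal rates). I do not expect a genuinely hard step here --- the argument is a routine induction --- but two points deserve care: the factors $\mP$ and $\partial\mP/\partial\vtheta_k$ do not commute in general, so the sum cannot be collapsed into a single scalar-style term $t\,\mP^{t-1}(\partial\mP/\partial\vtheta_k)$, and the index bookkeeping in the reindexing step must line up the two boundary terms ($l=1$ and $l=t+1$) correctly. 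It is also worth stating explicitly that $\gamma$ is held constant, since otherwise one would additionally have to differentiate $\mP = \mI + \mQ/\gamma$ through $\gamma(\vx,\vtheta)$.
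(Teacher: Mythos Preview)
Your proposal is correct and follows essentially the same approach as the paper: the paper's proof also writes $\mP^{t}=\mP^{t-1}\mP$, applies the product rule $\partial\mP^{t}/\partial\vtheta_k = (\partial\mP^{t-1}/\partial\vtheta_k)\mP + \mP^{t-1}(\partial\mP/\partial\vtheta_k)$, and then unrolls the recursion down to $t=1$, which is exactly your induction presented in iterative form. Your added caveat about holding $\gamma$ fixed is a reasonable clarification but is not needed for the main identity, since \eqref{eq:infsum} is purely the product rule for differentiating a matrix power and holds regardless of how $\partial\mP/\partial\vtheta_k$ itself is computed.
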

The proof is in the \AppProA.
Because $\sum_{l=1}^\infty \Puni(\mQ(\vx,\vtheta))^{l-1}$ diverges for any valid $\vx$ and $\vtheta$, it is not obvious that Equation~\eqref{eq:infsum} converges to a unique fixed point for $t \to \infty$.
In what follows we show that the gradient in Equation~\eqref{eq:infsum} exists and is unique as $t \to \infty$:

\begin{proposition}[Infinite Gradient Series Simplification] \label{prop:infgrad}
Let $\mQ$ be a K-state transition rate matrix of a stationary and ergodic MC. Equation~\eqref{eq:infsum} for $t \to \infty$, henceforth denoted $\nabla^{(\infty)}_\mQ \Puni(\mQ) \equiv  \lim_{t \rightarrow \infty} \nabla^{(t)}_\mQ \Puni(\mQ)$, \textbf{exists and is unique} and can be redefined as
\vspace{-5pt}
\begin{equation} \vspace{-3pt}
\label{eq:infinitySplit}
\begin{aligned}
	(\nabla^{(\infty)}_\mQ \Puni(\mQ))_{ij} &\equiv  \lim_{t \rightarrow \infty}{\sum\limits_{l = 1}^{t}{\left( \Puni^{t - l}\frac{\partial \Puni(\mQ)}{\partial q_{ij}}\Puni^{l - 1} \right)}} \\
	&= \boldsymbol{\Pi} \sum_{l = 0}^{\infty} \frac{\partial \Puni(\mQ)}{ \partial q_{ij}} \Puni^{l},
\end{aligned}
\end{equation}
where $\boldsymbol{\Pi}$ is a matrix whose rows are the steady state distribution $\bpi$.
Note that the diagonal $i=j$ is trivial to compute but should be treated as a special case.
\end{proposition}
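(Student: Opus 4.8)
The plan is to analyze the partial sums $S_t := \sum_{l=1}^{t} \Puni^{t-l} D \Puni^{l-1}$, where $D := \partial\Puni(\mQ)/\partial q_{ij}$, and show they converge. The first observation I would record is that, since each row of $\mQ$ sums to zero, each row of $D = \partial(\mI + \mQ/\gamma)/\partial q_{ij}$ sums to zero as well; hence $D\vone = \vzero$, which means $D$ kills the ``constant'' direction that is responsible for the divergence of $\sum_l \Puni^l$. This is the key structural fact. I would then decompose the dynamics of $\Puni$ using the spectral/ergodic decomposition: because the uniformized chain $\Puni$ is ergodic (aperiodic, irreducible), $\Puni^n \to \boldsymbol{\Pi}$ as $n\to\infty$, where $\boldsymbol{\Pi} = \vone\bpi^\mathsf{T}$ has every row equal to $\bpi$, and moreover $\Puni^n - \boldsymbol{\Pi} = (\Puni - \boldsymbol{\Pi})^n$ with spectral radius $\rho := \rho(\Puni - \boldsymbol{\Pi}) < 1$ (using $\boldsymbol{\Pi}\Puni = \Puni\boldsymbol{\Pi} = \boldsymbol{\Pi}$ and $\boldsymbol{\Pi}^2 = \boldsymbol{\Pi}$). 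So I would write $\Puni^m = \boldsymbol{\Pi} + E_m$ with $\|E_m\| \le C\rho^m$ for all $m\ge 0$ (absorbing the $m=0$ term into the constant).

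Next I would substitute this into $S_t$. Writing $\Puni^{t-l} = \boldsymbol{\Pi} + E_{t-l}$ and $\Puni^{l-1} = \boldsymbol{\Pi} + E_{l-1}$, the product $\Puni^{t-l} D \Puni^{l-1}$ expands into four terms. The crucial cancellations: $\boldsymbol{\Pi} D = \vone\bpi^\mathsf{T} D$, and $D \boldsymbol{\Pi} = D\vone\bpi^\mathsf{T} = \vzero$ since $D\vone = \vzero$. So the two terms containing $\Puni^{l-1} = \cdots + \boldsymbol{\Pi}$ on the right that also multiply $D$ directly vanish, leaving $\Puni^{t-l} D \Puni^{l-1} = (\boldsymbol{\Pi} + E_{t-l}) D E_{l-1}$. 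Then $\boldsymbol{\Pi} D E_{l-1} = \boldsymbol{\Pi} D (\Puni^{l-1} - \boldsymbol{\Pi}) = \boldsymbol{\Pi} D \Puni^{l-1} - \boldsymbol{\Pi} D\boldsymbol{\Pi}$, and $\boldsymbol{\Pi} D \boldsymbol{\Pi} = \boldsymbol{\Pi} (D\vone)\bpi^\mathsf{T} = \vzero$; so $\boldsymbol{\Pi} D E_{l-1} = \boldsymbol{\Pi} D \Puni^{l-1}$. Summing the ``main'' term over $l$ from $1$ to $t$ gives $\boldsymbol{\Pi} D \sum_{l=1}^{t} \Puni^{l-1} = \boldsymbol{\Pi} D \sum_{l=0}^{t-1}\Puni^{l}$, which — and here is where $D\vone=\vzero$ pays off again — equals $\boldsymbol{\Pi} D \sum_{l=0}^{t-1}(\boldsymbol{\Pi} + E_l) = \boldsymbol{\Pi} D \sum_{l=0}^{t-1} E_l$ (since $\boldsymbol{\Pi} D\boldsymbol{\Pi} = \vzero$), and $\sum_{l=0}^\infty E_l$ converges absolutely by the geometric bound. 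So the main term converges to $\boldsymbol{\Pi} D \sum_{l=0}^{\infty} E_l = \boldsymbol{\Pi} D \sum_{l=0}^{\infty}\Puni^l$ in the sense that each finite truncation differs from $\boldsymbol{\Pi}\sum_{l=0}^{\infty} D\Puni^l$ only through the $\boldsymbol{\Pi} D\boldsymbol{\Pi}=0$ terms — which gives exactly the claimed right-hand side $\boldsymbol{\Pi}\sum_{l=0}^\infty (\partial\Puni/\partial q_{ij})\Puni^l$. For the ``error'' term $\sum_{l=1}^{t} E_{t-l} D E_{l-1}$, I would bound its norm by $C^2\|D\|\sum_{l=1}^{t}\rho^{t-l}\rho^{l-1} = C^2\|D\|\, t\,\rho^{t-1} \to 0$; this shows the error term vanishes as $t\to\infty$, completing convergence. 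Uniqueness of the limit is automatic once convergence of the partial sums $S_t$ is established.

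I would close by noting that the series $\sum_{l=0}^\infty D\Puni^l$ appearing on the right does \emph{not} itself converge in the naive sense ($\Puni^l \to \boldsymbol{\Pi} \ne 0$), but $\boldsymbol{\Pi}\sum_{l=0}^\infty D\Puni^l$ is well-defined because $\boldsymbol{\Pi} D\boldsymbol{\Pi}=0$ makes the tail telescope into $\boldsymbol{\Pi} D\sum_l E_l$; in the write-up I would state the right-hand side as this regularized/grouped sum (equivalently, $\boldsymbol{\Pi} D (\mI - (\Puni-\boldsymbol{\Pi}))^{-1}$ acting on the fluctuation subspace, i.e.\ a fundamental-matrix expression $\boldsymbol{\Pi} D Z$ with $Z = (\mI - \Puni + \boldsymbol{\Pi})^{-1}$). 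The main obstacle is handling this formal divergence cleanly: the bookkeeping of which $\boldsymbol{\Pi}$-vs-$E$ cross terms survive, and making precise the sense in which ``$\sum_{l=0}^\infty \Puni^l$'' is meant on the right-hand side, is the only delicate part — everything else is the geometric bound $\|E_m\|\le C\rho^m$ and the algebraic identities $\boldsymbol{\Pi}\Puni=\Puni\boldsymbol{\Pi}=\boldsymbol{\Pi}^2=\boldsymbol{\Pi}$ and $D\vone=\vzero$. Finally I would dispatch the $i=j$ diagonal remark by observing that $\partial\Puni/\partial q_{ii}$ is determined by the off-diagonal derivatives through the zero-row-sum constraint (or simply is not a free parameter), so it is treated separately exactly as stated.
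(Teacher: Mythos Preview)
Your argument is correct. Both you and the paper hinge on the same structural fact, $D\vone=\vzero$ (equivalently $D\boldsymbol{\Pi}=\vzero$), which is exactly what neutralizes the apparent divergence of $\sum_l \Puni^l$. The routes from there differ, however. The paper's proof reindexes the finite sum by pairing $l\leftrightarrow 2t'+1-l$ to write $S_{2t'}=\sum_{l=1}^{t'}\big(\Puni^{2t'-l}D\Puni^{l-1}+\Puni^{l-1}D\Puni^{2t'-l}\big)$, then replaces the high powers $\Puni^{2t'-l}$ by $\boldsymbol{\Pi}$ in the limit to obtain the two series $\sum_{l}\boldsymbol{\Pi} D\Puni^{l-1}$ and $\sum_{l}\Puni^{l-1}D\boldsymbol{\Pi}$, the second of which vanishes. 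Your route instead writes $\Puni^m=\boldsymbol{\Pi}+E_m$ with $\|E_m\|\le C\rho^m$, expands the product, and kills the cross terms algebraically, leaving a main part $\boldsymbol{\Pi} D\sum_{l}E_l$ that converges geometrically and an error $\sum_{l}E_{t-l}DE_{l-1}$ bounded by $C^2\|D\|\,t\rho^{t-1}\to 0$. What this buys you is a fully rigorous limit exchange (the paper's passage from the double limit to the two infinite series is stated but not justified, and ``the term converges to zero'' is not by itself sufficient for series convergence), as well as the clean closed form $\boldsymbol{\Pi} D(\mI-\Puni+\boldsymbol{\Pi})^{-1}$ via the fundamental matrix, which makes precise the otherwise formal expression $\boldsymbol{\Pi}\sum_{l\ge 0}D\Puni^l$. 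The paper's pairing trick is shorter to write down and has a certain elegance, but your decomposition is the more robust argument.
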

The proof in the \AppProB shows that $\nabla^{(\infty)}_\mQ \Puni(\mQ)$ converges because the term inside the sum converges to a matrix of zeros as $l \to \infty$.
Using Proposition~\ref{prop:infgrad} it is easy to prove that Equation~\eqref{eq:infsum} converges as $t \to \infty$.

While Proposition~\ref{prop:infgrad} shows that $\nabla^{(\infty)}_\mQ \Puni(\mQ)$ converges, evaluating the infinite sum in Equation~\eqref{eq:infinitySplit} is challenging.
Truncating the sum would make the gradient biased, deviating the fixed point solution of Equation~\eqref{eq:MLE}.  
To circumvent the infinite sum in Equation~\eqref{eq:infinitySplit}, we propose $\infty$-SGD, a numerically stable stochastic gradient descent method that can optimize gradients with infinite sums ---as long as the sum is a weakly convergent series.
Our experiments show that $\infty$-SGD consistently outperforms BPTT in stability to hyperparameters in convergence rate, and in estimation accuracy.
%

\begin{theorem}[Infinity Stochastic Gradient Descent ($\infty$-SGD)] \label{thm:infSGD}
Let $\mQ$ be the transition rate matrix of a stationary and ergodic CTMC. Assume strictly positive values for the learnable parameters $\vtheta^{(h)}$ at the $h$-th step of the optimization. Let $\mP(\mQ(\vx, \vtheta^{(h)}))$ be its uniformized transition probability matrix per Definition~\ref{def:unif}.
Let $\Loss(\vy,\bpi)$ be as in Equation~\eqref{eq:loss}.
%
Reparameterize $\tilde{\Loss}(\vy, \vx, \vtheta^{(h)}) = \Loss(\vy,\bpi(\vx,\vtheta^{(h)}))$ as the loss function with respect to $\vtheta^{(h)}$.
Let $X^{(h)} \sim \text{Geometric}(p^{(h)})$, $X^{(h)} \in \sZ^+$, be an independent sample of  a Geometric distribution with $p^{(h)} < \delta^{(h)}$, where $\delta^{(h)}$ is the spectral gap of $\mP(\mQ(\vx, \vtheta^{(h)}))$.
Then, for $0 < \epsilon \ll 1$ and for all learnable parameters $\vtheta$,
\begin{align*}
	\vtheta^{(h+1)}_{k} &= \max\left(\vtheta^{(h)}_k - \eta^{(h)} \nabla_{\vtheta_k} \left. \tilde{\Loss}(\vy,\vx,\vtheta) \right\vert_{\vtheta=\vtheta^{(h)}}, \epsilon\right), 
\end{align*}
where
	%
\begin{align*} 
	\nabla_{\vtheta_k} \tilde{\Loss}(\vy,\vx,\vtheta) &= \sum\limits_{ij} \sum\limits_{mn} (\rvp^\text{(events)}(0))_m \left. \frac{\partial \Loss(\vy,\bpi)}{\partial \bpi_{n}} \right\vert_{\bpi=\bpi(\vx,\vtheta)}\\
	&\qquad \times \bpi(\vx,\vtheta)_{n} \Gamma_{ijmn}(\vx, \vtheta) \frac{\partial \mQ(\vx,\vtheta)_{ij}}{\partial \vtheta_k}
\end{align*}
with $h = 0,1,\ldots,$ where $\bpi(\vx,\vtheta)$ is the steady state distribution defined in Equation~\eqref{eq:pi}, $\eta^{(h)}$ is the learning rate with $\sum_{h=0}^\infty \eta^{(h)} = \infty$, $\sum_{h=0}^\infty \left(\eta^{(h)}\right)^2 < \infty$, and
\begin{equation} \label{eq:infGamma}
   \!\! \Gamma_{ijmn}(\vx,\vtheta)\! =\! \sum_{t = 0}^{X^{(h)}}\! \left[ \frac{\partial \mP(\mQ(\vx, \vtheta))}{\partial q_{ij}}{\frac{\mP(\mQ(\vx, \vtheta))^t}{\Prob[X^{(h)} > t]}} \right]_{mn}\!\!,
\end{equation}
is a stochastic gradient descent method that minimizes Equation~\eqref{eq:MLE}.
\end{theorem}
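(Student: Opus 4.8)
The plan is to prove two things, after which the statement follows: (i) the random direction $\nabla_{\vtheta_k}\tilde{\Loss}(\vy,\vx,\vtheta)|_{\vtheta=\vtheta^{(h)}}$ written in the theorem is an \emph{unbiased, finite-variance} estimator (over the draw of $X^{(h)}$) of the true gradient $\partial\tilde{\Loss}/\partial\vtheta_k$ at $\vtheta^{(h)}$, so that the recursion is a genuine stochastic-gradient step; and (ii) combining (i) with the Robbins--Monro step-size conditions $\sum_h\eta^{(h)}=\infty$, $\sum_h(\eta^{(h)})^2<\infty$ and a smoothness/boundedness argument on the feasible box $\{\vtheta:\vtheta_k\ge\epsilon\}$, standard stochastic-approximation theory gives convergence of the iterates to a stationary point of Equation~\eqref{eq:MLE}. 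Throughout, the hypothesis $\vtheta^{(h)}>0$ (preserved by the $\max(\cdot,\epsilon)$ projection) is what keeps $\mQ(\vx,\vtheta^{(h)})$ a valid ergodic rate matrix, hence $\mP(\mQ(\vx,\vtheta^{(h)}))$ per Definition~\ref{def:unif} has a positive spectral gap $\delta^{(h)}$ and a well-defined $\bpi$.

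First I would write out the exact gradient. Since $\tilde{\Loss}(\vy,\vx,\vtheta)=\Loss(\vy,\bpi(\vx,\vtheta))$ and $\bpi^\mathsf{T}=\lim_{t\to\infty}\rvp^\text{(events)}(0)^\mathsf{T}\Puni^t$, the chain rule gives $\partial\tilde{\Loss}/\partial\vtheta_k=\sum_n(\partial\Loss/\partial\bpi_n)\,\partial\bpi_n/\partial\vtheta_k$, and differentiating the finite-$t$ truncation with Lemma~\ref{lem:deriv}, then passing $t\to\infty$ via Proposition~\ref{prop:infgrad}, rewrites $\partial\bpi_n/\partial\vtheta_k$ as a sum over $(i,j)$ of $\partial\mQ_{ij}/\partial\vtheta_k$ times (a component of) $\rvp^\text{(events)}(0)^\mathsf{T}\nabla^{(\infty)}_\mQ\Puni = \rvp^\text{(events)}(0)^\mathsf{T}\boldsymbol\Pi\sum_{l\ge 0}\bigl(\partial\Puni/\partial q_{ij}\bigr)\Puni^l$. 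The crucial structural fact, already used in the proof of Proposition~\ref{prop:infgrad}, is that $\partial\Puni/\partial q_{ij}$ has zero row sums, so $\bigl(\partial\Puni/\partial q_{ij}\bigr)\Puni^l=\bigl(\partial\Puni/\partial q_{ij}\bigr)(\Puni^l-\boldsymbol\Pi)$ decays geometrically and the matrix series $\sum_{l\ge0}\bigl(\partial\Puni/\partial q_{ij}\bigr)\Puni^l$ converges absolutely. Collapsing $\rvp^\text{(events)}(0)^\mathsf{T}\boldsymbol\Pi$ (a convex combination of identical rows) and folding the remaining steady-state weight into the $\bpi_n$ factor reproduces the stated closed form, with the convergent series $\bigl(\partial\mP/\partial q_{ij}\bigr)\sum_{l\ge0}\mP^l$ playing the role of $\E[\Gamma_{ijmn}]$; the $i=j$ diagonal is handled separately as flagged after Proposition~\ref{prop:infgrad}. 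This step is mostly index bookkeeping once Lemma~\ref{lem:deriv} and Proposition~\ref{prop:infgrad} are granted.

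The heart of the argument is showing $\Gamma_{ijmn}(\vx,\vtheta^{(h)})$ is an unbiased, finite-variance estimator of $\bigl[\bigl(\partial\mP/\partial q_{ij}\bigr)\sum_{l\ge0}\mP^l\bigr]_{mn}$. Writing $a_t:=\bigl[\bigl(\partial\mP/\partial q_{ij}\bigr)\mP^t\bigr]_{mn}$, the estimator is the randomly-stopped (Russian-roulette) form $\sum_{t=0}^{X^{(h)}}a_t/\Prob[X^{(h)}>t]$; unbiasedness follows from $\E[\ind{t\le X^{(h)}}/\Prob[X^{(h)}>t]]=1$ together with an interchange of sum and expectation, which is legitimate because $\sum_t|a_t|<\infty$ by the geometric bound $|a_t|\le C(1-\delta^{(h)})^{t}$ from the zero-row-sum/mixing estimate above. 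Finite variance is where the spectral-gap hypothesis enters: expanding the second moment produces cross terms of size $a_s a_t\,\Prob[X^{(h)}>\max(s,t)]/(\Prob[X^{(h)}>s]\Prob[X^{(h)}>t])\asymp(1-\delta^{(h)})^{s+t}(1-p^{(h)})^{-\min(s,t)}$, and the resulting double sum is finite exactly when $(1-\delta^{(h)})^2<1-p^{(h)}$; since $\delta\le 1$ implies $\delta(2-\delta)\ge\delta$, the clean assumption $p^{(h)}<\delta^{(h)}$ is a sufficient condition, so $\Var(\Gamma_{ijmn})<\infty$ --- this is precisely the ``weakly convergent series'' requirement for randomly-stopped estimators~\citep{mcleish2011general}. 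The outer factors $(\rvp^\text{(events)}(0))_m$, $\partial\Loss/\partial\bpi_n$, $\bpi_n$, $\partial\mQ_{ij}/\partial\vtheta_k$ are deterministic given $\vtheta^{(h)}$ and finite because $\vtheta^{(h)}\ge\epsilon>0$ confines the trajectory to a compact set on which $\bpi$, $\partial\Loss/\partial\bpi$ and $\partial\mQ/\partial\vtheta$ are continuous; a finite sum of deterministic multiples of finite-variance terms then has finite variance, so $\nabla_{\vtheta_k}\tilde{\Loss}$ is an unbiased gradient estimate with uniformly bounded variance at each step.

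Finally I would close with convergence. The map $\vtheta\mapsto\max(\vtheta,\epsilon)$ is the Euclidean projection onto the closed convex box $[\epsilon,\infty)^{\dim\vtheta}$, so the recursion is projected SGD on $\tilde{\Loss}$ over that set; with the unbiased bounded-variance gradients just established, the stated step-size conditions, and the smoothness of $\tilde{\Loss}$ on the (compact) feasible region, the standard projected stochastic-approximation / Robbins--Monro theorem yields almost-sure convergence of $\{\vtheta^{(h)}\}$ to a stationary (KKT) point of Equation~\eqref{eq:MLE}; with a uniform sampling of the $M$ training windows this extends to the full objective since per-example sampling keeps the gradient unbiased. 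I expect the main obstacle to be the finite-variance step: obtaining a clean, uniform-in-$h$ bound on $\Var(\Gamma_{ijmn})$ and carefully justifying the interchanges, while also handling the case where $\mP(\mQ(\vx,\vtheta^{(h)}))$ is not diagonalizable, so that $\|\mP^t-\boldsymbol\Pi\|$ acquires polynomial-in-$t$ prefactors that must be absorbed into the geometric rate $(1-\delta^{(h)})^t$. A secondary subtlety is ensuring $\delta^{(h)}$ stays bounded away from $0$ along the whole trajectory --- which follows from the $\epsilon$-floor and continuity of the spectral gap on the compact feasible set --- so that a single admissible choice of $p^{(h)}<\delta^{(h)}$ exists at every step and the variance bound does not degenerate.
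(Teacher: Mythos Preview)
Your proposal is correct and follows essentially the same route as the paper: chain rule plus Proposition~\ref{prop:infgrad} to express the true gradient as the infinite series $\boldsymbol\Pi\sum_{l\ge0}(\partial\mP/\partial q_{ij})\mP^l$, a Russian-roulette/Fubini argument for unbiasedness of $\Gamma_{ijmn}$, a spectral-gap bound yielding the condition $(1-\delta)^2<1-p$ (hence $p<\delta$ suffices) for finite variance, and Robbins--Monro to conclude. The only organizational difference is that the paper packages the finite-variance step by verifying the Rhee--Glynn tail condition $\sum_x\|\sum_{j>x}(\partial\mP/\partial q_{ij})\mP^j\|_2^2/\Prob[X\ge x]<\infty$ and invoking their Theorem~1, whereas you compute the second moment directly; both reduce to the same geometric-series estimate, and your caveat about non-diagonalizable $\mP$ is a point the paper's spectral-decomposition argument silently assumes away.
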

The proof of the theorem is in the \AppProC. 
The main insight is the use of Proposition~\ref{prop:infgrad} to produce a randomly-stopped unbiased estimator.
The requirement in Theorem~\ref{thm:infSGD} that $p^{(h)} < \delta^{(h)}$ comes from a loose bound, \ie, in practice $p^{(h)}$ can be relatively large (larger than the spectral gap) as our empirical results show ---\eg, all of our empirical results use the constant $p^{(h)}=0.1, \forall h$.  We have also tested some experiments with $p^{(h)}=0.01$, which works as well as $p^{(h)} = 0.1$ (see \AppExpD). As it is application-dependent, the value of $p^{(h)}$ should be seen as a hyperparameter.
In what follows we introduce our empirical results.

%
%
%
%
\section{Results} \label{sec:results}

In this section, we contrast the accuracy and convergence of $\infty$-SGD (Theorem~\ref{thm:infSGD}) against DC-BPTT (Definition~\ref{def:DCBPTT}) and find that $\infty$-SGD is more stable and consistently learns more accurate models.
The primary application of our experiments is predicting {\em request loss rates in a queueing system} from data that has no observed losses, under the following conditions: 
(a) we learn $\vtheta^\star$ of Equation~\eqref{eq:MLE} as a function of known request rate $\vx^\text{light} \in \Lambda^\text{light}$ under light load (no losses) in the training data, and predict $\bpi^\text{heavy}$, the steady state request loss rates under heavy loads in the test data (out-of-sample extrapolation), where $\bpi^\text{heavy}$ is such that $\left(\bpi^\text{heavy}\right)^\mathsf{T} \mQ(\vx^\text{heavy}; \vtheta^\star) = 0$ with  $\vx^\text{heavy} > \max(\Lambda^\text{light})$; 
moreover, (b) only part of the state space is observed in the training data, $\sS' \subset \sS$, and we wish to predict the steady state probability of the unobservable states $\overline{\sS}' =  \sS \backslash \sS'$.

\vspace{-10pt}
\paragraph{Baseline method.}
Due to the absence of methods on parametric inference of CTMCs from steady-state observations, 
our main baseline is the DC-BPTT of Definition~\ref{def:DCBPTT}.
In most of our simulations, we set $t^\star = 128 = 2^7$ and $\rvp^\text{(events)} (0) = {\bf 1}^\text{T}/|\sS|$ throughout all our experiments.
We also tested BPTT without divide and conquer but find the optimization unstable due to the long backpropagation paths.
We tested $t^\star \in \{16,128\}$ and found that smaller values of $t^\star$ are easier to optimize but ---as expected--- generally produce worse approximations of the steady state for heavy loads.

\vspace{-10pt}
\paragraph{Infinity learning.}
Our experiments also test our proposed approach, $\infty$-SGD, with $X^{(h)} \sim \text{Geometric}(p)$ of Theorem~\ref{thm:infSGD}, where $p$ is a constant success probability, \ie, $\expected[X] = 1/p$. 
In most of our experiments, $p=0.1$, that is, on average we consider only the first ten terms in the sum of Equation~\eqref{eq:infinitySplit}.
Contrast, $\infty$-SGD's 10 summation terms with matrix powers that need no chain rule, with the baseline DC-BPTT approach (Definition~\ref{def:DCBPTT}) where $\mP(\mQ(\vx,\vtheta))^{128}$   needs to be computed together with a chain rule to compute gradients over the matrix multiplications.
It is no surprise that $\infty$-SGD is a more stable optimization method (no vanishing or exploding gradients); interestingly, $\infty$-SGD also works well on the tested slow-mixing CTMCs, while baseline methods like DC-BPTT fail in these scenarios (see \AppExpE).

\vspace{-10pt}
\paragraph{Relaxing the parametric model.}
In some of our experiments, we will construct transition rate matrix $\mQ'(\vx, \vtheta, \tilde{\mQ})=\mQ(\vx, \vtheta) + \tilde{\mQ}$ with an $\alpha \Vert \tilde{\mQ} \Vert_2^2$ regularization penalty,  $\alpha > 0$, where $\tilde{\mQ}$ is an additional non-parametric learnable matrix s.t. $\tilde{\mQ}_{ij} = 0$ whenever $(\mQ(\vx, \vtheta))_{ij} \neq 0$, otherwise $\tilde{\mQ}_{ij}$ is a learnable parameter of our model. 
This allows some uncertainty on the form of our parametric models.
It also allows us to learn the parameters through an interpolation between parametric and non-parametric CTMC models.

The regularization term $\alpha \Vert \tilde{\mQ} \Vert_2^2$ is added to the negative log-likelihood loss in Equation~\eqref{eq:MLE} to ensure that we can control how much flexibility we want.
With small values of $\alpha$, we are testing how overparameterization, \ie, having too many extra parameters in $\mQ'$, affects learning and generalization.
Our experiments show that $\alpha \gg 1$ gives the best results, \ie, the correct parametric model works best. 
We also see that $\alpha \approx 1$ still gives competitive results (refer to \AppExpC), showing that some model flexibility is tolerable.
In contrast, we see that $\alpha = 0.1$ tends to significantly hurt our ability to extrapolate queue losses in the test data.

\begin{table*}
\vspace{-5pt}
\caption{\small [MAPE] Simulation results showing MAPE/100 ($\langle$Mean Absolute Error$\rangle$/$\langle$true value$\rangle$) errors between predicted steady state and ground-truth for failure states in test data (heavy load). Training data collected under light loads and restricted observed states (queues zero and one).
Mixing rates are determined by the spectral gaps $\delta_n$ observed in training data over multiple time windows ($n = 1,\ldots, 50)$. With 95\% confidence intervals.}
\label{tab:mape}
\vspace{-2pt}
\centering
\resizebox{1.\textwidth}{!}{
\begin{tabular}{@{\extracolsep{0pt}}lclll@{}}
& $\delta_n$ (spectral gap) & \multicolumn{1}{c}{DC-BPTT $t^\star = 16$} & \multicolumn{1}{c}{DC-BPTT $t^\star = 128$} & \multicolumn{1}{c}{$\infty$-SGD ($p=0.1$)} \\
\cmidrule(l){2-5}

Testbed Emulation (Upper Trig.)   & N/A            & $ 1.43 \times 10^{1} \pm 0.00 ~~~~~~~~~~~ $ & $ 1.88 \times 10^{1} \pm 0.00 ~~~~~~~~~~~ $ & $ \mathbf{ 9.33 \times 10^{-1} \pm 8.91 \times 10^{-2} } $ \\
M/M/1/$K$ (fast-mix)                   & [0.022, 0.043] & $ 2.04 \times 10^{-1} \pm 2.86 \times 10^{-4} $ & $ 1.32 ~~~~~~~~~~~ \pm 4.94 \times 10^{-3} $ & $ \mathbf{ 1.18 \times 10^{-2} \pm 8.01 \times 10^{-3} } $ \\M/M/1/$K$ (slow-mix)                 & [0.005, 0.008] & $ 8.91 \times 10^{3} \pm 2.02 \times 10^{2} $ & $ 6.68 \times 10^{1} \pm 7.61 ~~~~~~~~~~~ $ & $ \mathbf{ 8.88 \times 10^{-1} \pm 1.48 ~~~~~~~~~~~ } $ \\
M/M/$m$/$m+r$                          & [0.013, 0.024] & $ 4.11 \times 10^{-1} \pm 4.47 \times 10^{-2} $ & $ 4.01 \times 10^{-1} \pm 8.86 \times 10^{-2} $ & $ \mathbf{ 1.52 \times 10^{-1} \pm 8.50 \times 10^{-2} } $ \\
M/M/Multiple/$K$                       & [0.068, 0.096] & $ 9.09 \times 10^{-1} \pm 1.20 \times 10^{-2} $ & $ 4.03 \times 10^{1} \pm 6.74 \times 10^{-2} $ & $ \mathbf{ 2.27 \times 10^{-1} \pm 1.47 \times 10^{-2} } $ \\
\hline
\end{tabular}
}
\end{table*}

\begin{table*}
\vspace{-2pt}
\caption{\small [MSE] Simulation results showing MSE errors between predicted steady state and ground-truth for failure states in test data (heavy load). Training data collected under light loads and restricted observed states (queues zero and one).
Mixing rates are determined by the spectral gaps $\delta_n$ observed in training data over multiple time windows ($n = 1,\ldots, 50)$. With 95\% confidence intervals.}
\vspace{-5pt}
\label{tab:mse}
\centering
\resizebox{1.\textwidth}{!}{
\begin{tabular}{@{\extracolsep{0pt}}lclll@{}}
& $\delta_n$ (spectral gap) & \multicolumn{1}{c}{DC-BPTT $t^\star = 16$} & \multicolumn{1}{c}{DC-BPTT $t^\star = 128$} & \multicolumn{1}{c}{$\infty$-SGD ($p=0.1$)} \\
\cmidrule(l){2-5}
Testbed Emulation (Upper Trig.) & N/A            & $ 4.80 \times 10^{-1} \pm 0.00 ~~~~~~~~~~~ $ & $ 8.45 \times 10^{-1} \pm 0.00 ~~~~~~~~~~~ $ & $ \mathbf{ 2.41 \times 10^{-3} \pm 5.20 \times 10^{-4} } $ \\
M/M/1/$K$ (fast-mix)                 & [0.022, 0.043] & $ 6.81 \times 10^{-3} \pm 1.99 \times 10^{-5} $ & $ 2.45 \times 10^{-1} \pm 1.84 \times 10^{-3} $ & $ \mathbf{ 4.98 \times 10^{-5} \pm 5.37 \times 10^{-5} } $ \\
M/M/1/$K$ (slow-mix)                 & [0.005, 0.008] & $ 1.14 \times 10^{-2} \pm 1.54 \times 10^{-4} $ & $ \mathbf{ 4.84 \times 10^{-4} \pm 1.99 \times 10^{-4} } $ & $ \mathbf{ 9.36 \times 10^{-4} \pm 1.58 \times 10^{-3} } $ \\
M/M/$m$/$m+r$                        & [0.013, 0.024] & $ 4.25 \times 10^{-2} \pm 7.25 \times 10^{-3} $ & $ 2.87 \times 10^{-2} \pm 1.09 \times 10^{-2} $ & $ \mathbf{ 6.65 \times 10^{-3} \pm 5.61 \times 10^{-3} } $ \\
M/M/Multiple/$K$                     & [0.068, 0.096] & $ 8.97 \times 10^{-3} \pm 1.24 \times 10^{-4} $ & $ 6.84 \times 10^{-1} \pm 1.99 \times 10^{-3} $ & $ \mathbf{ 5.34 \times 10^{-4} \pm 8.23 \times 10^{-5} } $ \\
\hline
\end{tabular}
}
\end{table*}

\subsection{Testbed Experiments} \label{subsec:testbed_exp}
%
We now contrast DC-BPTT against $\infty$-SGD in a real-world testbed emulating a Voice-over-LTE (VoLTE) system in a wireless cellular network.
The testbed is configured as a single server with a waiting queue of size $K=20$.
The training data (86 time windows) is generated under light loads (with mean 7.7 and median 3 call losses) and the test data (137 time windows) under heavy loads (with mean 135.2 and median 254 call losses).
{\em Moreover, we also restrict the observations in the training data, $\sS' \subset \sS$, to queue sizes one and two, estimated from the request processing delays collected at the clients.}
We define $\mQ(\vx; \vtheta)$ symbolically using Pytorch's autodiff function~\citep{pytorch}.
The \AppExpA contains the details of our experimental setup and methodology.
We study two parametric $\mQ(\vx,\vtheta)$ models:

\emph{(A) M/M/1/$K$ model:} 
We start with arguably the most fundamental CTMC parametric model of a queueing system, the M/M/1/$K$ queue with a single server and a single queue of size $K=21$, which can hold up to $20$ requests waiting for service.
The queue can be described by a CTMC $\mQ(x_m,\theta^\star)$, where
service requests arrive according to a Poisson process with request rate $x_m$ assumed constant over time window $m$ ---the length of a time window is the minimum time resolution of our logs (one second). 
If a new request arrives when the queue is full, it is dropped by the server.
Service times are exponentially distributed with rate $\theta^\star$, assumed constant ---the service rate capacity of the system.
We assume $K$ is known.
The request call rate $x_m$ at time window $n$ is known while $\theta$ is the only parameter that needs to be learned.
The system is assumed in steady state even over short time windows.

\emph{(B) Upper Triangular model:}
Since it is difficult to fit a real system with an exact CTMC queueing model, we consider an embedded birth-death process $\mQ(x_m; \vtheta^\star)$, called \emph{Upper Triangular model}, which sets the upper triangular portion of $\mQ$ to $x_m$ in the upper diagonal and zeros everywhere else. 
This model (setting the upper triangular to zeros) indicates that only a client request can increase the queue size.
The lower triangular part of $\mQ$ is populated with different parameters that we will learn, which implies learning $\vert \vtheta^\star \vert=|\sS|(|\sS|-1)/2$ parameters for a CTMC with $|\sS|$ states.
%
%
The learnable parameters $\vtheta$ are initialized with zeros and learned with either DC-BPTT or $\infty$-SGD.
We also know the maximum queue size $K=21$ as in the M/M/1/$K$ scenario.
Since we want to predict overload, we add a new state to represent that case.
Any arrival after reaching the last state, when the current queue size is $K$, will force the system to transit to this overload state. Thus, we have $|\sS| = K + 2$.

\begin{figure*}
\begin{minipage}{.3\textwidth}
\centering
\includegraphics[width=1.8in,height=1.2in]{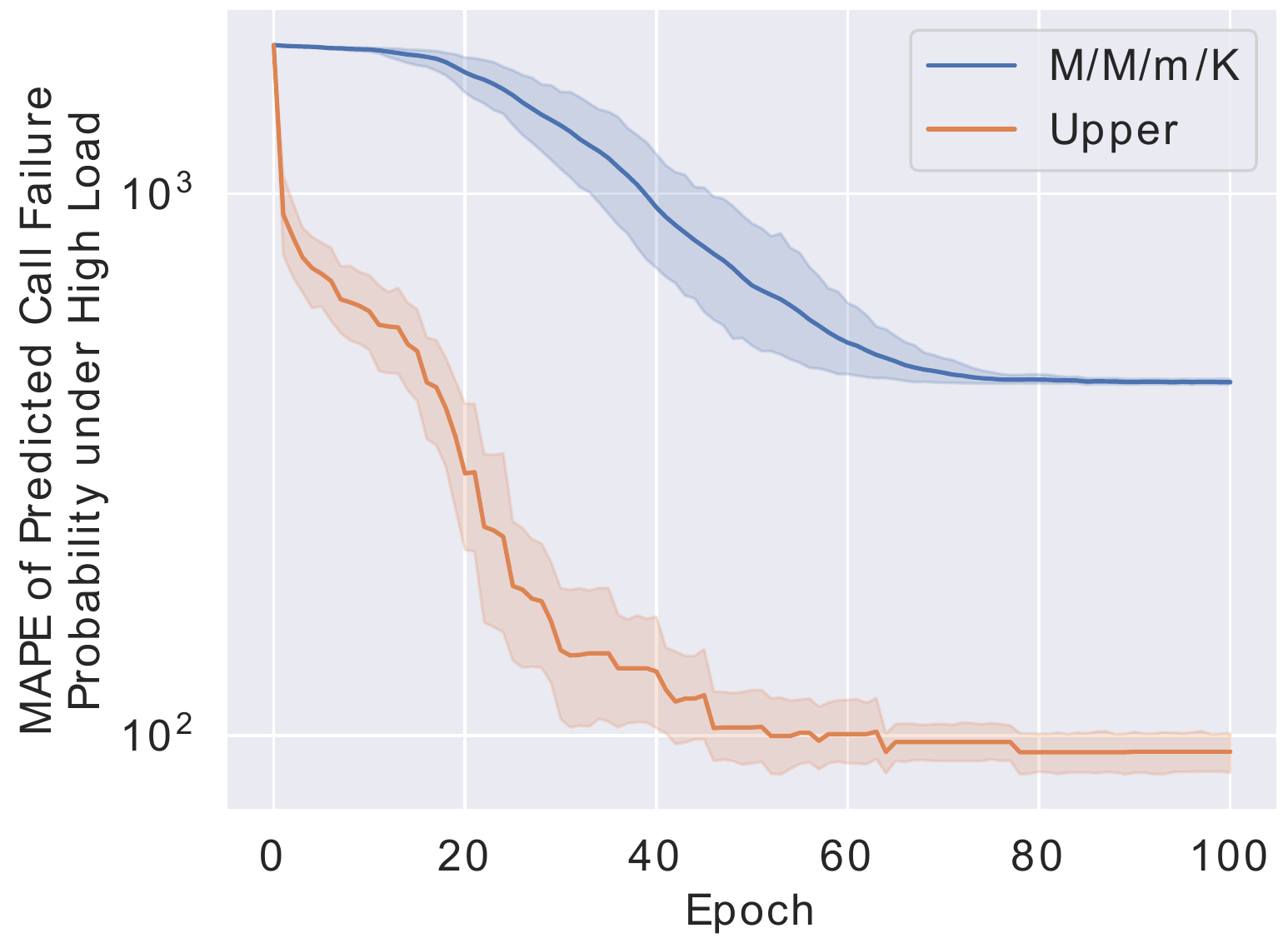}
\vspace{-4pt}
\subcaption{(Training curves) Effect of parametric models on test MAPE (learned with $\infty$-SGD).~\\}
\label{fig:rtt-alpha}
\end{minipage}
\hfill
\begin{minipage}{.3\textwidth}
\vspace{-7pt}
\centering
\includegraphics[width=1.8in,height=1.3in]{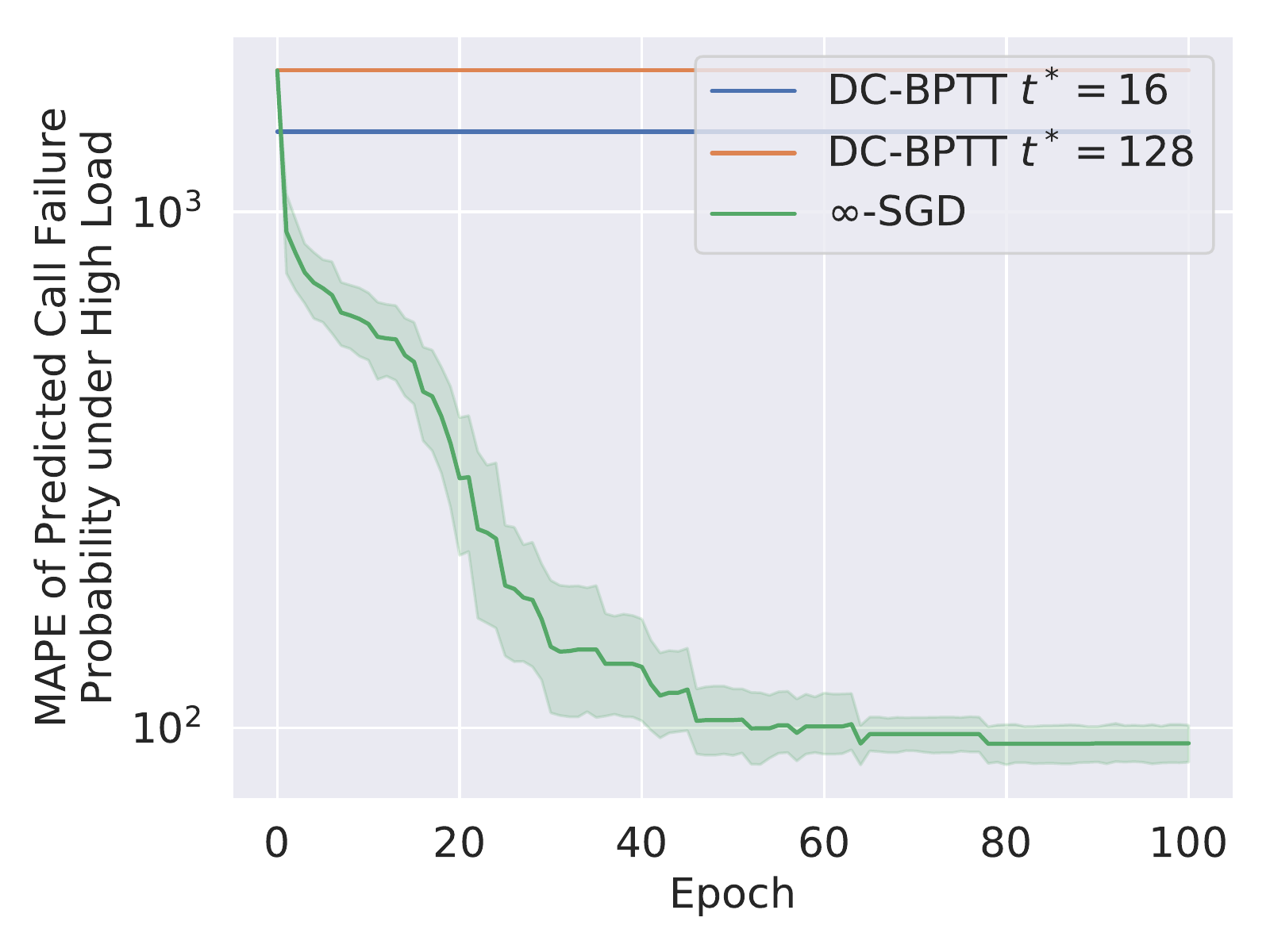}
\vspace{-6pt}
\subcaption{(Training curves) Effect of learning methods on test MAPE of upper triangular model.}
\label{fig:ablat-rtt-magic}
\end{minipage}
\hfill
\begin{minipage}{.33\textwidth}
\centering
\vspace{-10pt}
\includegraphics[width=1.9in,height=1.3in]{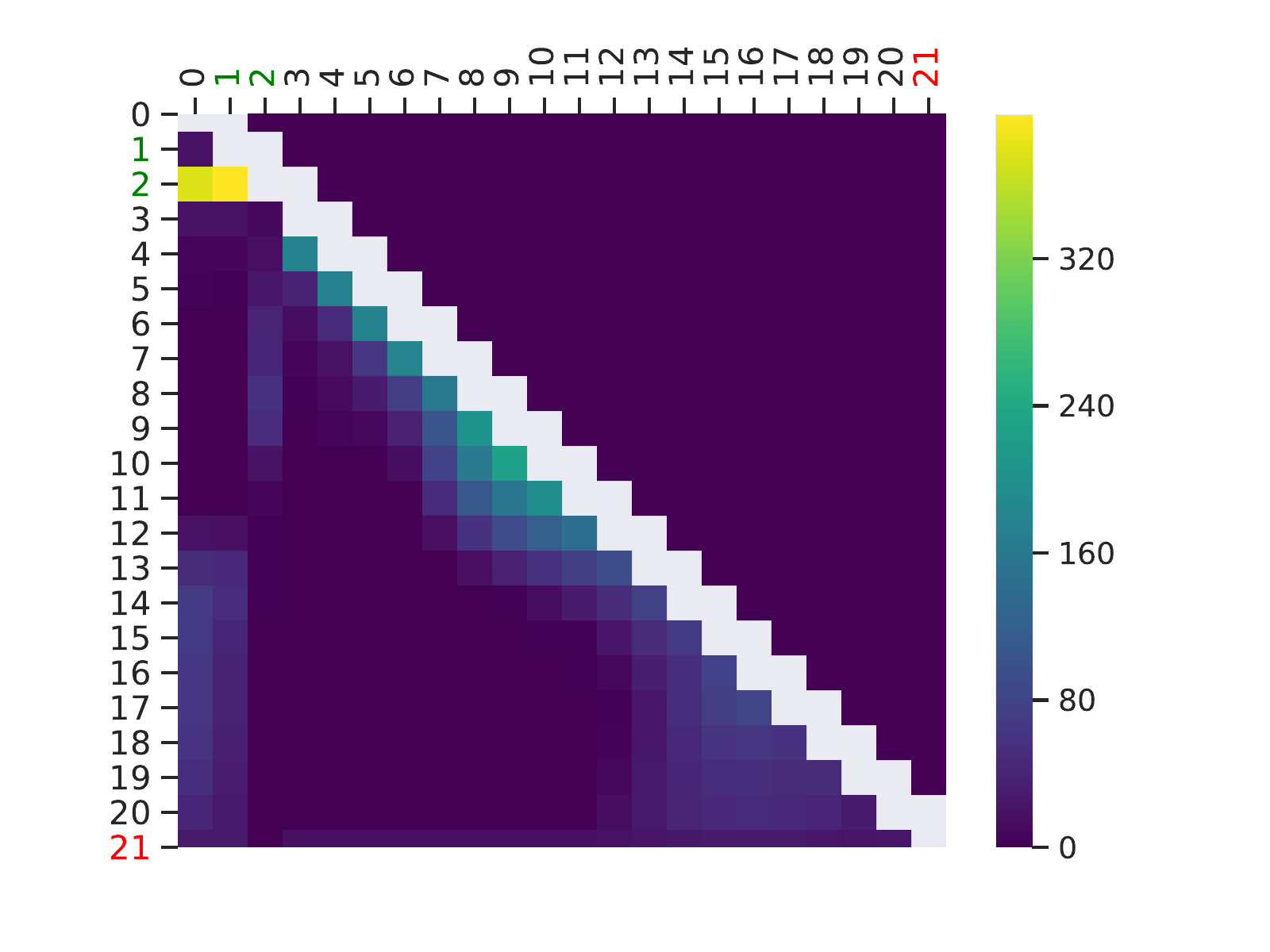}
\vspace{-12pt}
\subcaption{$\mQ(\vx,\vtheta^\star)$ learned by {\em upper triangular} model with $\infty$-SGD. Upper triangular ($\vx$) is removed or shows zeros, lower triangular shows $\vtheta^\star$.}
\label{fig:rtt-mx}
\end{minipage} 
\vspace{-9pt}
\caption{\small Real-world experiment results on VoLTE testbed. (a-b) Test error (MAPE) of unseeing failure state ---here, {\em ground-truth} of dropped call probability--- under heavy load (while training under light loads), as a function of training epochs. In these plots we verify the better generalization and stability of $\infty$-SGD. (a) Shows that more flexible {\em Upper Triangular} parametric model has much smaller (near-zero) test error than the more strict M/M/1/$K$ parametric model.
(b) Shows that $\infty$-SGD significantly outperforms DC-BPTT (which fails to learn). (c) Learned $\mQ(\vx,\vtheta^\star)$ by the upper triangular parametric model with $\infty$-SGD, showing an emergent block structure.}
\label{fig:exp_testbed_results}
\vspace{-10pt} 
\end{figure*}

\vspace{-8pt}
\paragraph{Results.}
\Figref{fig:rtt-alpha} shows the mean absolute percentage error (MAPE) between the predicted call drop probability (given the call request rate) and the true call drop probability in the test data under heavy loads, for the upper triangular and M/M/1/$K$ parametric model learned with $\infty$-SGD.
The {\em Upper Triangular} model achieves much lower test mean squared error (MSE) ($ 2.57 \times 10^{-3} \pm 5.75 \times 10^{-4} $) corresponding to MAPE of about 80\% to 100\% over test call drop probabilities in the range [0.0045, 0.1629], \ie, it more accurately extrapolates the training data (light loads, just observing queues of size one and two) to the test (heavy loads, full queue).
The M/M/1/K parametric model is too simple and performs poorly with test MAPE of 449\% (test MSE is reasonable at $7.78\times 10^{-2}$).

We also investigate the transition rate matrix learned by upper triangular model in \Figref{fig:rtt-mx}. We note that the learned queue is quite similar to an M/M/1/$K$, but the service rate is decreasing as the queue size is increasing. 
Surprisingly, this is a real phenomenon when real systems start to become overloaded~\citep{Jain88}. 
We also see some {\em reset} transitions, where the system goes from a full queue to a nearly empty queue.
Finally, \Figref{fig:ablat-rtt-magic} shows that only $\infty$-SGD can learn the $\mQ$ of the upper triangular model ---which has $|\sS|(|\sS|-1)/2=210$ parameters---, while DC-BPTT has vanishing gradients for both $t^\star \in \{16,128\}$ ---we note that $t^\star=16$ has a smaller loss than $t^\star=128$.
Finally, Tables~\ref{tab:mape} and~\ref{tab:mse} reaches the obvious conclusion that $\infty$-SGD learns significantly better models for extrapolation over the test data than DC-BPTT.

In what follows we explore the differences between DC-BPTT and $\infty$-SGD in synthetic experiments.

\subsection{Synthetic Experiments}

We now turn our attention to simulations.
Due to space limitations, we give a succinct description of the experiments, relegating details and additional results to \Appendix{}s C2, C3, C4, and C5.

{\em Birth-death queues:} We start with arguably the most fundamental parametric CTMC queueing system, the diagonal structure of the \emph{birth-death process}. The birth-death process approximates a number of queueing systems, such as the M/M/1/$K$ queue with a single queue of size $K$ and a single server, and the M/M/m/$K$ queue with $m$ servers that has been used to approximate cloud services~\citep{MMmm+r,MGmm+r}.
Since queue size $K$ must be larger than $m$ to support all servers, the M/M/m/$K$ queue is typically denoted as M/M/m/$m+r$, where $r \geq 0$.

\vspace{-9pt} 
\paragraph{Training and testing data.}
Structures used to simulate data are provided in \AppEgC.
We pre-define the service rates ($\theta^\star=25$ for M/M/1/K (slow-mix and fast-mix), $\theta^\star=5$ for M/M/5/5+r, $\vtheta^\star = (15,10,5)$ for M/M/Multiple/K) and queue sizes $K=20$ ($r=15$), then at each time window in the training data we sample a request rate uniformly in the interval $\vx \in [11,15]$ (light load), except for M/M/1/K slow-mix $\vx \in [21,30]$ (to decrease the spectral gap).
At test time, in the test data, we sample a request rate uniformly in the interval $\vx \in [31,60]$ (heavy load), except for M/M/1/K slow-mix $\vx \in [11,40]$ (to decrease the spectral gap).
%
We assume we {\em only} observe the queue size if it is empty or it has exactly one request, \ie, $\sS' = \{0,1\}$.
This emulates a common trend in logging critical infrastructure systems, where logging stops as soon as the server load is non-trivial~\citep{logperf}.
We also have extra results with different transition rates (in an easier task where $\infty$-SGD does even better) and more details on our training and test data generation in \Appendix{}s C2 and C5.

\vspace{-10pt} 
\paragraph{Results.}
Our goal is to predict the request loss probability against ground-truth under a range of both heavy and lighter loads, while training under a narrow range of light loads.
In the M/M/1/$K$ and M/M/m/m+r simulations, the training data consists of the aggregate frequencies observed for queue sizes zero and one during one second, along with the request rate. For M/M/Mutiple/$K$, aggregate frequencies for queue sizes zero to three are observed.

Tables~\ref{tab:mape} and~\ref{tab:mse} compare the extrapolation error of DC-BPTT and $\infty$-SGD in our synthetic experiment using MAPE and MSE errors, respectively.
Our approach, $\infty$-SGD, is consistently better than DC-BPTT over all simulations and on both error metrics (MAPE and MSE).
For a slow-mixing M/M/1/$K$, $\infty$-SGD extrapolation MAPE error is $1/100$-th of DC-BPTT MAPE error, considering the confidence interval.
In some of the scenarios, DC-BPTT finds gradient vanishing problems (failing to learn) giving very large errors (see training curves in \AppExpE), while $\infty$-SGD never fails to obtain gradients that can reduce the loss during the optimization.

MAPE result shows that $t^\star \leq 128$ is not enough to see the slow-mixing chain in steady state. Success in MSE for slow mixing while failing in MAPE shows that DC-BPTT has trouble learning parametric CTMCs well enough to predict out-of-sample (extrapolated) target states that have small probabilities. 
Moreover, DC-BPTT $t^\star=16$ tends to achieve lower errors (both MAPE and MSE) than DC-BPTT $t^\star=128$ in the M/M/1/$K$ fast mixing scenarios.

We now look at ground-truth $\theta^\star$ parameters and their estimates $\hat{\theta}$ from $\infty$-SGD. 
We note that the estimates are very close to the true values. In the M/M/1/$K$ model (true $\theta^\star=25$), the (slow mixing) scenario gives $\hat{\theta}=25.003$, and (fast mixing) gives $\hat{\theta}=25.083$. For M/M/m/m+r (with true $\theta^\star=5$) obtains $\hat{\theta}=5.15$. For M/M/Multiple/$K$, $\infty$-SGD obtains $\hat{\vtheta}=(13.5, 8.3, 5.4)$, close to the ground truth $\vtheta^\star=(15, 10, 5)$.
This conclusively shows $\infty$-SGD to be a reliable optimization method.

%
%
%
%
\vspace{-4pt} 
\section{Conclusions}
\vspace{-2pt} 
This work introduces $\infty$-SGD, the first theoretically principled optimization approach that can accurately learn general {\em parametric} Continuous Time Markov Chains (CTMCs) from aggregate steady-state observations. 
Our approach, $\infty$-SGD,  works even when the observations are over a restricted set of states.
We have shown that $\infty$-SGD finds significantly better maximum likelihood estimates than the baseline (DC-BPTT) in both a real testbed and synthetic scenarios.
Moreover, in the context of queueing systems, $\infty$-SGD consistently better extrapolates from training data in light loads to heavy loads in test data. We expect $\infty$-SGD to be a useful tool in other tasks where parametric models are needed and sequence data is only available as aggregate frequencies.

%
%
%
%
\section{Acknowledgement}
This work has been sponsored in part by the ARO, under the U.S. Army Research Laboratory contract number W911NF-09-2-0053, the Purdue Integrative Data Science Initiative, and the National Science Foundation grants CNS-1717493, OAC-1738981, and CCF-1918483.

{
\small
\bibliography{references.bib}
\bibliographystyle{aaai}
}

\onecolumn
\section*{\Appendix}
%
%
%
%
\subsection*{\AppEgA: Unidentifiability} \label{subsec:app:ega}

Let us focus on the following two transition matrices of different Markov chains:
$$
	\mP = \begin{bmatrix}
	0.7 & 0.3 & 0   & 0   \\
	0.4 & 0.4 & 0.2 & 0   \\
	0   & 0.3 & 0.6 & 0.1 \\
	0   & 0   & 0.2 & 0.8 \\
	\end{bmatrix}
	\quad
	\mP' = \begin{bmatrix}
	0.7 & 0.3 & 0   & 0   \\
	0.4 & 0.5 & 0.1 & 0   \\
	0   & 0.3 & 0.5 & 0.2 \\
	0   & 0   & 0.1 & 0.9 \\
	\end{bmatrix}
$$
Their steady state distribution can be achieved directly:
$$
\begin{aligned}
	\mP^{\infty} &= \begin{bmatrix}
	0.4 & 0.3 & 0.2 & 0.1 \\
	\end{bmatrix} \\
	{\mP'}^{\infty} &= \begin{bmatrix}
	0.4 & 0.3 & 0.1 & 0.2 \\
	\end{bmatrix} \\ 
\end{aligned}
$$
Suppose in our learning task, a perfect aggregate frequencies collection is done for state 1 and state 2.
In other words, we know the ground truth steady state distribution of state 1 and state 2, which are 0.4 and 0.3.
If we regard state 4 as the failure state we want to extrapolate, we can easily notice that it is unidentifiable: $A_{1}$ and $A_{2}$ can both match the aggregate frequencies, while they have totally different distribution on state 4.

%
%
%
%
\subsection*{\AppEgB: A Slow Mixing Event} \label{subsec:app:egb}

We give an example on slow mixing M/M/1/$K$. Suppose we have following $20 \times 20$ matrix
$$
	\mQ = \begin{bmatrix}
	-25 & 25 \\
	24 & -49 & 25 \\
	& 24 & -49 & 25 \\
	& & \ddots & \ddots & \ddots \\
	& & & 24 & -49 & 25 \\
	& & & & 24 & -24 \\
	\end{bmatrix}.
$$
We can then get a stochastic matrix $\mP$ by uniformizing $\mQ$ as Definition~\ref{def:unif}. We say a matrix is fast on mixing if all rows of $\mP^{t}$ are similar with small $t$, and vise versa.

We define
\begin{equation}
	\mix{\mP, \epsilon} = \min \left\{t \in \sN+; \max\limits_{i = 1, \cdots, |\sS|} \left({ \sum\limits_{j = 1}^{|\sS|} {\left\Vert \mP^{t}_{ij} - \frac{1}{|\sS|} \sum\limits_{k = 1}^{|\sS|}{\mP^{t}_{kj}} \right\Vert_2^2} }\right) \leq \epsilon \right\}
\end{equation}
as the criterion for mixing speed, where $\vert \sS \vert$ is the number of states in the CTMC. This represents the minimum exponent $t$ for $\mP^{t}$ to reaching a mixing status defined by $\epsilon$.
The larger value of $t$, the slower speed for $\mP$ to mix.
For instance, we will have $\mix{\mP, 10^{-5}} = 280$.
This means that for the defined $\mP$, we need $\mP^{280}$ to reach mixing with our $\epsilon$ tolerance (which we found unyielding for the autodiff library of Pytorch~\citep{pytorch}).
In contrast, there is no backpropagation in $\infty$-SGD, allowing it to easily deal with slow-mixing CTMCs.

If we replace in the lower diagonal line of $\mQ$ the value 24 by 1, and fix related diagonal line from -49 to -26, we can get a new transition matrix $\mP'$.
Then, $\mix{\mP', 10^{-5}} = 27$ as a fast mixing example, and now backpropagation over $P^{27}$ is supported by autodiff library.

%
%
%
%
\subsection*{\AppEgC: Parametric Models} \label{subsec:app:struct}

\paragraph{M/M/m/$K$ Model.}
We illustrate M/M/m/$K$ model (apply for M/M/1/$K$ and M/M/m/m+r) in \Figref{fig:struct-mmmk} for better description of details. In \Figref{fig:struct-mmmk} green states are observed states $\sS'$, and red state is the state we will test on.

In all simulation experiments, we set $K = 20$ for \Figref{fig:struct-mmmk}. For M/M/1/$K$ simulation, $m=1$; and for M/M/m/m+r simulation, $m=5$, $r=15$ and $K=m+r=20$. Thus, suppose we denote request rate as $\vx=(x)$ and service rate as $\vtheta=(\theta)$, we will have transition rate matrix
\begin{equation}
\mQ(\vx; \vtheta) = \begin{bmatrix}
-x & x \\
\theta & -(\theta + x) & x \\
& 2\theta & -(2\theta + x) & x \\
& & \ddots & \ddots & \ddots \\
& & & m\theta & -(m\theta + x) & x \\
& & & & m\theta & -(m\theta + x) & x \\
\melepad\  & \melepad\  & \melepad\  & \melepad\  & \melepad\  & \melepad\ddots &  \melepad\ddots &  \melepad\ddots \\
\end{bmatrix}
\end{equation}

\paragraph{Upper Triangular Model.}
Upper triangular model is nearly the same as M/M/m/$K$, except that all lower triangular part are learnable parameters $\vtheta=\{\theta_{ij}\}_{i,j \in \sS, i > j}$. We use matrix version $\vtheta$ for the ease notation, and it is easy to flatten it back into vector version $\vtheta$.
\begin{equation}
\mQ(\vx; \vtheta) = \begin{bmatrix}
-x & x \\
\theta_{21} & -(\theta_{21} + x) & x \\
\vdots & \vdots & \vdots & \ddots \\
\theta_{i1} & \cdots & \cdots & -(\sum\limits_{j=1}^{i - 1} \theta_{ij} + x) & x \\
\melepad\vdots & \melepad\vdots & \melepad\vdots & \melepad\vdots & \melepad\vdots & \melepad\ddots \\
\end{bmatrix}
\end{equation}

\paragraph{M/M/Multiple/$K$ Model.}
M/M/Multiple/$K$ model stands between M/M/m/$K$ and upper triangular structures. It augments M/M/1/$K$ structure by making lower 1 to $d$ diagonal lines learanble where we pick $d=3$ in our experiments. Thus, we will have $\vtheta = (\theta_1, \theta_2, \theta_3)$.
\begin{equation}
\mQ(\vx; \vtheta) = \begin{bmatrix}
-x & x \\
\theta_{1} & -(\theta_{1} + x) & x \\
\theta_{2} & \theta_{1} & -(\sum\limits_{j=1}^{2} \theta_{i} + x) & x \\
\theta_{3} & \theta_{2} & \theta_{1} & -(\sum\limits_{j=1}^{3} \theta_{i} + x) & x \\
0      & \theta_{3} & \theta_{2} & \theta_{1} & -(\sum\limits_{j=1}^{3} \theta_{i} + x) & x \\
\melepad\vdots & \melepad\vdots & \melepad\vdots & \melepad\vdots & \melepad\vdots & \melepad\vdots & \melepad\ddots \\
\end{bmatrix}
\end{equation}


\begin{figure}
\begin{minipage}{\textwidth}
\centering
\begin{tikzpicture}[->,>=stealth',shorten >=1pt,auto,node distance=2cm,
                    semithick]
  \tikzstyle{every state}=[fill=white,draw=black,text=black]
  \tikzstyle{train}=[circle, fill=white,draw=green,text=black]
  \tikzstyle{test}=[circle, fill=white,draw=red,text=black]
  \tikzstyle{dots}=[circle, fill=white,draw=white,text=black]

  \node[train] (0)                  {$0$};
  \node[train] (1)   [right of=0]   {$1$};
  \node[state] (2)   [right of=1]   {$2$};
  \node[dots]  (inc) [right of=2]   {$\cdots$};
  \node[state] (m)   [right of=inc] {$m$};
  \node[state] (m+1) [right of=m]   {$m + 1$};
  \node[dots]  (to)  [right of=m+1] {$\cdots$};
  \node[test]  (K)   [right of=to]  {$K$};

  \path (0)    edge[bend left] node {$x$}       (1)
        (1)    edge[bend left] node {$x$}       (2)
        (1)    edge[bend left] node {$\theta$}  (0)
        (2)    edge[bend left] node {$x$}       (inc)
        (2)    edge[bend left] node {$2\theta$} (1)
        (inc)  edge[bend left] node {$x$}       (m)
        (inc)  edge[bend left] node {$3\theta$} (2)
        (m)    edge[bend left] node {$x$}       (m+1)
        (m)    edge[bend left] node {$m\theta$} (inc)
        (m+1)  edge[bend left] node {$x$}       (to)
        (m+1)  edge[bend left] node {$m\theta$} (m)
        (to)   edge[bend left] node {$x$}       (K)
        (to)   edge[bend left] node {$m\theta$} (m+1)
        (K)    edge[bend left] node {$m\theta$} (to)
        ;
\end{tikzpicture}
\end{minipage}
\caption{M/M/m/$K$ system. Transition states are defined by $i$ where $i$ is the number of requests in the queue. States in green show observed states in the training data (0,1), and state in red shows state we need to extrapolate in the test data (K) [better visualized in color].}
\label{fig:struct-mmmk}
\end{figure}

%
%
%
%
\subsection*{\AppProA: Proof of Lemma~\ref{lem:deriv}} \label{subsec:app:proa}

We restate the lemma for completeness.
\begin{lemma*} 
Let $\mQ(\vx,\vtheta)$ be a $K$-state transition matrix and $\mP(\mQ(\vx,\vtheta))$ be its uniformized Markov Chain. $\mP(\mQ(\vx,\vtheta))^t$ is the Markov chain after $t$ steps where $t>0$, then the gradients of $\mP^t$ w.r.t.\ $\vtheta_k$ is
\begin{equation*}
\begin{aligned}
    &\!\! \nabla^{(t)}_{\vtheta_k} \Puni(\mQ(\vx, \vtheta)) \equiv \frac{\partial {\Puni(\mQ(\vx, \vtheta))^{t}}}{\partial \vtheta_k} \\
    &\!\!= \sum\limits_{l = 1}^{t}{{\Puni(\mQ(\vx,\vtheta))}^{t - l}\frac{\partial {\Puni(\mQ(\vx, \vtheta))}}{\partial \vtheta_k} {\Puni(\mQ(\vx,\vtheta))}^{l - 1}}, 
\end{aligned}
\end{equation*}
where 
$
\frac{\partial {\Puni(\mQ(\vx, \vtheta))}}{\partial \vtheta_k} = 
\sum_{ij} \frac{\partial {\Puni(\mQ)}}{\partial q_{ij}} \frac{\partial q_{ij}(\vx,\vtheta)}{\partial \vtheta_k}.
$
\end{lemma*}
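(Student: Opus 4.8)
The plan is to prove the matrix product rule for $\Puni^t$ by induction on $t$, treating $\Puni = \Puni(\mQ(\vx,\vtheta))$ as a matrix-valued function of the scalar parameter $\vtheta_k$ and then invoking the chain rule through the entries $q_{ij}$ for the final ``where'' clause. The base case $t=1$ is immediate: the sum on the right has a single term ($l=1$), namely $\Puni^{0}\,\partial_{\vtheta_k}\Puni\,\Puni^{0} = \partial_{\vtheta_k}\Puni$, which matches the left-hand side. For the inductive step I would write $\Puni^{t} = \Puni^{t-1}\Puni$ and apply the ordinary product rule for derivatives of a matrix product, $\partial_{\vtheta_k}(\Puni^{t-1}\Puni) = (\partial_{\vtheta_k}\Puni^{t-1})\,\Puni + \Puni^{t-1}\,(\partial_{\vtheta_k}\Puni)$; this is valid because $\Puni$ is differentiable in $\vtheta_k$ (it is affine in $\mQ$, which is differentiable in $\vtheta$ by assumption).

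Next I would substitute the induction hypothesis for $\partial_{\vtheta_k}\Puni^{t-1} = \sum_{l=1}^{t-1}\Puni^{t-1-l}\,(\partial_{\vtheta_k}\Puni)\,\Puni^{l-1}$ into the first term, obtaining $\sum_{l=1}^{t-1}\Puni^{t-1-l}\,(\partial_{\vtheta_k}\Puni)\,\Puni^{l-1}\Puni = \sum_{l=1}^{t-1}\Puni^{(t)-l}\,(\partial_{\vtheta_k}\Puni)\,\Puni^{(l+1)-1}$. Re-indexing this sum with $l' = l+1$ gives $\sum_{l'=2}^{t}\Puni^{t-l'}\,(\partial_{\vtheta_k}\Puni)\,\Puni^{l'-1}$. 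The second term $\Puni^{t-1}(\partial_{\vtheta_k}\Puni) = \Puni^{t-1}(\partial_{\vtheta_k}\Puni)\Puni^{0}$ is exactly the missing $l'=1$ term, so adding the two pieces yields $\sum_{l'=1}^{t}\Puni^{t-l'}\,(\partial_{\vtheta_k}\Puni)\,\Puni^{l'-1}$, which is the claimed formula for exponent $t$. This closes the induction.

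Finally, for the ``where'' identity, I would note that $\Puni(\mQ(\vx,\vtheta)) = \mI + \mQ(\vx,\vtheta)/\gamma(\vx,\vtheta)$ depends on $\vtheta_k$ only through the entries $q_{ij}(\vx,\vtheta)$ (and possibly $\gamma$, which for the purpose of this lemma is treated as fixed, or one absorbs its dependence into the chain-rule bookkeeping), so the multivariate chain rule gives $\partial_{\vtheta_k}\Puni = \sum_{ij}\frac{\partial \Puni}{\partial q_{ij}}\frac{\partial q_{ij}}{\partial \vtheta_k}$, as stated. There is essentially no obstacle here; the only thing to be careful about is the re-indexing of the telescoping sum in the inductive step (making sure the boundary terms $l=1$ and $l=t$ line up correctly) and noting that matrix multiplication is associative but not commutative, so the ordering $\Puni^{t-l}(\partial_{\vtheta_k}\Puni)\Puni^{l-1}$ must be preserved throughout. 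The main (mild) subtlety is just confirming that differentiation commutes with the finite matrix product, which is standard since all entries are smooth functions of $\vtheta_k$.
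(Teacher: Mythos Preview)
Your proof is correct and follows essentially the same route as the paper: both apply the product rule $\partial_{\vtheta_k}(\Puni^{t-1}\Puni) = (\partial_{\vtheta_k}\Puni^{t-1})\Puni + \Puni^{t-1}\partial_{\vtheta_k}\Puni$ and then recurse, the paper unrolling with ``$\cdots$'' where you package the same computation as a formal induction. One cosmetic slip to fix in the write-up: after multiplying by $\Puni$ on the right you should have $\Puni^{t-1-l}(\partial_{\vtheta_k}\Puni)\Puni^{l}$, not $\Puni^{t-l}(\partial_{\vtheta_k}\Puni)\Puni^{l}$; your re-indexed sum $\sum_{l'=2}^{t}\Puni^{t-l'}(\partial_{\vtheta_k}\Puni)\Puni^{l'-1}$ is nonetheless the correct one.
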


\begin{proof}
In defining matrix derivatives, Example 2.1 in \citet{Bhatia1994} uses the binomial expansion to define this derivative, which can be alternatively  obtained by applying the chain rule recursively:

\begin{equation*}
\begin{aligned}
	\frac{\partial \mP(\mQ(\vx, \vtheta))^{t}}{\partial \vtheta_k}
	&= \frac{\partial \mP(\mQ(\vx, \vtheta))^{t - 1}}{\partial \vtheta_k}\mP + \mP^{t - 1} \frac{\partial \mP(\mQ(\vx, \vtheta))}{\partial \vtheta_k} \\
    &= \left(\frac{\partial \mP(\mQ(\vx, \vtheta))^{t - 2}}{\partial \vtheta_k}\mP + \mP^{t - 2}\frac{\partial \mP(\mQ(\vx, \vtheta))}{\partial \vtheta_k}\right)\mP + \mP^{t - 1}\frac{\partial \mP(\mQ(\vx, \vtheta))}{\partial \vtheta_k} \\
    &= \left(\frac{\partial \mP(\mQ(\vx, \vtheta))^{t - 2}}{\partial \vtheta_k}\right)\mP^{2} + \mP^{t - 2}\frac{\partial \mP(\mQ(\vx, \vtheta))}{\partial \vtheta_k}\mP + \mP^{t - 1}\frac{\partial \mP(\mQ(\vx, \vtheta))}{\partial \vtheta_k} \\
    &= \cdots \\
    &= \sum\limits_{l = 1}^{t}{\mP^{t - l}\frac{\partial \mP(\mQ(\vx, \vtheta))}{\partial \vtheta_k}\mP^{l - 1}}. \\
\end{aligned}
\end{equation*}

\end{proof}

%
%
%
%
\subsection*{\AppProB: Proof of Proposition~\ref{prop:infgrad}} \label{subsec:app:prob}

We restate the proposition for the sake of completeness.
\begin{proposition*}
Let $\mQ$ be a K-state transition rate matrix of a stationary and ergodic MC. Equation~\eqref{eq:infsum} for $t \to \infty$, henceforth denoted $\nabla^{(\infty)}_\mQ \Puni(\mQ) \equiv  \lim_{t \rightarrow \infty} \nabla^{(t)}_\mQ \Puni(\mQ)$, \textbf{exists and is unique} and can be redefined as
\vspace{-5pt}
\begin{equation*} \vspace{-3pt}
\begin{aligned}
	(\nabla^{(\infty)}_\mQ \Puni(\mQ))_{ij} &\equiv  \lim_{t \rightarrow \infty}{\sum\limits_{l = 1}^{t}{\left( \Puni^{t - l}\frac{\partial \Puni(\mQ)}{\partial q_{ij}}\Puni^{l - 1} \right)}} \\
	&= \boldsymbol{\Pi} \sum_{l = 0}^{\infty} \frac{\partial \Puni(\mQ)}{ \partial q_{ij}} \Puni^{l},
\end{aligned}
\end{equation*}
where $\boldsymbol{\Pi}$ is a matrix whose rows are the steady state distribution $\bpi$.
Note that the diagonal $i=j$ is trivial to compute but should be treated as a special case.
\end{proposition*}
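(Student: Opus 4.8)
The plan is to leverage two structural facts about the uniformized chain $\Puni\equiv\Puni(\mQ)$ of Definition~\ref{def:unif}. First, $\Puni$ is a \emph{primitive} stochastic matrix: it is row-stochastic with strictly positive diagonal (since $\gamma>\max(-\mathrm{diag}(\mQ))$ forces $\Puni_{ii}=1+q_{ii}/\gamma>0$), and it inherits irreducibility from the ergodic $\mQ$, hence it is irreducible and aperiodic. Its powers therefore converge geometrically to the rank-one matrix $\boldsymbol{\Pi}=\vone\bpi^{\mathsf{T}}$: by Perron--Frobenius there are $C<\infty$ and $\rho\in[0,1)$ (absorbing any polynomial factors into a slightly larger $\rho<1$) with $\Vert\Puni^{t}-\boldsymbol{\Pi}\Vert\le C\rho^{t}$ for all $t\ge0$, where $\bpi$ is the unique stationary distribution, which by Equation~\eqref{eq:pi} equals the CTMC's steady state. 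I will write $\Puni^{t}=\boldsymbol{\Pi}+\mE_{t}$ with $\Vert\mE_{t}\Vert\le C\rho^{t}$, and use $\Puni\boldsymbol{\Pi}=\boldsymbol{\Pi}\Puni=\boldsymbol{\Pi}^{2}=\boldsymbol{\Pi}$.

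The second fact is that the perturbation matrix $\mA:=\partial\Puni(\mQ)/\partial q_{ij}$ (for fixed $i\neq j$) annihilates $\boldsymbol{\Pi}$ on the right. Indeed, $\Puni\vone=\vone$ holds identically in the entries of $\mQ$, so differentiating yields $\mA\vone=\bm{0}$, whence $\mA\boldsymbol{\Pi}=\mA\vone\bpi^{\mathsf{T}}=\bm{0}$. Consequently $\mA\Puni^{l}=\mA(\boldsymbol{\Pi}+\mE_{l})=\mA\mE_{l}$ for every $l\ge0$, so $\Vert\mA\Puni^{l}\Vert\le\Vert\mA\Vert\,C\rho^{l}$ and the series $\mS:=\sum_{l=0}^{\infty}\mA\Puni^{l}$ converges absolutely. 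In particular the right-hand side $\boldsymbol{\Pi}\mS=\boldsymbol{\Pi}\sum_{l=0}^{\infty}\bigl(\partial\Puni(\mQ)/\partial q_{ij}\bigr)\Puni^{l}$ of the proposition is a well-defined matrix.

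The main step is then a direct manipulation of the partial sum in Equation~\eqref{eq:infsum}. Substituting $\Puni^{t-l}=\boldsymbol{\Pi}+\mE_{t-l}$,
\[
\sum_{l=1}^{t}\Puni^{t-l}\,\mA\,\Puni^{l-1}
=\boldsymbol{\Pi}\sum_{l=0}^{t-1}\mA\,\Puni^{l}
\;+\;\sum_{l=1}^{t}\mE_{t-l}\,\mA\,\Puni^{l-1}.
\]
As $t\to\infty$ the first term tends to $\boldsymbol{\Pi}\mS$ by the previous paragraph. For the second term, using $\mA\,\Puni^{l-1}=\mA\mE_{l-1}$ and the geometric bounds gives $\Vert\mE_{t-l}\,\mA\,\Puni^{l-1}\Vert\le C\rho^{t-l}\cdot\Vert\mA\Vert\,C\rho^{l-1}=\Vert\mA\Vert\,C^{2}\rho^{t-1}$ uniformly in $l$, so the whole sum is bounded by $t\,\Vert\mA\Vert\,C^{2}\rho^{t-1}\to0$. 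Hence the limit defining $\nabla^{(\infty)}_{\mQ}\Puni(\mQ)$ exists, is unique (being a genuine limit), and equals $\boldsymbol{\Pi}\mS$, which is exactly the claimed closed form. The diagonal case $i=j$ does not arise as an independent parameter (the diagonal of $\mQ$ is pinned by the zero-row-sum constraint), so it needs no separate argument.

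The one genuinely delicate point — and the reason the statement is not obvious — is that $\sum_{l}\Puni^{l-1}$ diverges, so one cannot simply interchange $\lim_{t\to\infty}$ with the summation. The resolution is precisely the identity $\mA\boldsymbol{\Pi}=\bm{0}$: it forces $\mA\Puni^{l}$ to decay geometrically, so after splitting the $\boldsymbol{\Pi}$-part off each $\Puni^{t-l}$ both resulting pieces become absolutely summable, and the telescoping of exponents $\rho^{t-l}\rho^{l-1}=\rho^{t-1}$ yields the clean $t\rho^{t-1}\to0$ estimate for the remainder. Everything else is routine bookkeeping with the geometric-ergodicity bound.
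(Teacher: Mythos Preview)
Your proof is correct and shares the paper's key insight, namely that $\mA\vone=\bm{0}$ (equivalently $\mA\boldsymbol{\Pi}=\bm{0}$), which is what forces the otherwise divergent series to converge. The decomposition, however, is different. The paper reindexes the sum by pairing $l\leftrightarrow 2t'+1-l$ over $t=2t'$ terms, splitting into two half-sums in which the ``long'' power $\Puni^{2t'-l}$ is replaced by $\boldsymbol{\Pi}$; one half vanishes by $\mA\boldsymbol{\Pi}=\bm{0}$ and the other gives the claimed series. Your route instead writes $\Puni^{t-l}=\boldsymbol{\Pi}+\mE_{t-l}$ with a quantitative Perron--Frobenius bound $\Vert\mE_t\Vert\le C\rho^t$, yielding a main term $\boldsymbol{\Pi}\sum_{l<t}\mA\Puni^{l}$ and a remainder bounded by $t\Vert\mA\Vert C^2\rho^{t-1}\to 0$. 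Your version is arguably tighter: the paper's passage from $\lim_{t'\to\infty}\sum_{l=1}^{t'}\Puni^{2t'-l}\mA\Puni^{l-1}$ to $\sum_{l=1}^{\infty}\boldsymbol{\Pi}\mA\Puni^{l-1}$ is a termwise limit over a growing number of terms, which strictly speaking needs exactly the kind of uniform estimate you supply. Conversely, the paper's explicit computation of $\partial\Puni/\partial q_{ij}$ (case-splitting on whether $q_{ij}$ enters $\gamma$) is more concrete than your one-line differentiation of $\Puni\vone=\vone$, though the latter is sufficient and cleaner for the purpose at hand.
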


\begin{proof}
Let 
\begin{equation*}
    \mQ =\left[ \begin{array}{cccc}
        -q_{11} & q_{12} & \cdots & q_{1|\sS|} \\
        \vdots & \vdots & \cdots & \vdots \\
        q_{|\sS|1} & q_{|\sS|2} & \cdots & -q_{|\sS||\sS|}
    \end{array}
    \right]
\end{equation*}
be the rate transition matrix of the CTMC.
Define $\mP(\mQ)$ as in Definition~\ref{def:unif}.
Then,
\begin{align*}
    \left( \frac{\partial \mP(\mQ)}{\partial q_{ij}} \right)_{kh}
    &= \frac{1}{\gamma}\frac{\partial q_{kh}}{\partial q_{ij}} - \frac{1}{\gamma^2}q_{kh}\frac{\partial \gamma}{\partial q_{ij}}, \quad i,j,k,h \in \sS, 
\end{align*}

where $\gamma = \max_{k \in \sS} (q_{kk}) + \epsilon$, $\epsilon > 0$, and $q_{kk} = \sum_{i \in \sS} q_{ki}$, $k \in \sS$.
Note that if $q_{ij}$ is not one of the rates in $\gamma$, then
\[
	\frac{\partial \mP(\mQ)}{\partial q_{ij}} = \left[ \begin{array}{ccccc}
        {\bf 0} & {\bf 0} & \cdots &  {\bf 0} & {\bf 0} \\
        \cdots & -1/\gamma & \cdots & 1/\gamma & \cdots \\
        {\bf 0} & {\bf 0} & \cdots &  {\bf 0} & {\bf 0}
    \end{array}
    \right],
\]
where $\frac{\partial \mP(\mQ)}{\partial q_{ij}}$ is a matrix of zeroes except at elements $\left(\frac{\partial \mP(\mQ)}{\partial q_{ij}} \right)_{ii} = - 1/\gamma$ and $\left(\frac{\partial \mP(\mQ)}{\partial q_{ij}} \right)_{ij} = 1/\gamma$.

If $q_{ij}$ is one of the rates in $\gamma$, then
\[
	\frac{\partial \mP(\mQ)}{\partial q_{ij}} = \frac{1}{\gamma^2} \left[ \begin{array}{cccccccc}
        q_{11}   & -q_{12}  & \cdots & -q_{1i}  & \cdots & -q_{1j}    & \cdots & -q_{1|\sS|}  \\
        \vdots & \vdots & \cdots & \vdots & & \vdots & \cdots &\vdots \\
        - q_{i1}  & -q_{i2}  & \cdots & \gamma + q_{ii}  & \cdots & -\gamma - q_{ij}   & \cdots & - q_{i|\sS|}  \\
        \vdots & \vdots & \cdots & \vdots & & \vdots & \cdots & \vdots \\
        -q_{|\sS|1}    & -q_{|\sS|2}  & \cdots & -q_{|\sS|i}   & \cdots & -q_{|\sS|j}    & \cdots & q_{|\sS||\sS|}  
    \end{array}
    \right],
\]
noting that $\frac{\partial \mP(\mQ)}{\partial q_{ij}} {\bf 1} = {\bf 0}$ regardless of whether $q_{ij}$ is in $\gamma$ or not.

To simplify the notation, denote $\mP \equiv \mP(\mQ)$. Now note that 
\begin{equation*}
\begin{aligned}
	\lim\limits_{t \rightarrow \infty}{\sum\limits_{l = 1}^{t}{\left( \mP^{t - l}\frac{\partial \mP}{\partial q_{ij}}\mP^{l - 1} \right)}}
	&= \lim\limits_{t' \rightarrow \infty}{\sum\limits_{l = 1}^{2t'}{\left( \mP^{2t' - l}\frac{\partial \mP}{\partial q_{ij}}\mP^{l - 1} \right)}} \\
	&= \lim\limits_{t' \rightarrow \infty}{\sum\limits_{l = 1}^{t'}{\left( \mP^{2t' - l}\frac{\partial \mP}{\partial q_{ij}}\mP^{l - 1} + \mP^{2t' - (2t' + 1 - l)}\frac{\partial \mP}{\partial q_{ij}}\mP^{(2t' + 1 - l) - 1} \right)}} \\
	&= \lim\limits_{t' \rightarrow \infty}{\sum\limits_{l = 1}^{t'}{\left( \mP^{2t' - l}\frac{\partial \mP}{\partial q_{ij}}\mP^{l - 1} + \mP^{l - 1}\frac{\partial \mP}{\partial q_{ij}}\mP^{2t' - l} \right)}} \\
	&= \lim\limits_{t' \rightarrow \infty}{\sum\limits_{l_1 = 1}^{t'}{\left( \mP^{2t' - l_1}\frac{\partial \mP}{\partial q_{ij}}\mP^{l_1 - 1} \right)}} + \lim\limits_{t' \rightarrow \infty}{\sum\limits_{l_2 = 1}^{t'}{\left( \mP^{l_2 - 1}\frac{\partial \mP}{\partial q_{ij}}\mP^{2t' - l_2} \right)}} \\
	&= \sum\limits_{l_1 = 1}^{\infty}{\left( \boldsymbol{\Pi}\frac{\partial \mP}{\partial q_{ij}}\mP^{l_1 - 1} \right)} + \sum\limits_{l_2 = 1}^{\infty}{\left( \mP^{l_2 - 1}\frac{\partial \mP}{\partial q_{ij}}\boldsymbol{\Pi} \right)} ,
\end{aligned}
\end{equation*}

where $\boldsymbol{\Pi} \equiv \lim_{t' \to \infty} \mP^{t'}$.
The r.h.s.\ of Equation~\eqref{eq:infinitySplit} is zero because $\frac{\partial \mP(\mQ)}{\partial q_{ij}} {\bf 1} = {\bf 0}$, where ${\bf 1}$ is a column vector of ones, and 
\begin{equation} \label{eq:matrix-pi}
	\boldsymbol{\Pi} \equiv \lim_{t' \to \infty} \mP^{t'} = \left[ \begin{array}{c} \bpi^\mathsf{T} \\ \vdots \\ \bpi^\mathsf{T} \end{array} \right].
\end{equation}
Noting that
\begin{equation} \label{eq:converge-key}
	\lim_{l \to \infty} \frac{\partial \mP}{\partial q_{ij}}\mP^{l - 1} = \frac{\partial \mP}{\partial q_{ij}} \boldsymbol{\Pi} = \frac{\partial \mP(\mQ)}{\partial q_{ij}} {\bf 1} \bpi^\mathsf{T} = 0.
\end{equation} 
Thus,
\begin{equation*}
\begin{aligned}
	\lim\limits_{t \rightarrow \infty}{\sum\limits_{l = 1}^{t}{\left( \mP^{t - l}\frac{\partial \mP}{\partial q_{ij}}\mP^{l - 1} \right)}}
	&= \sum\limits_{l_1 = 1}^{\infty}{\left( \boldsymbol{\Pi}\frac{\partial \mP}{\partial q_{ij}}\mP^{l_1 - 1} \right)} + \sum\limits_{l_2 = 1}^{\infty}{\left( \mP^{l_2 - 1}\frac{\partial \mP}{\partial q_{ij}}\boldsymbol{\Pi} \right)} \\
	&= \sum\limits_{l_1 = 1}^{\infty}{\left( \boldsymbol{\Pi}\frac{\partial \mP}{\partial q_{ij}}\mP^{l_1 - 1} \right)} \\
	&= \boldsymbol{\Pi}\sum\limits_{l_1 = 1}^{\infty}{\left( \frac{\partial \mP}{\partial q_{ij}}\mP^{l_1 - 1} \right)}
\end{aligned}
\end{equation*}
will converge by two reasons: summation controlled by $l_2$ equals 0 according to \Eqref{eq:converge-key}; and term inside summation controlled by $l_1$ converges to 0 as $l_1 \to 0$ according to \Eqref{eq:converge-key}.

\end{proof}

%
%
%
%
\subsection*{\AppProC: Proof of Theorem~\ref{thm:infSGD}} \label{subsec:app:proc}

We restate the theorem for the sake of completeness.
\begin{theorem*}[Infinity Stochastic Gradient Descent ($\infty$-SGD)]
Let $\mQ$ be the transition rate matrix of a stationary and ergodic CTMC. Assume strictly positive values for the learnable parameters $\vtheta^{(h)}$ at the $h$-th step of the optimization. Let $\mP(\mQ(\vx, \vtheta^{(h)}))$ be its uniformized transition probability matrix per Definition~\ref{def:unif}.
Let $\Loss(\vy,\bpi)$ be as in Equation~\eqref{eq:loss}.
%
Reparameterize $\tilde{\Loss}(\vy, \vx, \vtheta^{(h)}) = \Loss(\vy,\bpi(\vx,\vtheta^{(h)}))$ as the loss function with respect to $\vtheta^{(h)}$.
Let $X^{(h)} \sim \text{Geometric}(p^{(h)})$, $X^{(h)} \in \sZ^+$, be an independent sample of  a Geometric distribution with $p^{(h)} < \delta^{(h)}$, where $\delta^{(h)}$ is the spectral gap of $\mP(\mQ(\vx, \vtheta^{(h)}))$.
Then, for $0 < \epsilon \ll 1$ and for all learnable parameters $\vtheta$,
\begin{align*}
	\vtheta^{(h+1)}_{k} &= \max\left(\vtheta^{(h)}_k - \eta^{(h)} \nabla_{\vtheta_k} \left. \tilde{\Loss}(\vy,\vx,\vtheta) \right\vert_{\vtheta=\vtheta^{(h)}}, \epsilon\right), 
\end{align*}
where
	%
\begin{align*} 
	\nabla_{\vtheta_k} \tilde{\Loss}(\vy,\vx,\vtheta) &= \sum\limits_{ij} \sum\limits_{mn} (\rvp^\text{(events)}(0))_m \left. \frac{\partial \Loss(\vy,\bpi)}{\partial \bpi_{n}} \right\vert_{\bpi=\bpi(\vx,\vtheta)}\\
	&\qquad \times \bpi(\vx,\vtheta)_{n} \Gamma_{ijmn}(\vx, \vtheta) \frac{\partial \mQ(\vx,\vtheta)_{ij}}{\partial \vtheta_k}
\end{align*}
with $h = 0,1,\ldots,$ where $\bpi(\vx,\vtheta)$ is the steady state distribution defined in Equation~\eqref{eq:pi}, $\eta^{(h)}$ is the learning rate with $\sum_{h=0}^\infty \eta^{(h)} = \infty$, $\sum_{h=0}^\infty \left(\eta^{(h)}\right)^2 < \infty$, and
\begin{equation*}
   \!\! \Gamma_{ijmn}(\vx,\vtheta)\! =\! \sum_{t = 0}^{X^{(h)}}\! \left[ \frac{\partial \mP(\mQ(\vx, \vtheta))}{\partial q_{ij}}{\frac{\mP(\mQ(\vx, \vtheta))^t}{\Prob[X^{(h)} > t]}} \right]_{mn}\!\!,
\end{equation*}
is a stochastic gradient descent method that minimizes Equation~\eqref{eq:MLE}.
\end{theorem*}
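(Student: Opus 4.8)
The plan is to establish that the displayed recursion is \emph{projected stochastic gradient descent} for the reparameterized loss $\tilde{\Loss}$: concretely, that the random quantity written as $\nabla_{\vtheta_k}\tilde{\Loss}$ (built from $\Gamma_{ijmn}$) is an unbiased and almost surely bounded estimator of the true gradient $\partial\tilde{\Loss}/\partial\vtheta_k$, and then to conclude convergence to a stationary point of \Eqref{eq:MLE} by standard stochastic approximation.

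First I would unfold the gradient by the chain rule. From $\tilde{\Loss}(\vy,\vx,\vtheta)=\Loss(\vy,\bpi(\vx,\vtheta))$ we get $\partial\tilde{\Loss}/\partial\vtheta_k=\sum_n (\partial\Loss/\partial\bpi_n)(\partial\bpi_n/\partial\vtheta_k)$. Using $\bpi(\vx,\vtheta)^\mathsf{T}=\lim_{t\to\infty}\rvp^\text{(events)}(0)^\mathsf{T}\Puni(\mQ(\vx,\vtheta))^t$, Lemma~\ref{lem:deriv} to write $\partial\Puni(\mQ)^t/\partial\vtheta_k=\sum_{ij}\big(\partial q_{ij}/\partial\vtheta_k\big)\sum_{l=1}^{t}\Puni^{t-l}(\partial\Puni/\partial q_{ij})\Puni^{l-1}$, and then letting $t\to\infty$ and invoking Proposition~\ref{prop:infgrad}, the double limit collapses to $\boldsymbol{\Pi}\sum_{l=0}^{\infty}(\partial\Puni(\mQ)/\partial q_{ij})\Puni^l$, where $\boldsymbol{\Pi}$ has every row equal to $\bpi$. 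Rewriting $\boldsymbol{\Pi}$ in terms of $\bpi$ and collecting terms gives the closed-form gradient in the theorem up to the still-open infinite sum $\sum_{l=0}^{\infty}(\partial\Puni/\partial q_{ij})\Puni^l$, which Proposition~\ref{prop:infgrad} guarantees is a weakly, in fact absolutely, convergent series.

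Second --- the crux --- I would prove that $\Gamma_{ijmn}$ is an unbiased, a.s.\ bounded estimator of $\big[\sum_{l=0}^{\infty}(\partial\Puni/\partial q_{ij})\Puni^l\big]_{mn}$. Since the rows of $\mQ$ sum to zero, $(\partial\Puni/\partial q_{ij})\boldsymbol{\Pi}=(\partial\Puni/\partial q_{ij})\,\mathbf{1}\bpi^\mathsf{T}=0$, so $(\partial\Puni/\partial q_{ij})\Puni^t=(\partial\Puni/\partial q_{ij})(\Puni^t-\boldsymbol{\Pi})$, and ergodicity gives $\Vert\Puni^t-\boldsymbol{\Pi}\Vert\le C(1-\delta^{(h)})^t$ with $\delta^{(h)}$ the spectral gap of $\Puni(\mQ(\vx,\vtheta^{(h)}))$; hence the $t$-th summand is $O((1-\delta^{(h)})^t)$. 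For $X^{(h)}\sim\text{Geometric}(p^{(h)})$ we have $\Prob[X^{(h)}>t]=(1-p^{(h)})^t$, so the randomly-stopped (Russian-roulette) series identity $\expected\big[\sum_{t=0}^{X}a_t/\Prob[X>t]\big]=\sum_{t=0}^{\infty}a_t$ \citep{mcleish2011general} yields $\expected[\Gamma_{ijmn}]=\big[\sum_{l=0}^{\infty}(\partial\Puni/\partial q_{ij})\Puni^l\big]_{mn}$; the interchange of expectation and infinite sum is justified because the rescaled summands are bounded by $C'\big((1-\delta^{(h)})/(1-p^{(h)})\big)^t$, a summable geometric sequence exactly because the hypothesis $p^{(h)}<\delta^{(h)}$ forces the ratio below $1$. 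The same bound gives $|\Gamma_{ijmn}|\le C'/\big(1-(1-\delta^{(h)})/(1-p^{(h)})\big)$ almost surely, so the estimator is bounded and hence has finite variance. Substituting it back into the chain-rule expansion produces exactly the stated $\nabla_{\vtheta_k}\tilde{\Loss}$ and shows $\expected[\nabla_{\vtheta_k}\tilde{\Loss}\mid\vtheta^{(h)}]=\partial\tilde{\Loss}/\partial\vtheta_k|_{\vtheta=\vtheta^{(h)}}$.

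Finally I would assemble the convergence argument. The projection $\vtheta^{(h+1)}_k=\max(\cdot,\epsilon)$ keeps every coordinate $\ge\epsilon>0$, so $\mQ(\vx,\vtheta^{(h)})$ remains the generator of a stationary ergodic CTMC and $\Puni(\mQ(\vx,\vtheta^{(h)}))$ a primitive stochastic matrix throughout; on this feasible region $\bpi(\vx,\vtheta)$, $\delta^{(h)}$, $\partial\Loss/\partial\bpi$ and $\partial\mQ_{ij}/\partial\vtheta_k$ vary continuously, so the true gradient is Lipschitz and the estimator variance is uniformly bounded. Together with $\sum_h\eta^{(h)}=\infty$, $\sum_h(\eta^{(h)})^2<\infty$ and $\tilde{\Loss}$ bounded below, the Robbins--Monro / projected-SGD convergence theorem gives convergence to a stationary point of \Eqref{eq:MLE}, establishing the claim. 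The main obstacle is the second step: making the exchange of the infinite sum with the expectation rigorous and identifying the geometric decay rate of $(\partial\Puni/\partial q_{ij})\Puni^t$ with the spectral gap --- this is exactly what dictates the condition $p^{(h)}<\delta^{(h)}$ and what the randomly-stopped estimator is designed to circumvent. A secondary subtlety is ensuring the feasible iterates stay in a region where the spectral gap is bounded away from zero, so that the variance bound is uniform in $h$.
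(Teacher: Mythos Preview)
Your proposal is correct and follows the same three-stage architecture as the paper: (i) unfold $\partial\tilde{\Loss}/\partial\vtheta_k$ via the chain rule and Proposition~\ref{prop:infgrad} to reach the infinite series $\boldsymbol{\Pi}\sum_{l\ge 0}(\partial\Puni/\partial q_{ij})\Puni^l$; (ii) show that the randomly-stopped sum $\Gamma_{ijmn}$ is an unbiased, finite-variance estimator of that series under $p^{(h)}<\delta^{(h)}$; (iii) invoke Robbins--Monro to conclude. The one place your argument genuinely differs is step~(ii). The paper establishes finite variance by verifying the tail condition of \citet{rhee2015unbiased}, namely $\sum_{x\ge 1}\Vert\sum_{j>x}(\partial\Puni/\partial q_{ij})\Puni^j\Vert_2^2/\Prob[X\ge x]<\infty$, and does so through an explicit spectral decomposition of $\Puni$ followed by Cauchy--Schwarz and a Cauchy-product manipulation. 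You instead bound each rescaled summand by $C'((1-\delta^{(h)})/(1-p^{(h)}))^t$ and sum the geometric series, which immediately gives an almost-sure bound on $\Gamma_{ijmn}$ and hence finite variance in one line. Your route is shorter and more elementary; the paper's route, by going through the Rhee--Glynn machinery, makes the connection to the general theory of randomly-stopped Monte Carlo estimators more explicit. Both arguments rest on the same two facts---$(\partial\Puni/\partial q_{ij})\mathbf{1}=0$ and the $(1-\delta^{(h)})^t$ mixing rate---and both require diagonalizability of $\Puni$ for the spectral-gap bound to hold with that constant, an assumption the paper also makes without comment.
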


\begin{proof}
We first derive the partial gradients $\nabla_{\vtheta_k} \tilde{\Loss}(\vy,\vx,\vtheta)$ from bottom. Similar to \Eqref{eq:matrix-pi}, we define $\boldsymbol{\Pi} = \Puni(\mQ(\vx,\vtheta))^{\infty}$ which is also a stack of steady state distribution $\pi(\vx,\vtheta)^\mathsf{T}$.
\resizebox{\linewidth}{!}{
\begin{minipage}{\linewidth}
\begin{equation*}
\begin{aligned}
    \nabla_{\vtheta_k} \tilde{\Loss}(\vy,\vx,\vtheta)
    &= \nabla_{\vtheta_k} \Loss(\vy,\bpi(\vx,\vtheta^{(h)})) \\
    &= \nabla_{\vtheta_k} \Loss(\vy,\rvp^\text{(events)}(0)^\mathsf{T} \boldsymbol{\Pi}(\vx,\vtheta)) \\
    &= \sum\limits_{i,j \in \sS} \frac{\partial \Loss(\vy,\rvp^\text{(events)}(0)^\mathsf{T} \Puni(\mQ(\vx,\vtheta))^{\infty})}{\partial q_{ij}} \frac{\partial \mQ(\vx,\vtheta)_{ij}}{\partial \vtheta_k} \\
    &= \sum\limits_{i,j \in \sS} \left[ \sum\limits_{m,n \in \sS} \frac{\partial \Loss(\vy,\rvp^\text{(events)}(0)^\mathsf{T} \boldsymbol{\Pi}(\vx,\vtheta))}{\partial \boldsymbol{\Pi}_{mn}} \frac{\partial \boldsymbol{\Pi}(\vx,\vtheta)_{mn}}{\partial q_{ij}} \right] \frac{\partial \mQ(\vx,\vtheta)_{ij}}{\partial \vtheta_k} \\
    &= \sum\limits_{i,j \in \sS} \sum\limits_{m,n \in \sS} (\rvp^\text{(events)}(0)^\mathsf{T})_m \frac{\partial \Loss(\vy, \bpi(\vx,\vtheta))}{\partial \bpi_n} \frac{\partial \boldsymbol{\Pi}(\vx,\vtheta)_{mn}}{\partial q_{ij}} \frac{\partial \mQ(\vx,\vtheta)_{ij}}{\partial \vtheta_k} \\
    &= \sum\limits_{i,j \in \sS} \sum\limits_{m,n \in \sS} (\rvp^\text{(events)}(0)^\mathsf{T})_m \frac{\partial \Loss(\vy, \bpi(\vx,\vtheta))}{\partial \bpi_n} \left[ \frac{\partial \boldsymbol{\Pi}(\vx,\vtheta)}{\partial q_{ij}} \right]_{mn} \frac{\partial \mQ(\vx,\vtheta)_{ij}}{\partial \vtheta_k} \\
    &= \sum\limits_{i,j \in \sS} \sum\limits_{m,n \in \sS} (\rvp^\text{(events)}(0)^\mathsf{T})_m \frac{\partial \Loss(\vy, \bpi(\vx,\vtheta))}{\partial \bpi_n} \left[ \left( \frac{\partial \Puni(\mQ(\vx,\vtheta))}{\partial \mQ} \right)_{ij} \right]_{mn} \frac{\partial \mQ(\vx,\vtheta)_{ij}}{\partial \vtheta_k} \\
    &= \sum\limits_{i,j \in \sS} \sum\limits_{m,n \in \sS} (\rvp^\text{(events)}(0)^\mathsf{T})_m \frac{\partial \Loss(\vy, \bpi(\vx,\vtheta))}{\partial \bpi_n} \left[ \boldsymbol{\Pi}(\vx,\vtheta) \sum_{l = 0}^{\infty} \frac{\partial \Puni(\mQ(\vx,\vtheta))}{\partial q_{ij}} \Puni(\mQ(\vx,\vtheta))^{l} \right]_{mn} \frac{\partial \mQ(\vx,\vtheta)_{ij}}{\partial \vtheta_k} \\
\end{aligned}
\end{equation*}
\end{minipage}
}
%
In the proof we will use $X$ rather than $X^{(h)}$, whose meaning will be clear from context. We then show that the above derivative proposed in the theorem is an unbiased estimation of above derivative.
%
%

First, assume we imposed the condition (we will later prove this to be true)
\begin{equation} \label{eq:cond}
 \sum_{x=1}^\infty \frac{\left\Vert \sum_{j = x+1}^{\infty} \frac{\partial \mP(\mQ)}{\partial q_{ij}}{ \mP^j} \right\Vert_2^2}{\Prob[X \geq x]}  < \infty, \quad \forall i,j,
 \end{equation}
we imposed is equivalent to the condition 
\begin{equation*}
 \sum_{x=1}^\infty \frac{\left\Vert \sum_{j = 1}^{\infty} \frac{\partial \mP(\mQ)}{\partial q_{ij}}{ \mP^j} - \sum_{j = 1}^{x} \frac{\partial \mP(\mQ)}{\partial q_{ij}}{ \mP^j} \right\Vert_2^2}{\Prob[X \geq x]} < \infty,
\end{equation*}
which we use to invoke Theorem 1 of~\citet{rhee2015unbiased}, which shows that under these conditions the expectation exists, \ie,
$\boldsymbol{\Pi} \: \expected_X[\Gamma_{ij}]=(\nabla^{(\infty)}_\mQ \mP(\mQ))_{ij}$, with $\nabla^{(\infty)}_\mQ \mP(\mQ)$ as defined in Equation~\eqref{eq:infinitySplit}, and $\Gamma_{ij}$ has also a finite second moment.
Since the expectation exists, the second part of the proof starts with the expansion of the expectation
\[
\expected_X [\Gamma_{ij}] =  \expected_X \left[\sum_{l' = 0}^{X} \frac{\frac{\partial \mP}{\partial q_{ij}}\mP^{l'}}{\Prob[X > l']} \right] =  \sum_{x = 1}^{\infty}\sum_{l = 1}^{x}   \frac{\frac{\partial \mP}{\partial q_{ij}} \mP^{l - 1}} {\Prob[X \geq l]} \Prob[X = x].
\]
By Fubini’s theorem 
\begin{align*}
 &   \sum_{x = 1}^{\infty}\sum_{l = 1}^{x}   \frac{\frac{\partial \mP}{\partial q_{ij}} \mP^{l - 1}} {\Prob[X \geq l]} \Prob[X = x] 
=   \sum\limits_{l = 1}^{\infty} \sum\limits_{x = l}^{\infty}   \frac{\frac{\partial \mP}{\partial q_{ij}} \mP^{l - 1}} {\Prob[X \geq l]}\Prob[X = x], 
\end{align*}
while moving the sum that depends on $x$ inside, noting that $\sum_{x=l}^{\infty}{\Prob[X=x]} = \Prob[X \geq x]$, and canceling the terms yields
\begin{align*}
& \sum_{l = 1}^{\infty} \left( \frac{\frac{\partial \mP}{\partial q_{ij}} \mP^{l - 1}} {\Prob[X \geq l]} \sum_{x = l}^{\infty}\Prob[X = x] \right) \\
&=   \sum_{l = 1}^{\infty} \left( \frac{\frac{\partial \mP}{\partial q_{ij}} \mP^{l - 1}}{\Prob[X \geq l]} \Prob[X \geq l] \right) \\
&=   \sum_{l' = 0}^{\infty} \frac{\partial \mP}{\partial q_{ij}} \mP^{l'}. 
\end{align*}
This part of the proof concludes by noting that
$$
\bpi(\vx,\vtheta)_{n} \sum_{t = 0}^{X^{(h)}} \left[ \frac{\partial \mP(\mQ(\vx, \vtheta))}{\partial q_{ij}}{\frac{\mP(\mQ(\vx, \vtheta))^t}{\Prob[X^{(h)} > t]}} \right]_{mn}
$$
is an unbiased estimator of the gradient
$$
\left[ \boldsymbol{\Pi}(\vx,\vtheta) \sum_{l = 0}^{\infty} \frac{\partial \Puni(\mQ(\vx,\vtheta))}{\partial q_{ij}} \Puni(\mQ(\vx,\vtheta))^{l} \right]_{mn}
$$
with finite variance.
As also the derivative of the $\log$ in ${\partial \Loss(\vy, \bpi(\vx,\vtheta))} / {\partial \bpi_n}$ , and the derivative ${\partial \mQ(\vx,\vtheta)_{ij}} / {\partial \vtheta_k}$ are infinitely  differentiable.
Thus, the gradient estimate $\nabla_{\vtheta_k} \tilde{\Loss}(\vy,\vx,\vtheta)$ can be used to find a fixed point in a Robbins-Monro stochastic optimization procedure~\citep{Bottou,Bottou1998} of Equation~\eqref{eq:MLE}.

We now prove that Equation~\eqref{eq:cond} is satisfied by our choice of $p^{(h)} < \delta^{(h)}$, where $\delta^{(h)}$ is the spectral gap of $\mP$.
In what follows we drop the dependence on $(h)$ to simplify the notation.
Note that we need to show
\[
\sum_{x=1}^\infty \frac{\left\Vert \sum_{j = x+1}^{\infty} \frac{\partial \mP(\mQ)}{\partial q_{ij}} \mP^j \right\Vert_2^2}{\Prob[X \geq x]}  < \infty , \quad \text{ for all learnable $q_{ij}$}.
\]
Without loss of generality, assume a fixed $i$ and $j$. Because $\mP \equiv \mP(\mQ_t)$ is stationary and ergodic for any positive values learnable parameters of $\mQ_t$, the spectral decomposition of $\mP = V \Lambda V^{-1}$ has eigenvalues $1 > \lambda_2 \geq \ldots \geq \lambda_{|\sS|} > -1$. And let $\delta = 1 - \max(|\lambda_2|,\ldots,|\lambda_{|\sS|}|)$ be the spectral gap of $\mP$.
We start the proof using the spectral decomposition in the series:
 \begin{align*}
 \sum_{x=1}^\infty \frac{\left\Vert \sum_{j = x+1}^{\infty} \frac{\partial \mP(\mQ)}{\partial q_{ij}} \mP^j \right\Vert_2^2}{\Prob[X \geq x]}  
& =  \sum_{x=1}^\infty \frac{\left\Vert \sum_{j = x+1}^{\infty} \frac{\partial \mP(\mQ)}{\partial q_{ij}} V \Lambda^j V^{-1} \right\Vert_2^2}{\Prob[X \geq x]}  \\
&  =  \sum_{x=1}^\infty \frac{\left\Vert \sum_{j = x+1}^{\infty} \lambda_i^j \langle \frac{\partial \mP(\mQ)}{\partial q_{ij}}, V_{\cdot i}\rangle (V^{-1})_{\cdot i} \right\Vert_2^2}{\Prob[X \geq x]}  \qquad (a),
\end{align*}
because $ V_{\cdot 1} = \bpi$  and $\langle \frac{\partial \mP(\mQ)}{\partial q_{ij}} ,\bpi \rangle = 0$ (see proof of Proposition~\ref{prop:infgrad}), then $\langle \frac{\partial \mP(\mQ)}{\partial q_{ij}} ,V_{\cdot 1} \rangle = 0$, which yields
\begin{align*}
   (a)  =  \sum_{x=1}^\infty \frac{\left\Vert \sum_{j = x+1}^{\infty} \sum_{i=2}^{|\sS|} \lambda_i^j \langle \frac{\partial \mP(\mQ)}{\partial q_{ij}}, V_{\cdot i}\rangle (V^{-1})_{\cdot i} \right\Vert_2^2}{\Prob[X \geq x]}  \qquad (b)
\end{align*}
Let $X \sim \text{Geometric}(p)$, $p \in (0,1)$ with average $E[X]=1/p$.
Then, $\exists C(\mQ)$ s.t.\
\begin{align*}
(b)& =   \sum_{x=1}^\infty \frac{\left\Vert \sum_{j = x+1}^{\infty} \sum_{i=2}^{|\sS|} \lambda_i^j \langle \frac{\partial \mP(\mQ)}{\partial q_{ij}}, V_{\cdot i}\rangle (V^{-1})_{\cdot i} \right\Vert_2^2}{(1-p)^x} \\
&\leq \sum_{x=1}^\infty \frac{\left\Vert \sum_{j = x+1}^{\infty} \sum_{i=2}^{|\sS|} |\lambda_i|^j C(\mQ) \right\Vert_2^2}{(1-p)^x}
\end{align*}
and because $(1- \delta) >  |\lambda_i|$, for $i \geq 2$, we have 
\begin{align*}
&\sum_{x=1}^\infty \frac{\left\Vert \sum_{j = x+1}^{\infty} \sum_{i=2}^{|\sS|} |\lambda_i|^j C(\mQ) \right\Vert_2^2}{(1-p)^x}\\
&\leq \sum_{x=1}^\infty \frac{\left\Vert \sum_{j' = 1}^{\infty} (1-\delta)^{j'+x} \: |\sS| \: C(\mQ) \right\Vert_2^2}{(1-p)^x}\\
&=  \sum_{x=1}^\infty  \left(\frac{(1-\delta)^2}{1-p}\right)^x \left\Vert \sum_{j' = 1}^{\infty} (1-\delta)^{j'} \: |\sS| \: C(\mQ) \right\Vert_2^2
\end{align*}
From the Cauchy–Schwarz inequality,
\begin{align*}
    &\sum_{x=1}^\infty  \left(\frac{(1-\delta)^2}{1-p}\right)^x \left\Vert \sum_{j' = 1}^{\infty} (1-\delta)^{j'} \: |\sS| \: C(\mQ) \right\Vert_2^2\\
    &\leq \sum_{x=1}^\infty  \left(\frac{(1-\delta)^2}{1-p}\right)^x \sum_{j' = 1}^{\infty} (1-\delta)^{2j'} \: |\sS|^2 \: C(\mQ)^2,
\end{align*}
and using the Cauchy product
\begin{align*}
   & \sum_{x=1}^\infty  \left(\frac{(1-\delta)^2}{1-p}\right)^x \sum_{j' = 1}^{\infty} (1-\delta)^{2j'} \: |\sS|^2 \: C(\mQ)^2
    = |\sS|^2 \: C(\mQ)^2 \sum_{k=1}^\infty  \beta_k 
\end{align*}
where
\[
\beta_k = \sum_{l=0}^k  \left(\frac{(1-\delta)^2}{1-p}\right)^l  ((1-\delta)^{2})^{k-l} = \sum_{l=0}^k  \frac{(1-\delta)^{2k}}{(1-p)^l} = (1-\delta)^{2k} \frac{(1-p)^{-k} - (1-p)}{p} . 
\]
Finally, putting all the terms together
\[
\sum_{x=1}^\infty \frac{\left\Vert \sum_{j = x+1}^{\infty} \frac{\partial \mP(\mQ)}{\partial q_{ij}} \mP^j \right\Vert_2^2}{\Prob[X \geq x]}   \leq |\sS|^2 \: C(\mQ)^2 \sum_{k=1}^\infty  (1-\delta)^{2k} \frac{(1-p)^{-k} - (1-p)}{p},
\]
which is a convergent  geometric series if $p < \delta$ as $0 \leq 1-\delta < 1$, concluding our proof.
\end{proof}
%
%
%
%
%
%

%
%
%
%
\subsection*{\AppExpA: Experimental Testbed} \label{subsec:app:testbed}

The VoLTE testbed consists of (a) a Session Initiation Protocol (SIP)~\cite{siprfc} server implementing VoLTE functionality deployed using Kamailio (version version 5.0.4)~\citep{Kamailio}, and (b) a workload generator, SIPP~\citep{sipp}, which generates a predefined number of SIP \texttt{REGISTER} messages per second.
The VoLTE testbed 
is deployed on 2 HP ProLiant DL120G6 (Intel Xeon X3430 processor and 8 GB RAM) connected using one Gigabit Dell N2024 Switch. 
Both the SIP server and the workload generator run directly on a dedicated physical host.

\paragraph{System Architecture.}
The Kamailio server uses a pool of statically created threads to process requests from the client(s). Each thread reads data directly from the socket buffer, processes the SIP request and generates a response which is send to the client. Incoming request messages are stored in the Linux socket buffer before processing by the Kamailio application threads. Since the workload generator and SIP server communicate using User Datagram Protocol (UDP), any packets sent by the client after the socket buffer is full (Queue size $> K$) are discarded. The workload generator generates a predefined number of \texttt{REGISTER} messages each second ($\lambda$) and waits for a \texttt{200 OK} response from the SIP server. Any request that does not receive a successful \texttt{200 OK} response from the server within a predefined timeout (10~ms) is considered a failed call. 
In our experiments, the Kamailio server is configured to use a single application thread to process requests and the socket buffer is allocated to store up to 20 \texttt{REGISTER} requests (K=20). This setup therefore emulates an  M/M/1/$K$ queuing system with K=20.

\paragraph{Workload Characteristics}%
The workload generator generates traffic according to the request-rate distribution of a video streaming server of one of the largest video streaming operators in Brazil~\citep{Rocha2018CDN}. The request-rate distribution is shown in \Figref{fig:sipp_workoad}. We also measured the number of packets in the server queue during our experiments by instrumenting the workload generator code. An example of the number of packets enqueued at a server (k) is presented in  \Figref{fig:kam_k_size}. The figure presents the number of times an incoming request encountered a server queue size of 0 or 20 (maximum queue size in our testbed is 20). As evident from \Figref{fig:kam_k_size}, at lower workloads, the number of packets waiting in the queue when a new request arrives are frequently 0 (0-100 second), but at higher workloads, the server queue is frequently full (100-175 second).

\begin{figure} [htp]
\begin{subfigure}[t]{0.5\textwidth}
\centering
\includegraphics[width=\textwidth, keepaspectratio=true]{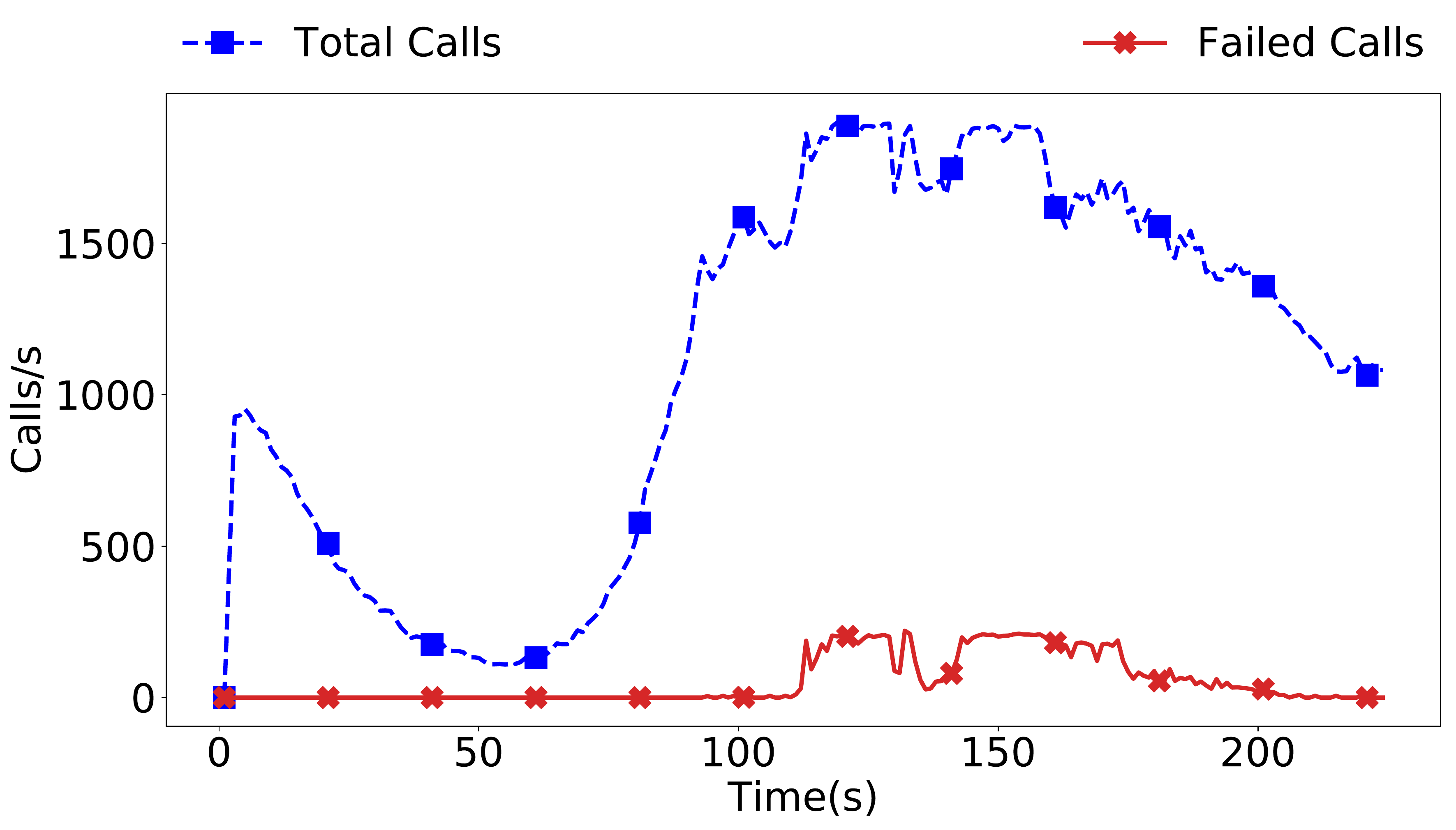}
\caption{Call distribution}
\label{fig:sipp_workoad}
\end{subfigure}
\hspace*{0.1in}
\begin{subfigure}[t]{0.5\textwidth}
\includegraphics[width=\textwidth, keepaspectratio=true]{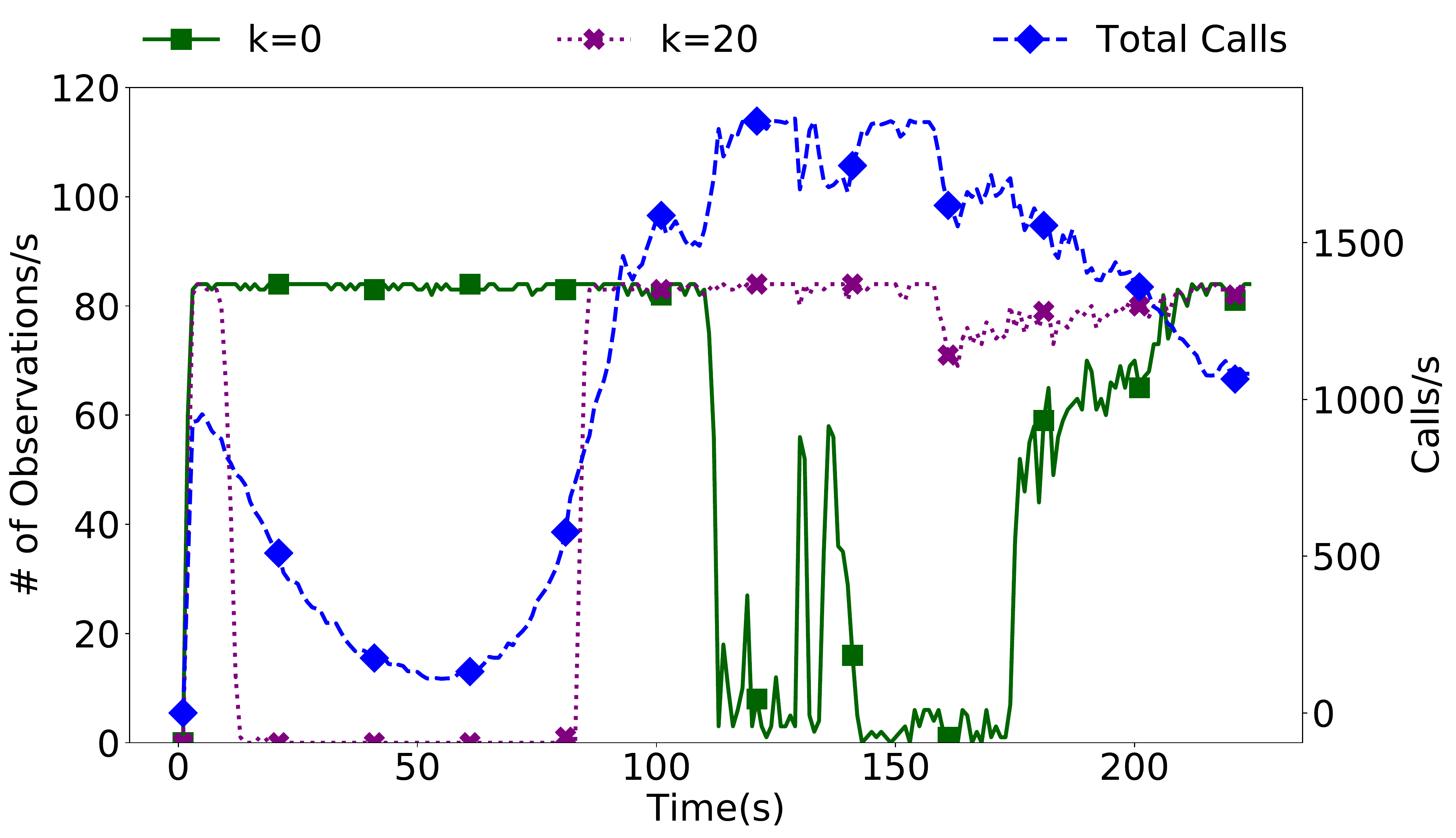}
\caption{Number of times a queue size of K is encountered by an incoming packet}
\label{fig:kam_k_size}
\end{subfigure}
\caption{Details from VoLTE testbed experiments}
\label{fig:kam_q_size}
\end{figure}

\paragraph{Data Collection Methodology.}%
In real production systems, it is not always possible to obtain the number of packets enqueued at the server. Applications/servers in a deployment environment typically do not generate statistics for system level socket buffers and internal queues. Therefore, for our experiments,  
we do not use the actual queue size generated by instrumenting the workload generator code. Instead, we estimate the number of packets in the server queue from the time taken by the server to respond to a request (server response time observed at the workload generator). 
Server response time is a combination of transmission delay, queueing delay, and processing time. That is, ResponseTime =   PropagationTime + QueueingTime + ProcessingTime. We use the time taken by the server to respond to a ping packet~\citep{Ping} as the PropagationTime, and the time taken by the server to process to the first request packet as the ProcessingTime. These quantities allow us to calculate the QueueingTime of a request from its ResponseTime. The server queue size observed by a request is then estimated from the QueueingTime as QueueSize =  QueueingTime/ProcessingTime.  
Assuming Poisson arrivals, the observed queue sizes are true samples from the time average (via PASTA property~\citep{wolff1982poisson}).

%
%
%
%
\subsection*{\AppExpB: Synthetic Training and Test Data} \label{subsec:app:simulate}

\paragraph{Queue Simulation Configurations.} At training time, we sample request rates of time window $n$ from uniform intervals that keep the system load light, while at test time, we sample from request rates of heavy loads.
For $n$-th training or test samples, we set $x^\text{train}_{n} \sim \text{Uniform}(\lambda^\text{train}_\text{min}, \lambda^\text{train}_\text{max})$, and test over $x^\text{test}_{n} \sim \text{Uniform}(\lambda^\text{test}_\text{min}, \lambda^\text{test}_\text{max})$.

In open queues (M/M/1/$K$ and M/M/m/m+r), at each time window of length $T$, the request rate that arrive at the system when there are zero or one packets in the queue is given by $X_0 \sim \text{Poisson}(\lambda T \bpi_0)$ and $X_1 \sim \text{Poisson}(\lambda T \bpi_1)$, respectively, where $\lambda$ is the request rate.
The latter sampling is a good approximation of the aggregate observations since $T$ is large enough for the system to reach steady state and, by the PASTA property~\citep{wolff1982poisson}, Poisson processes see time averages.
In the training data, we observe as few as 10 samples for each $\bpi$ distribution.

\paragraph{Queue Simulation Details.}
All the queue simulations share the same data generation process given in Algorithm~\ref{alg:gendata} where $N$ is number of samples to generate, $\lambda_\text{min}$ and $\lambda_\text{max}$ are inclusive boundaries for $x^\text{train}$ or $x^\text{test}$, and $\theta^\star$ is ground truth settings of all learnable parameters.

\begin{algorithm}
\caption{Data Generation Process}
\label{alg:gendata}
\begin{algorithmic} 
\REQUIRE{$N, \lambda_\text{min}, \lambda_\text{max}, \mQ(\cdot)$}
\STATE{// $\mQ(\cdot)$ is the queueing model that takes the request rate $x$ as input.}
\STATE{$T \leftarrow 1$}
\STATE{$n \leftarrow 0$}
\WHILE{$n < N$}
	\STATE{$x \sim \text{Uniform}(\lambda_\text{min}, \lambda_\text{max})$}
	\STATE{Get transition probability matrix $\mP$ and logging rate $\gamma$ of $\mQ(x; \theta^\star)$ as Definition~\ref{def:unif}}
	\STATE{Get steady-state distribution $\bpi$ of $\mQ(x; \theta^\star)$}
	\STATE{$t \leftarrow 0$}
	\STATE{$i \leftarrow 0$}
	\STATE{Sample initial state $s_{0}$ with respect to $\bpi$}
	\STATE{Sample interval between next event $d \sim \text{Exp}(\frac{1}{\gamma})$}
	\STATE{$t \leftarrow t + d$}
	\WHILE{$t < T$}
		\STATE{// Reach to next state.}
		\STATE{$i \leftarrow i + 1$}
		\STATE{Sample staying state of next event $s_{i}$ with respect to $P_{s_{i}:}$}
		\STATE{Sample interval between next event $d \sim \text{Exp}(\frac{1}{\gamma})$}
		\STATE{$t \leftarrow t + d$}
	\ENDWHILE
	\STATE{Count the appearance of $\{s_{0}, \cdots, s_{i}\}$ and append count to data}
	\STATE{$n \leftarrow n + 1$}
\ENDWHILE
\end{algorithmic}
\end{algorithm}

%
%
%
%
\subsection*{\AppExpC: Effect of Parametric Strength $\alpha$} \label{subsec:app:alpha}

In \Figref{fig:alpha-study}, we can see that strong parametric model is a clear winner among all experiments.
The failure of weak parametric models on extrapolation of all tasks may be caused by the failure on the convergence over training loss.
To exclude that factor, we further investigate the training loss on those tasks in Table~\ref{tab:alpha-loss}, and find that all parametric model strengths gives similar training loss.

This conclusion verifies the need of parametric model: without parametric model, our models can easily overfit on partly observed states $\sS'$, and lose the extrapolation ability on unseen states $\sS \backslash \sS'$.

\begin{figure}
\begin{minipage}{0.24\textwidth}
\centering
\includegraphics[width=\linewidth]{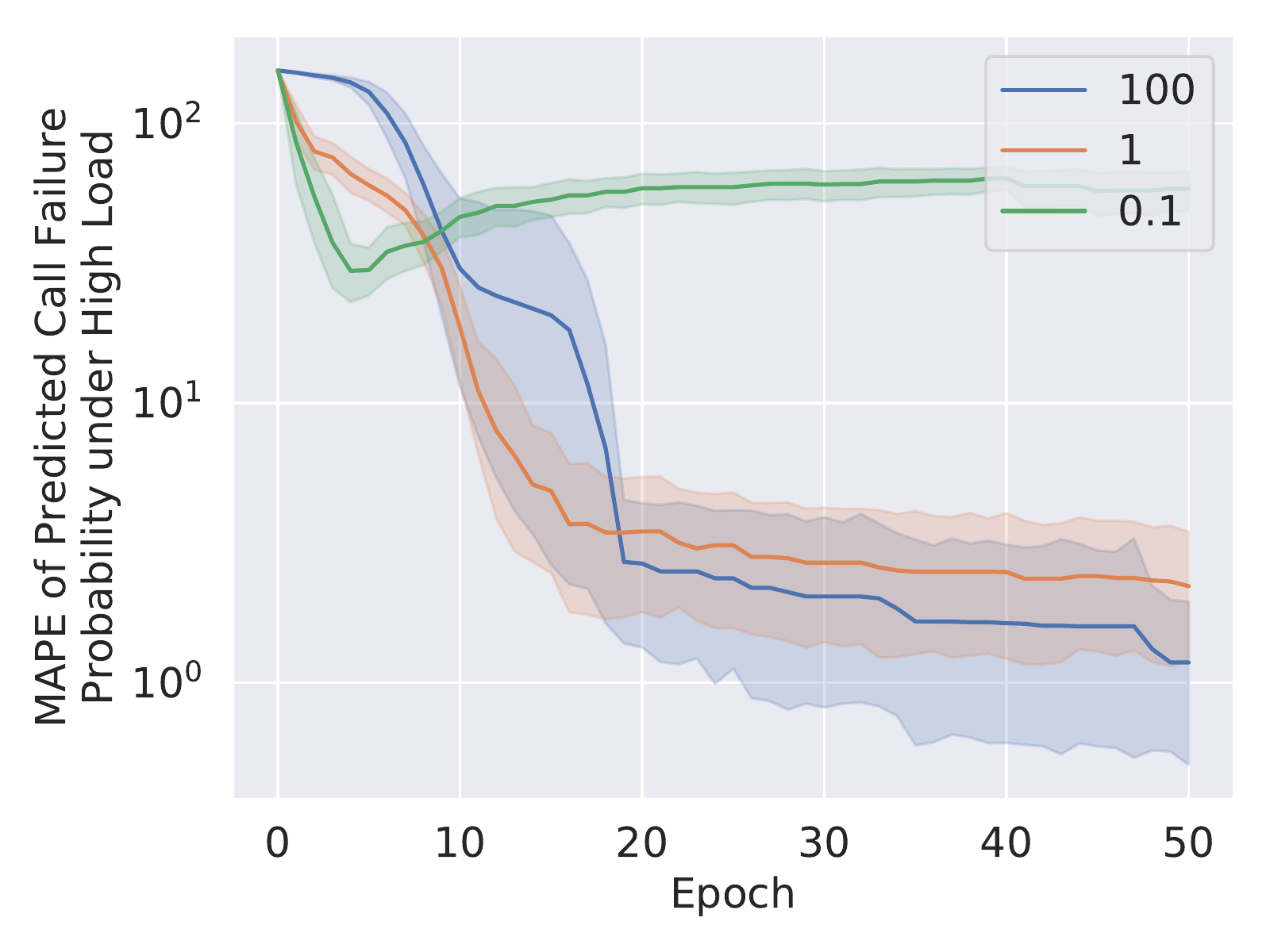}
\subcaption{Effect of parametric model on M/M/1/$K$ (fast mixing) ($\alpha \in \{0.1,1,100\}$).}
\end{minipage}
\hfill
\begin{minipage}{0.24\textwidth}
\centering
\includegraphics[width=\linewidth]{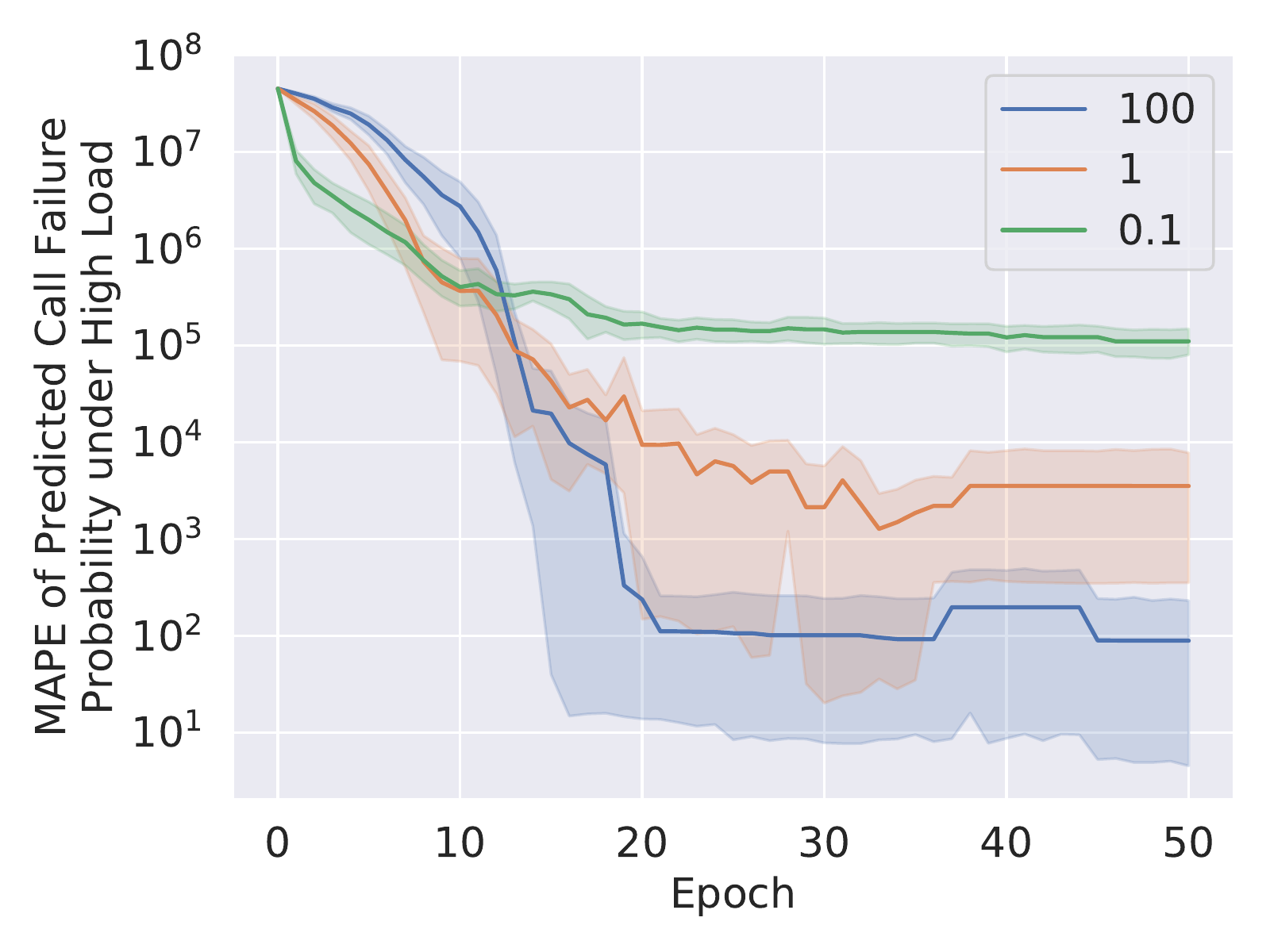}
\subcaption{Effect of parametric model on M/M/1/$K$ (slow mixing) ($\alpha \in \{0.1,1,100\}$).}
\end{minipage}
\hfill
\begin{minipage}{0.24\textwidth}
\centering
\includegraphics[width=\linewidth]{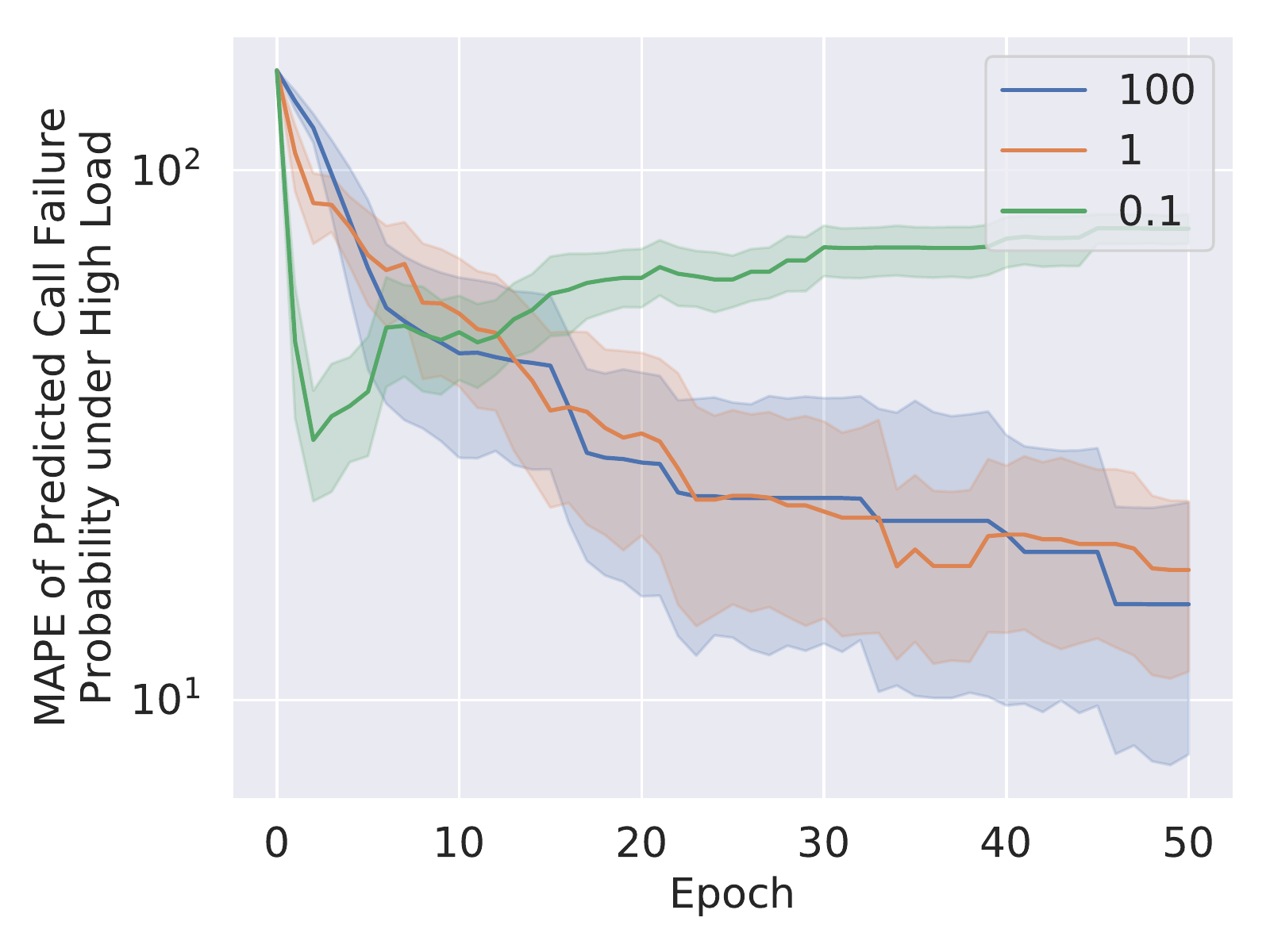}
\subcaption{Effect of parametric model on M/M/m/m+r ($\alpha \in \{0.1,1,100\}$).}
\end{minipage}
\hfill
\begin{minipage}{0.24\textwidth}
\centering
\includegraphics[width=\linewidth]{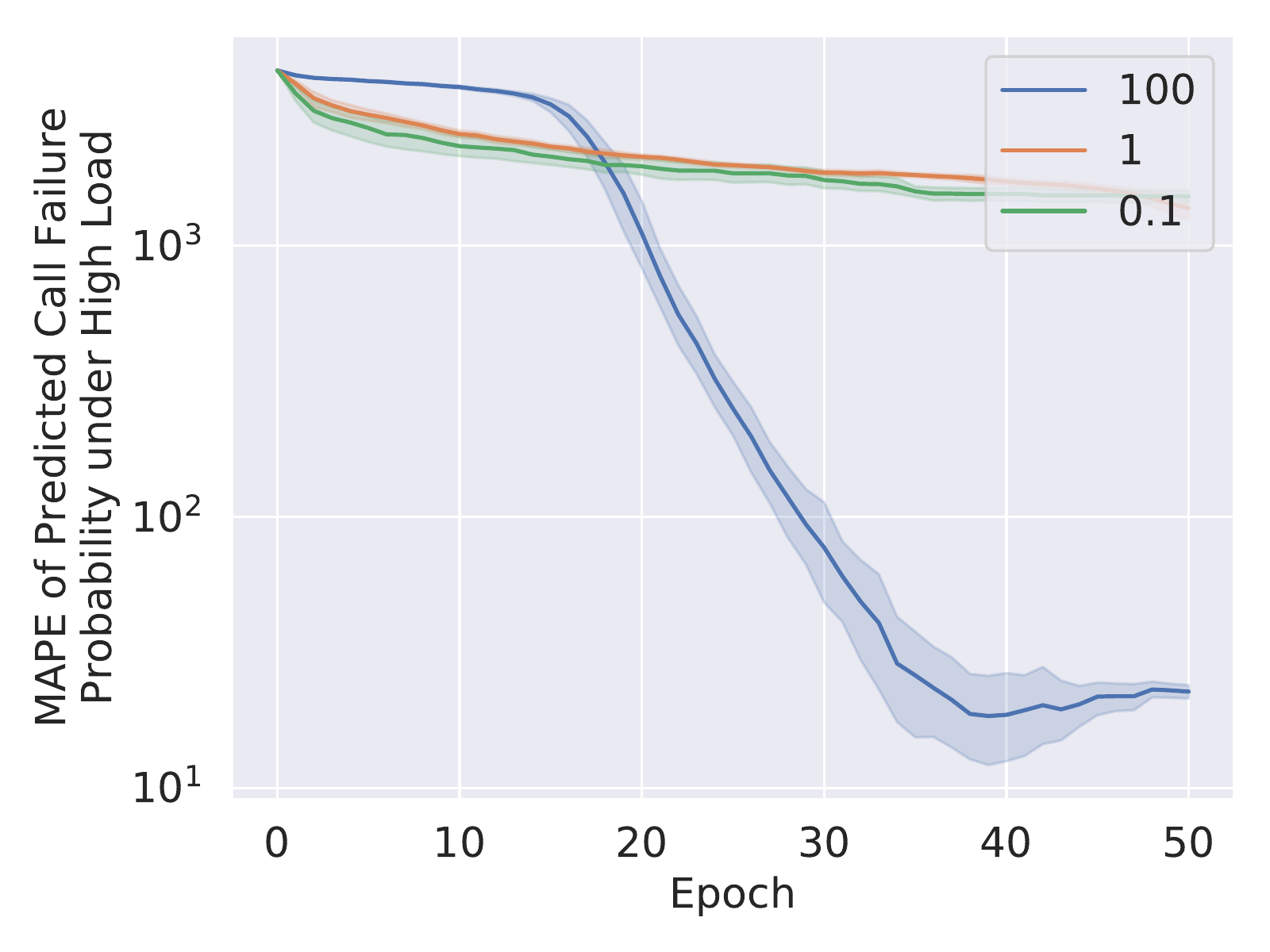}
\subcaption{Effect of parametric model on M/M/Multiple/$K$ ($\alpha \in \{0.1,1,100\}$).}
\label{fig:mmul-large-alpha}
\end{minipage}
\caption{Effect of parametric strength on different queues on test error (all optimized with $\infty$-SGD). (a) and (c) shows that strong strength ($\alpha=100$) sometimes may be same as weaker strength ($\alpha=1$). (b) and (d) shows that strong prior ($\alpha=100$) beats small prior ($\alpha=1$). (a-d) shows that too small strength ($\alpha=0.1$) will be harmful. (a-d) together shows that using strong prior ($\alpha=100$) does not harm, and may even achieve great improvement.}
\label{fig:alpha-study}
\end{figure}

\begin{table*}
\vspace{-5pt}
\caption{\small Simulation results showing training loss (negative log-likelihood). We can see that all training losses have similar mean and variance regardless of parametric strength $\alpha$. \label{tab:alpha-loss}}
\centering
\resizebox{1.\textwidth}{!}{
\begin{tabular}{@{\extracolsep{0pt}}lclll@{}}
& $\delta_n$ (spectral gap) & $\alpha = 0.1$ & $\alpha = 1$ & $\alpha = 100$ \\
\cmidrule(l){2-5}
M/M/1/$K$ (fast-mix) & [0.022, 0.043] & $ 3.59 \times 10^{1} \pm 7.33 \times 10^{-1} $ & $ 3.59 \times 10^{1} \pm 8.02 \times 10^{-1} $ & $ 3.60 \times 10^{1} \pm 7.45 \times 10^{-1} $ \\
M/M/1/$K$ (slow-mix) & [0.005, 0.008] & $ 8.11 ~~~~~~~~~~~ \pm 3.42 \times 10^{-1} $ & $ 8.12 ~~~~~~~~~~~ \pm 4.47 \times 10^{-1} $ & $ 8.22 ~~~~~~~~~~~ \pm 4.71 \times 10^{-1} $ \\
M/M/$m$/$m+r$        & [0.013, 0.024] & $ 2.22 \times 10^{1} \pm 1.64 ~~~~~~~~~~~ $ & $ 2.23 \times 10^{1} \pm 1.60 ~~~~~~~~~~~ $ & $ 2.19 \times 10^{1} \pm 1.39 ~~~~~~~~~~~ $ \\
M/M/Multiple/$K$     & [0.068, 0.096] & $ 9.30 \times 10^{1} \pm 1.46 ~~~~~~~~~~~ $ & $ 9.51 \times 10^{1} \pm 1.48 ~~~~~~~~~~~ $ & $ 9.33 \times 10^{1} \pm 1.60 ~~~~~~~~~~~ $ \\
\hline
\end{tabular}
}
\vspace{-10pt} 
\end{table*}

%
%
%
%
\subsection*{\AppExpD: Effect of Geometric Parameter $p^{(h)}$} \label{subsec:app:geop}

Theorem~\ref{thm:infSGD} gives a bound on $p^{(h)} < \delta^{(h)}$, where $\delta^{(h)}$ is the spectral gap of $\mP(\mQ(\vx,\vtheta^{(h)}))$ on the training data, to guarantee that $\infty$-SGD converges.
This bound is loose.
In our experiments, $\delta^{(h)}$ can always reach magnitude of 0.01, and even reach magnitude of 0.001 for slow mixing M/M/1/$K$.
This would require $p^{(h)} \leq 0.01$ to guarantee convergence, and for slow mixing M/M/1/$K$, it would be $p^{(h)} \leq 0.001$.
In the following experiments we see that such small values of $p^{(h)}$ are not required in practice.

Note that the smaller $p^{(h)}$ is, the more likely we will sample a large $X^{(h)}$, which will greatly increases the cost of computing $\Gamma_{ijmn}^{(h)}$ of \Eqref{eq:infGamma}.
We then test $\infty$-SGD with $p^{(h)}=0.1$ and $p^{(h)}=0.01$. For slow mixing M/M/1/$K$, $p^{(h)}=0.001$ is also included. The MAPE error results are provided in Figure~\ref{fig:geop-study}.
We can see that $p^{(h)}=0.01$ similar performance as $p^{(h)}=0.1$ on slow mixing M/M/1/$K$ and M/M/m/m+r, and fails to converge in 50 epochs on fast mixing M/M/1/$K$ and M/M/Multiple/$K$. In addition, for M/M/1/$K$, $p^{(h)}=0.001$ also gives similar result as $p^{(h)}=0.1$.
For the tasks where small $p^{(h)}$ converges, the time cost for each each epoch of $p^{(h)}=0.1$ is cut to half of $p^{(h)}=0.01$, and is cut to one fourth of $p^{(h)}=0.001$. For those where small $p^{(h)}$ does not converge, we may need even more epochs for $p^{(h)}=0.01$.
We note however, that the gains in training are not reflected in the generalization performance, as shown in the training losses for $p^{(h)}=0.1$ and $p^{(h)}=0.01$ in Table \ref{tab:geop-mse}.
Thus, we use $p^{(h)}=0.1$ in most of our experiments.

\begin{figure}
\begin{minipage}{0.24\textwidth}
\centering
\includegraphics[width=\linewidth]{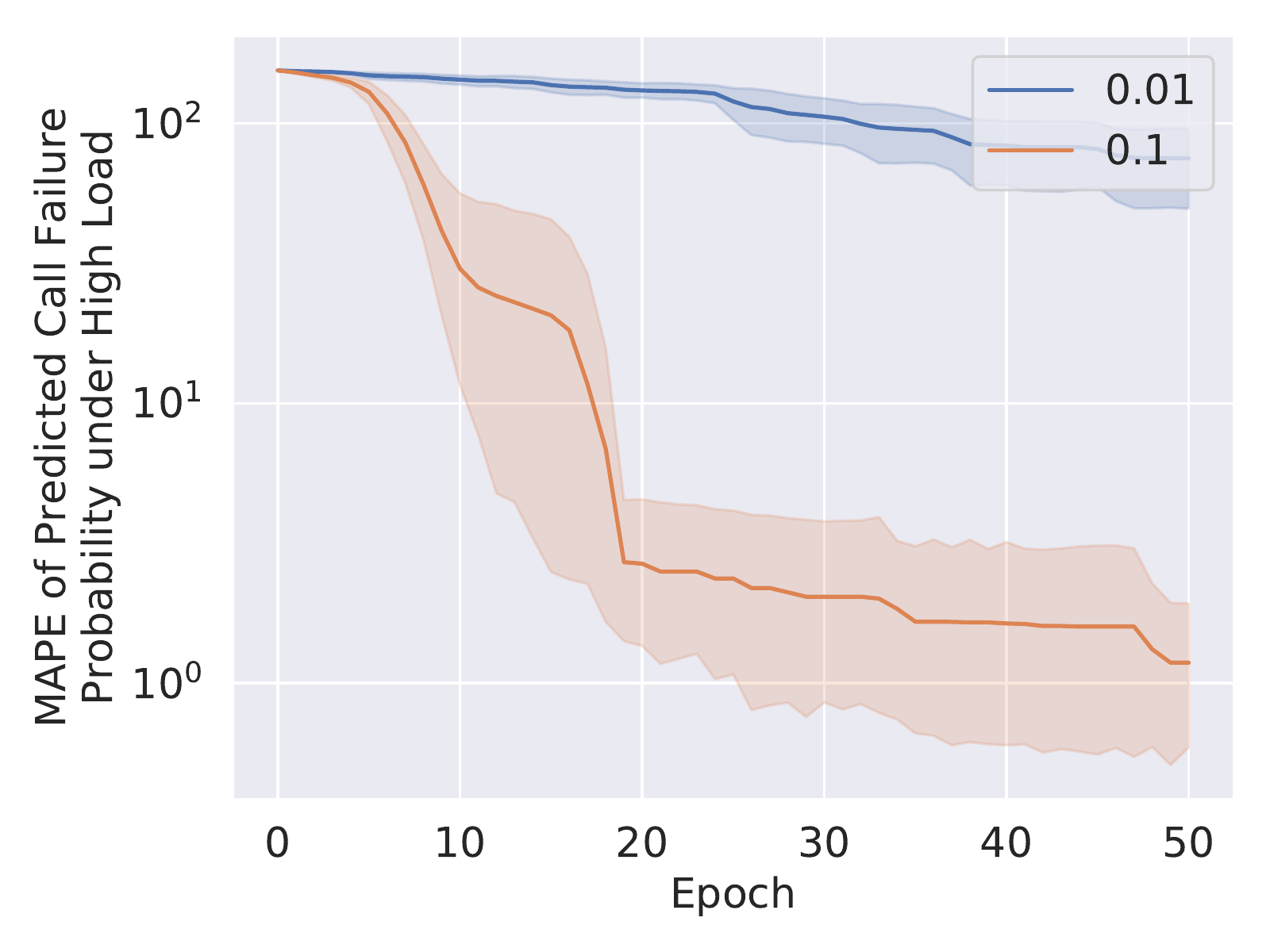}
\subcaption{Effect of geometric sampling on M/M/1/$K$ (fast-mix) ($p \in \{0.01,0.1\}$).}
\label{fig:mm1k-small-geop}
\end{minipage}
\hfill
\begin{minipage}{0.24\textwidth}
\centering
\includegraphics[width=\linewidth]{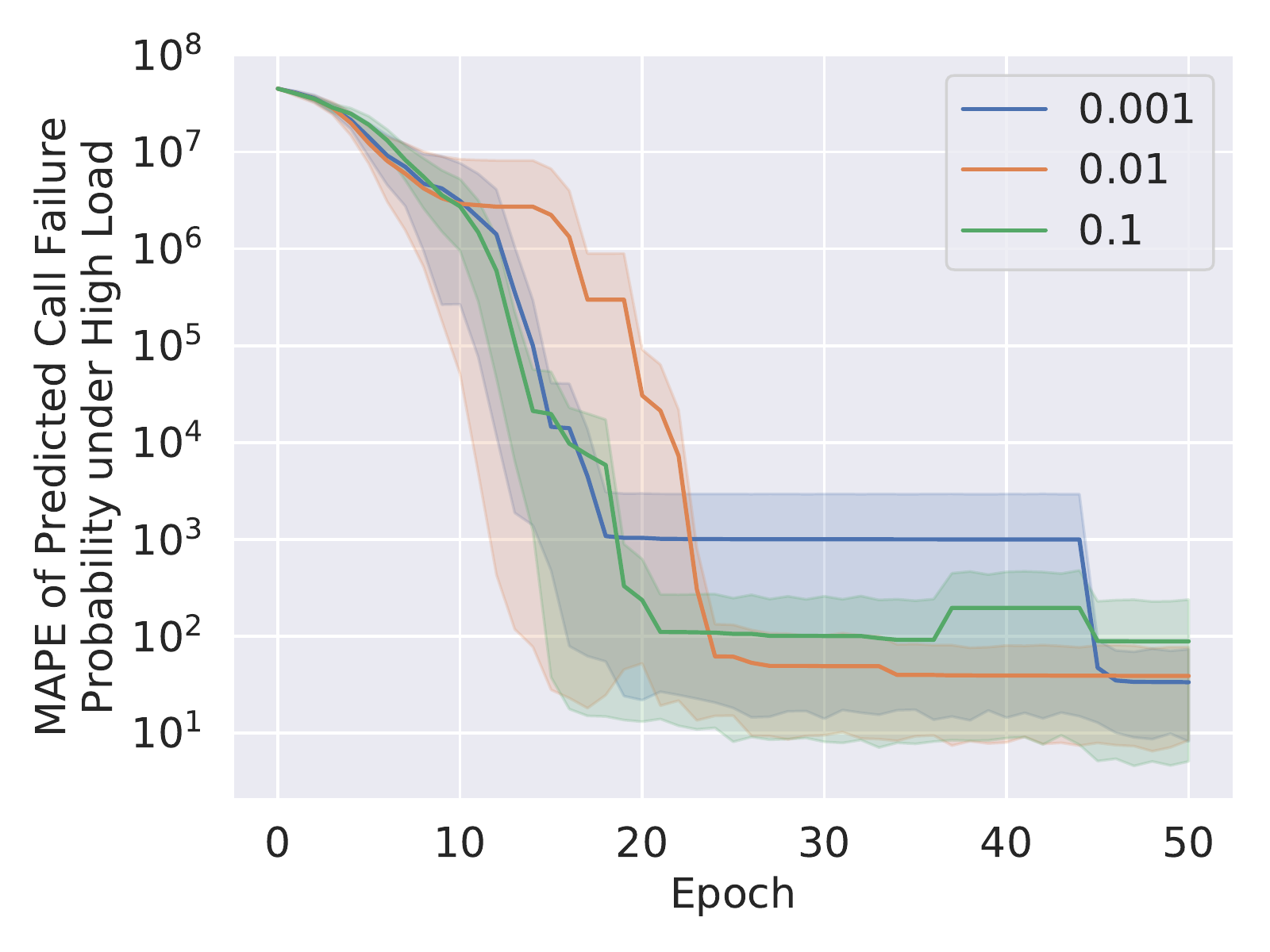}
\subcaption{Effect of geometric sampling on M/M/1/$K$ (slow-mix) ($p \in \{0.001,0.01,0.1\}$).}
\label{fig:mm1k-large-geop}
\end{minipage}
\hfill
\begin{minipage}{0.24\textwidth}
\centering
\includegraphics[width=\linewidth]{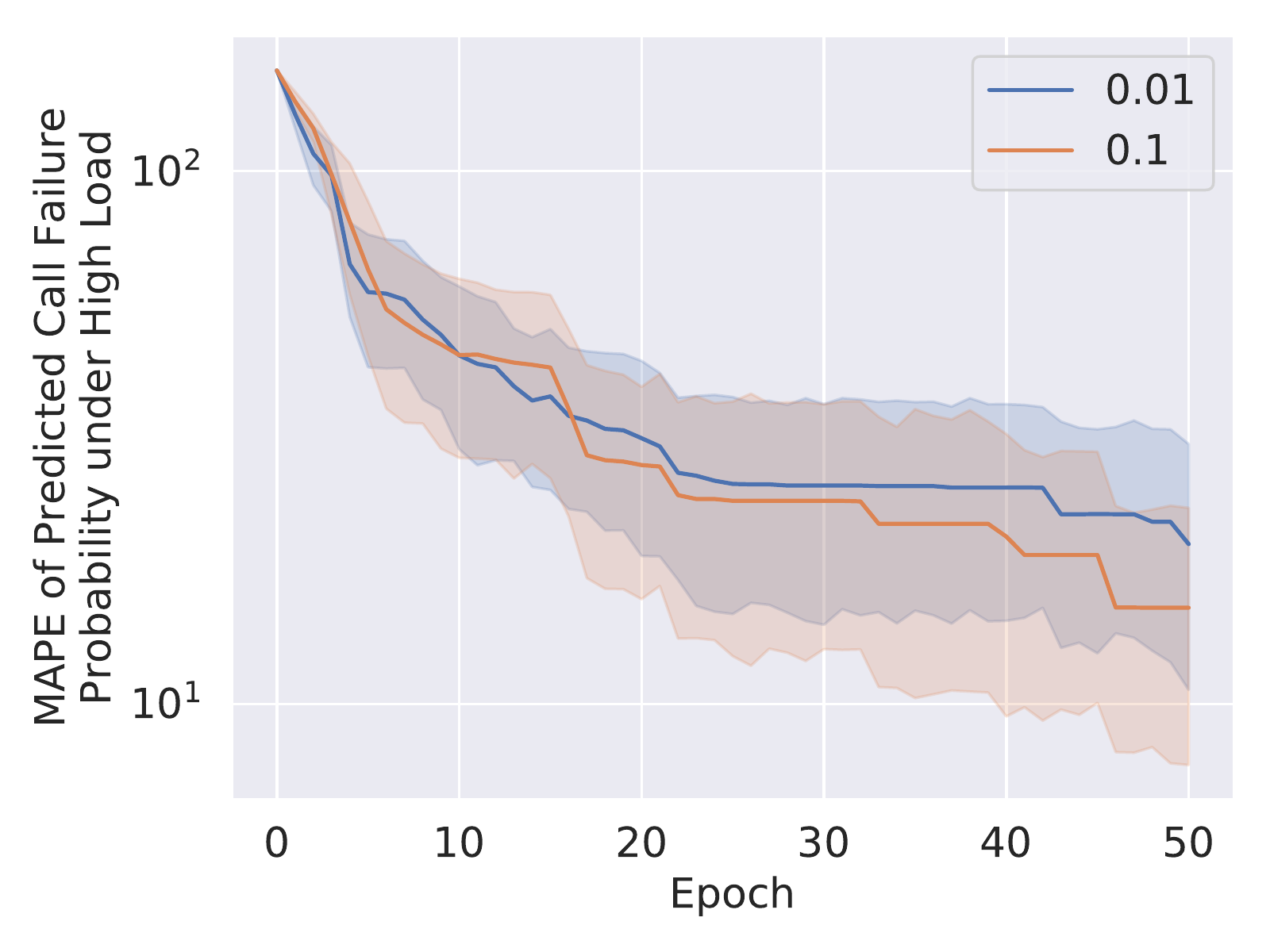}
\subcaption{Effect of geometric sampling on M/M/m/m+r ($p \in \{0.01,0.1\}$).}
\label{fig:mmmmr-geop}
\end{minipage}
\hfill
\begin{minipage}{0.24\textwidth}
\centering
\includegraphics[width=\linewidth]{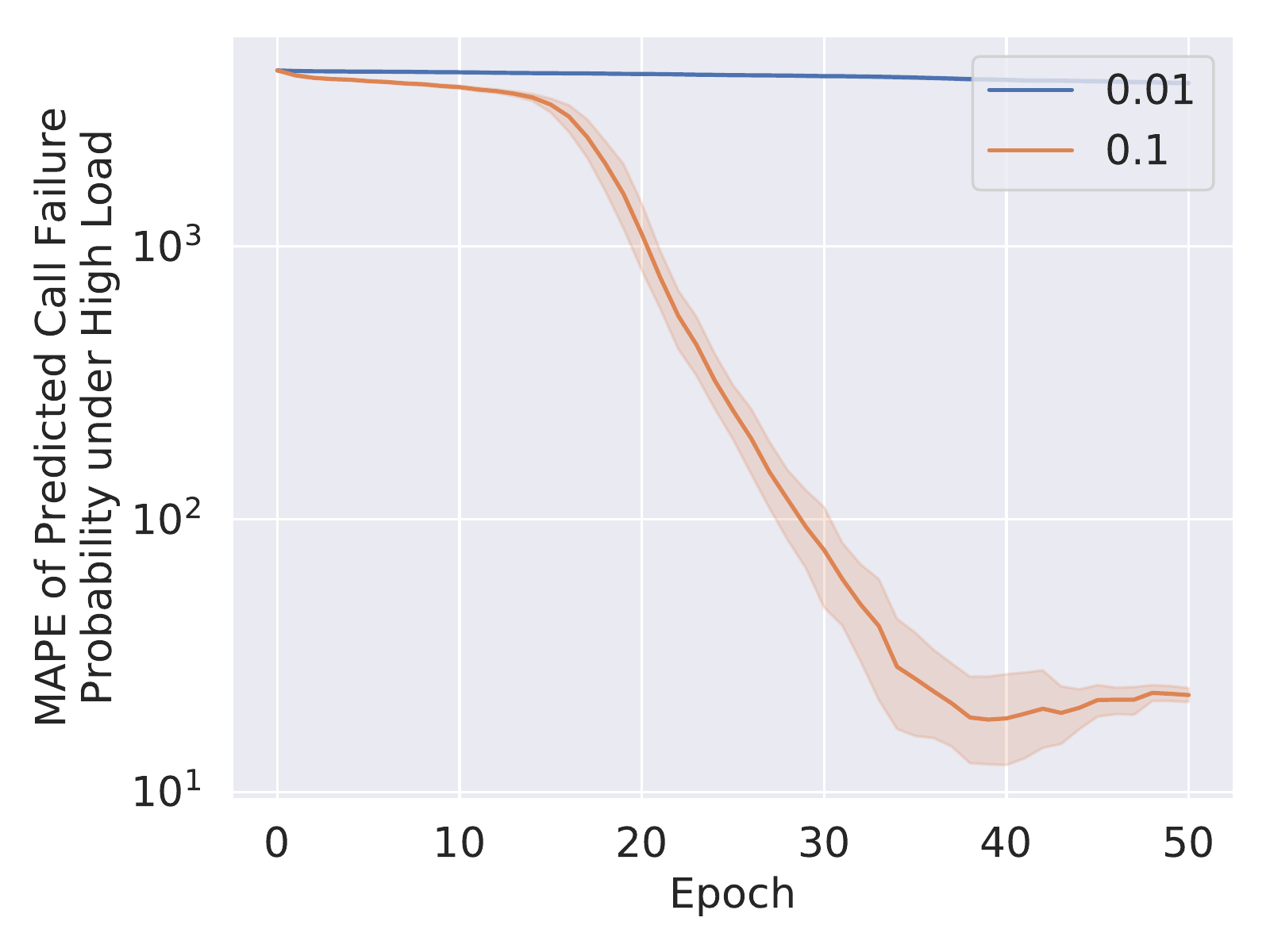}
\subcaption{Effect of geometric sampling on M/M/Mutiple/$K$ ($p \in \{0.01,0.1\}$).}
\label{fig:mmul-large-geop}
\end{minipage}
\caption{Effect of geometric sampling on different queues on test error (all optimized with $\infty$-SGD). (a-d) together show that taking geometric sampling with $p=0.1$ gives nearly the same performance as $p=0.01$.}
\label{fig:geop-study}
\end{figure}

\begin{table*}
\vspace{-5pt}
\caption{\small Simulation results showing training loss (negative log-likelihood) and variance on failure states are of the same magnitude between difference $p$ settings. \label{tab:geop-mse}}
\centering
\resizebox{1.\textwidth}{!}{
\begin{tabular}{@{\extracolsep{0pt}}lcll@{}}
& $\delta_n$ (spectral gap) & $p = 0.1$ & $p=0.01$ \\
\cmidrule(l){2-4}
M/M/1/$K$ (fast-mix) & [0.022, 0.043] & $ 3.87 \times 10^{1} \pm 1.96 ~~~~~~~~~~~ $ & $ 3.59 \times 10^{1} \pm 7.33 \times 10^{-1} $ \\
M/M/1/$K$ (slow-mix) & [0.005, 0.008] & $ 8.25 ~~~~~~~~~~~ \pm 3.75 \times 10^{-1} $ & $  8.11 ~~~~~~~~~~~ \pm 3.42 \times 10^{-1} $ \\
M/M/$m$/$m+r$        & [0.013, 0.024] & $ 2.21 \times 10^{1} \pm 1.68 ~~~~~~~~~~~ $ & $ 2.22 \times 10^{1} \pm 1.64 ~~~~~~~~~~~ $ \\
M/M/Multiple/$K$     & [0.068, 0.096] & $ 1.12 \times 10^{2} \pm 8.88 \times 10^{-1} $ & $ 9.30 \times 10^{1} \pm 1.46 ~~~~~~~~~~~ $ \\
\hline
\end{tabular}
}
\vspace{-10pt} 
\end{table*}

%
%
%
%
\subsection*{\AppExpE: Other Synthetic Result Details} \label{subsec:app:other}

\paragraph{Training Curves.}
Looking at the training curves of synthetic results in Figure~\ref{fig:mape} and Figure~\ref{fig:mse}, we can clearly see that DC-BPTT has a gradient vanishing problem, while  $\infty$-SGD always computes useful gradients. 

\begin{figure*}[h!!!]
\begin{minipage}{0.24\textwidth}
\centering
\includegraphics[width=1.8in,height=1.5in]{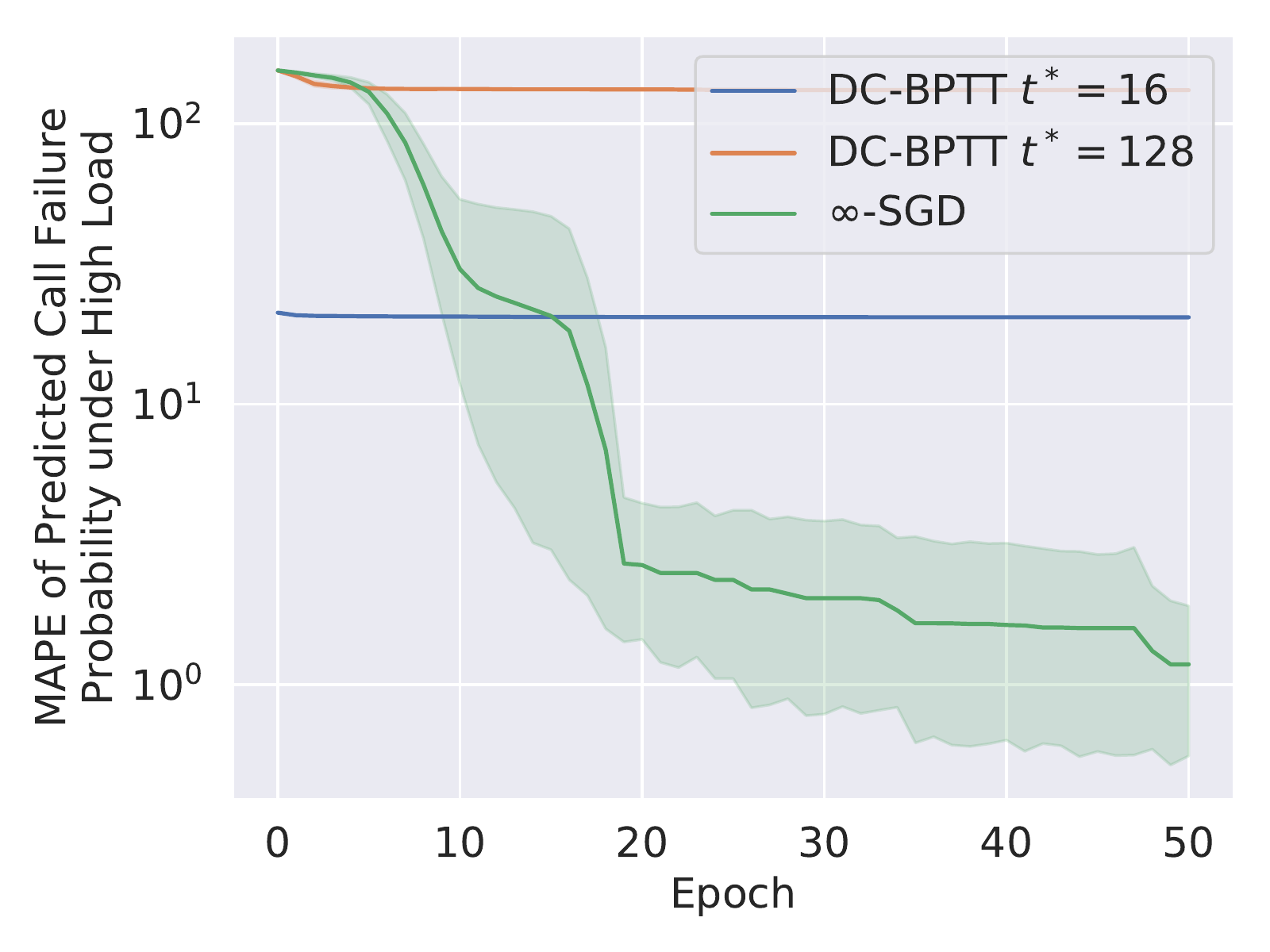}
\subcaption{M/M/1/$K$ (fast-mix)}
\end{minipage}
\hfill
\begin{minipage}{0.24\textwidth}
\centering
\includegraphics[width=1.8in,height=1.5in]{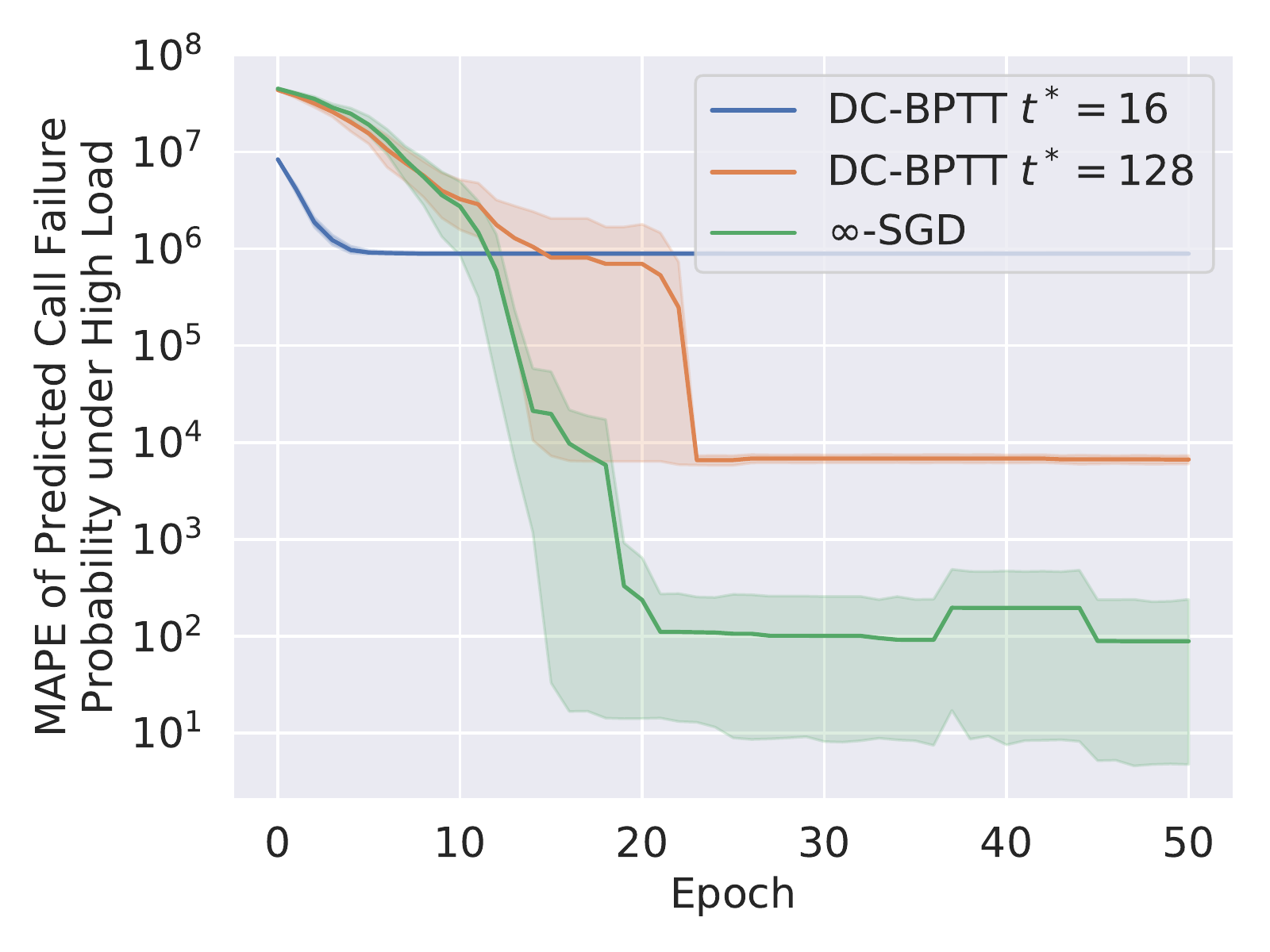}
\subcaption{M/M/1/$K$ (slow-mix)}
\end{minipage}
\hfill
\begin{minipage}{0.24\textwidth}
\centering
\includegraphics[width=1.8in,height=1.5in]{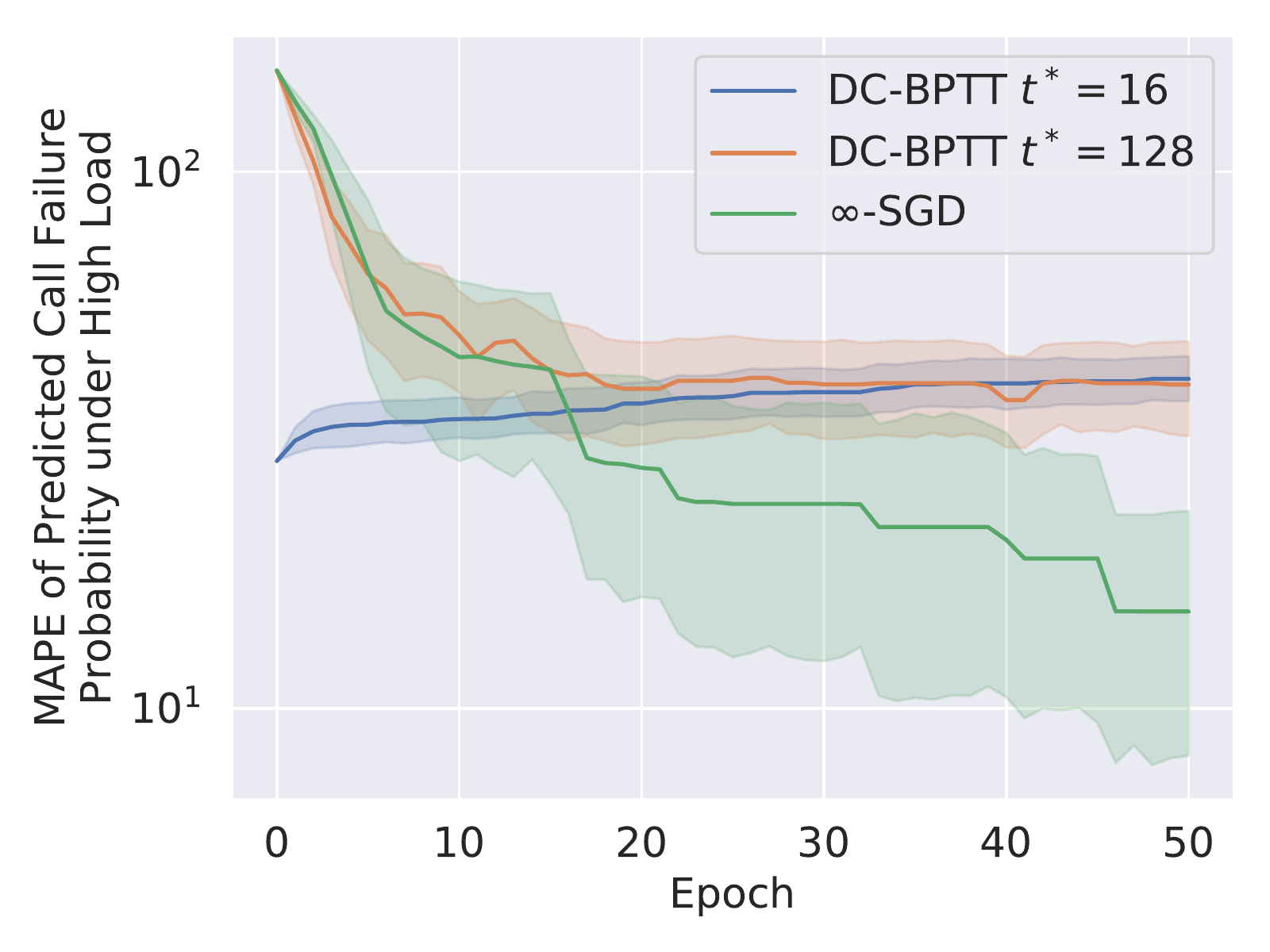}
\subcaption{M/M/$m$/$m+r$}
\end{minipage}
\hfill
\begin{minipage}{0.24\textwidth}
\centering
\includegraphics[width=1.8in,height=1.5in]{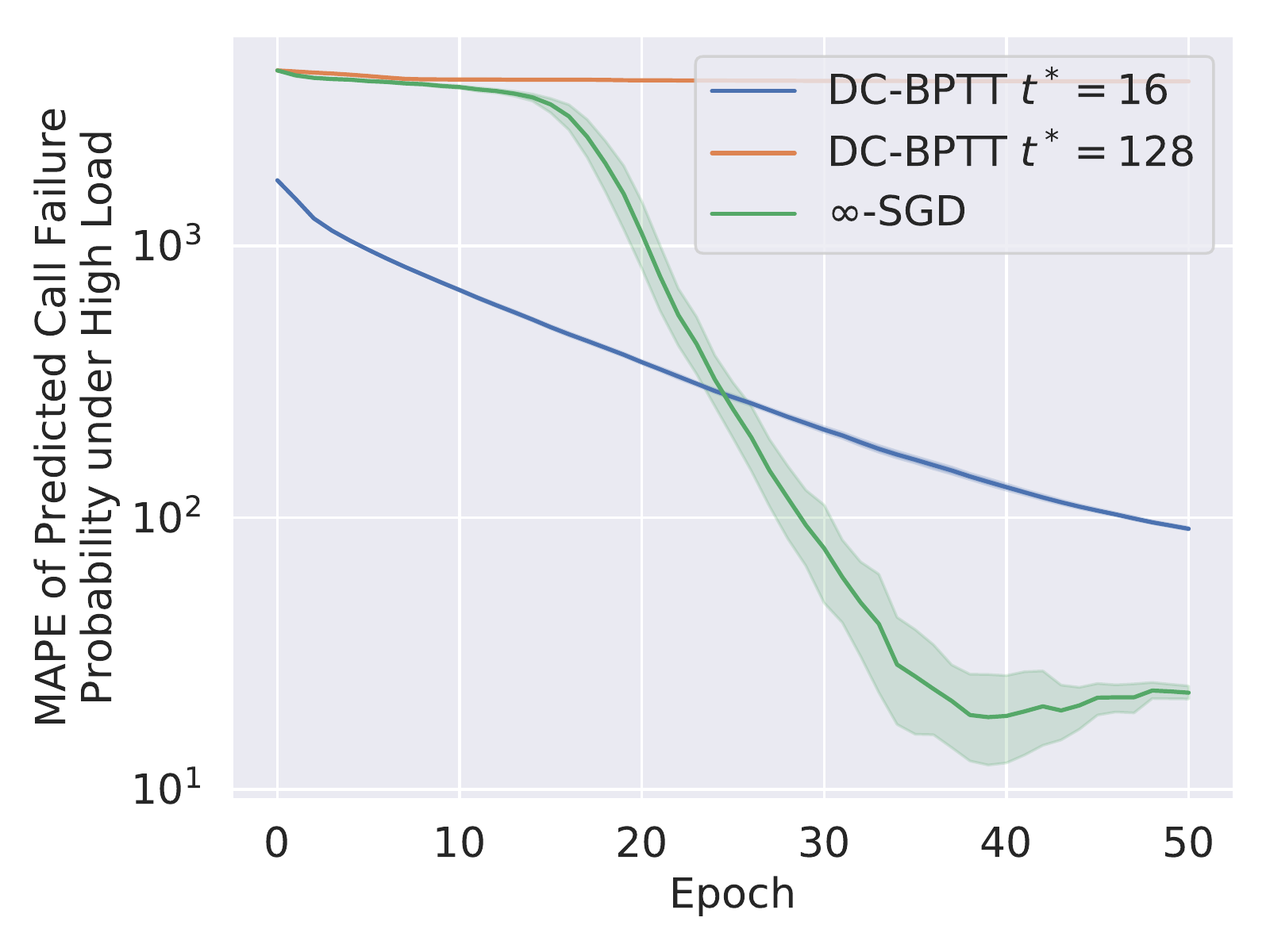}
\subcaption{M/M/Multiple/$K$}
\end{minipage} 
\caption{\small Training curves of MAPE on synthetic experiments.}
\label{fig:mape}
\vspace{-10pt} 
\end{figure*}

\begin{figure*}[h!!!]
\begin{minipage}{0.24\textwidth}
\centering
\includegraphics[width=1.8in,height=1.5in]{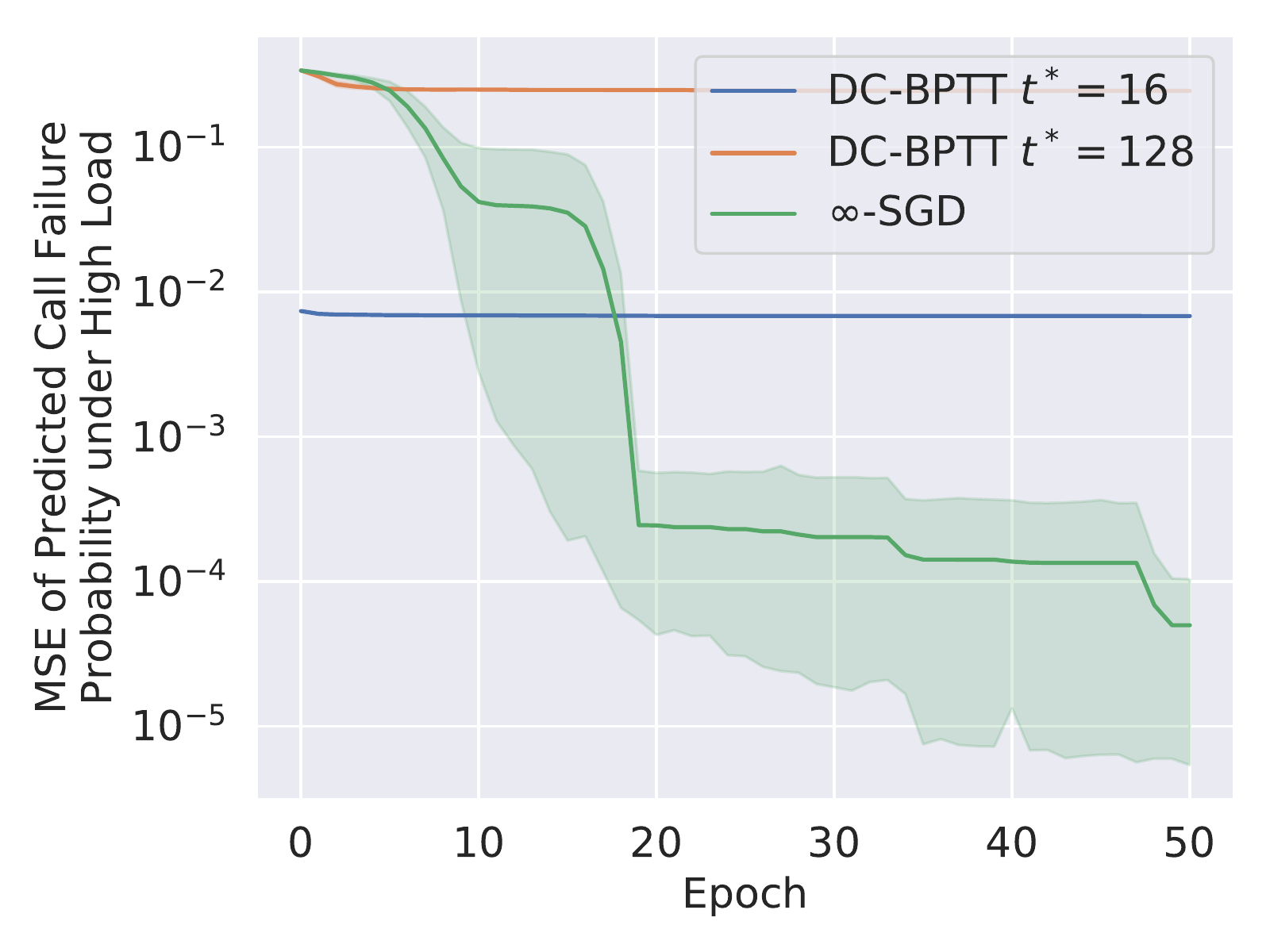}
\subcaption{M/M/1/$K$}
\end{minipage}
\hfill
\begin{minipage}{0.24\textwidth}
\centering
\includegraphics[width=1.8in,height=1.5in]{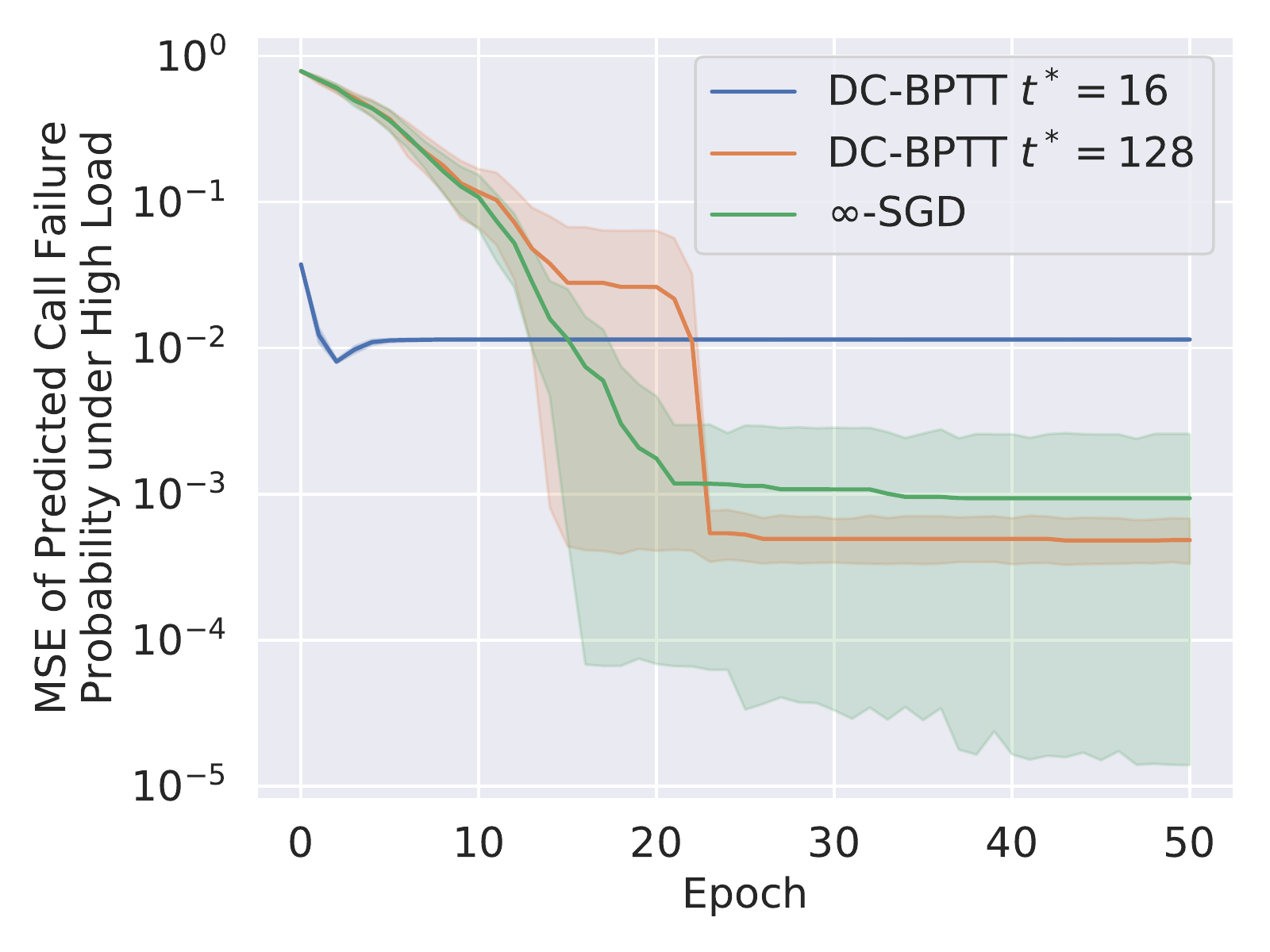}
\subcaption{M/M/1/$K$ (slow-mix)}
\end{minipage}
\hfill
\begin{minipage}{0.24\textwidth}
\centering
\includegraphics[width=1.8in,height=1.5in]{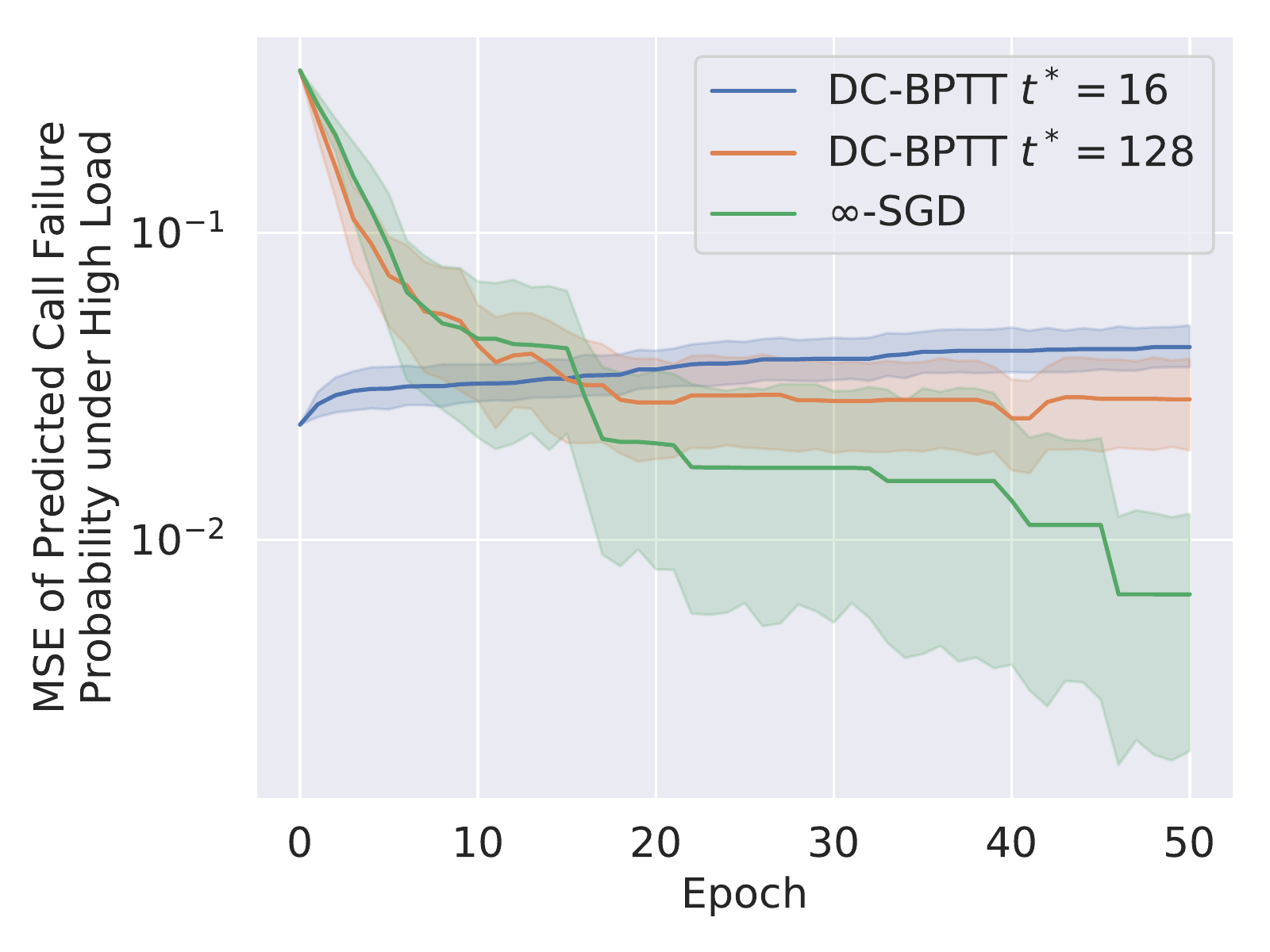}
\subcaption{M/M/$m$/$m+r$}
\end{minipage}
\hfill
\begin{minipage}{0.24\textwidth}
\centering
\includegraphics[width=1.8in,height=1.5in]{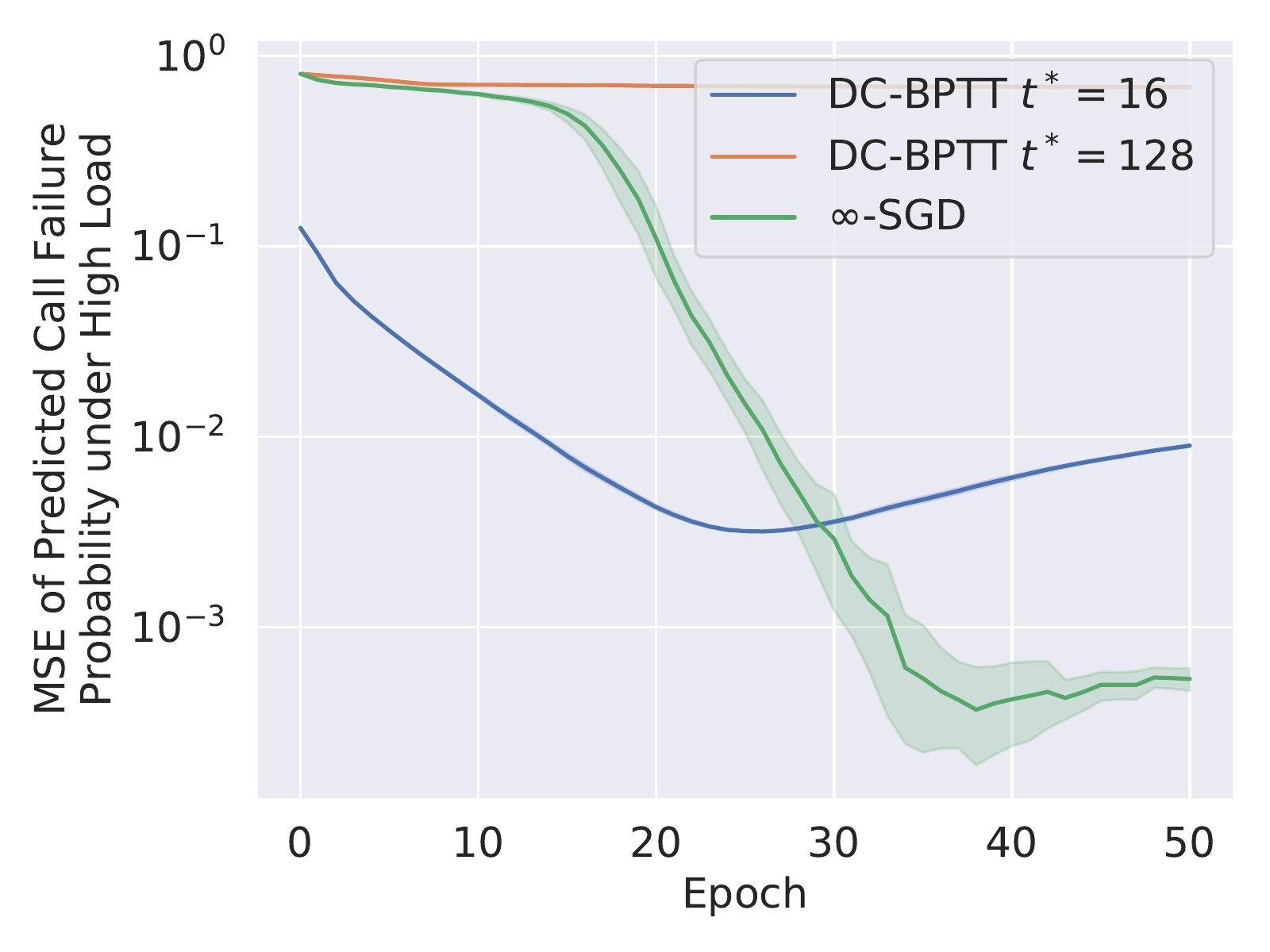}
\subcaption{M/M/Multiple/$K$}
\end{minipage} 
\caption{\small Training curves of MSE on synthetic experiments.}
\label{fig:mse}
\vspace{-10pt} 
\end{figure*}

\paragraph{Other Synthetic Configurations.}
In our experiments, we also tried some variations of the simulated transition rate parameters. Here, we control the spectral gap on training and test data to have a better understanding of our methods on slow mixing events. These transition rates are listed in Table~\ref{tab:paramsimul}.

Final results on those configurations are provided in Table~\ref{tab:mape2} (MAPE) and Table~\ref{tab:mse2} (MSE). We can see that $\infty$-SGD still outperforms DC-BPTT on MAPE metric, while is only comparable to DC-BPTT $t^\star = 16$ on MSE metric in the M/M/m/m+r case. Thus, $\infty$-SGD shows to be a much more stable optimization method. 

\begin{table*}[t!!]
\caption{Parameter configurations for synthetic training and test data whose spectral gap is controlled. \label{tab:paramsimul}}
\centering
\begin{tabular}{lccccc}
& & \multicolumn{2}{c}{Training} & \multicolumn{2}{c}{Test} \\
\cmidrule(l){3-4} \cmidrule(l){5-6} \rule{0pt}{1.0\normalbaselineskip}
& $\mu$ & $\lambda^\text{train}_\text{min}$ & $\lambda^\text{train}_\text{max}$ & $\lambda^\text{test}_\text{min}$ & $\lambda^\text{test}_\text{max}$ \\
M/M/1/$K$ (Gap Controlled)        & 25          & 11 & 40 & 11 & 40 \\
M/M/m/m+r (Gap Controlled)        & 25          & 21 & 30 & 11 & 40 \\
M/M/Multiple/$K$ (Gap Controlled) & [15, 10, 5] & 16 & 45 & 16 & 45 \\
\hline
\end{tabular}
\end{table*}

\begin{table*}[h!!!]
\vspace{-5pt}
\caption{\small [MAPE] Simulation results showing MAPE/100 errors between predicted steady state and ground-truth for failure states in data whose spectral gap is controlled. \label{tab:mape2}}
\vspace{-5pt}
\centering
\resizebox{1.\textwidth}{!}{
\begin{tabular}{@{\extracolsep{0pt}}lclll@{}}
& $\delta_n$ (spectral gap) & \multicolumn{1}{c}{DC-BPTT $t^\star = 16$} & \multicolumn{1}{c}{DC-BPTT $t^\star = 128$} & \multicolumn{1}{c}{$\infty$-SGD ($p=0.1$)} \\
\cmidrule(l){2-5}
 M/M/1/$K$ (gap controlled) & $[0.005, 0.032]$        & $ 2.55 \times 10^{4} \pm 3.37 \times 10^{3} $ & $ 9.06 \times 10^{4} \pm 5.03 \times 10^{4} $ & $ {\bf 6.06 \times 10^{-1} \pm 5.91 \times 10^{-1}} $ \\
 M/M/$m$/$m+r$ (gap controlled) & $[0.010, 0.027]$  & $ 1.32 \times 10^{3} \pm 3.01 \times 10^{1} $ & $ 1.31 \times 10^{2} \pm 4.27 \times 10^{1} $ & $ {\bf 5.14 \times 10^{1} \pm 3.29 \times 10^{1}} $ \\
 M/M/Multiple/$K$ (gap controlled) & $[0.012, 0.057]$ & $ 1.19 \times 10^{3} \pm 1.07 \times 10^{2} $ & $ 2.88 \times 10^{5} \pm 9.11 \times 10^{3} $ & $ {\bf  6.69 \times 10^{-1} \pm 7.00 \times 10^{-1} } $ \\
\hline
\end{tabular}
}
\end{table*}

\begin{table*}[h!!!]
\caption{\small [MSE] Simulation results showing MSE errors between predicted steady state and ground-truth for failure states in data whose spectral gap is controlled. \label{tab:mse2}}
\vspace{-5pt}
\centering
\resizebox{1.\textwidth}{!}{
\begin{tabular}{@{\extracolsep{0pt}}lclll@{}}
\rule{0pt}{1.0\normalbaselineskip} & \multicolumn{1}{c}{$\delta_n$ (spectral gap)}
& \multicolumn{1}{c}{DC-BPTT $t^\star = 16$} & \multicolumn{1}{c}{DC-BPTT $t^\star = 128$} & \multicolumn{1}{c}{$\infty$-SGD ($p=0.1$)} \\
\cmidrule(l){2-5}
 M/M/1/$K$ (gap controlled) & $[0.005, 0.032]$        & $ 8.56 \times 10^{-3} \pm 9.92 \times 10^{-5} $ & $ 2.13 \times 10^{-1} \pm 1.15 \times 10^{-1} $ & $ {\bf 5.00 \times 10^{-4} \pm 6.28 \times 10^{-4}} $ \\
 M/M/$m$/$m+r$ (gap controlled) & $[0.010, 0.027]$ & $ {\bf 2.27 \times 10^{-2} \pm 9.17 \times 10^{-4}} $ & $ {\bf 4.57 \times 10^{-2} \pm 1.37 \times 10^{-2}} $ & $ {\bf 3.97 \times 10^{-2} \pm 1.61 \times 10^{-2}} $ \\
 M/M/Multiple/$K$ (gap controlled) & $[0.012, 0.057]$ & $ 4.34 \times 10^{-5} \pm 2.89 \times 10^{-6} $ & $ 5.61 \times 10^{-1} \pm 1.24 \times 10^{-2} $ & $ {\bf 4.00 \times 10^{-7} \pm 5.15 \times 10^{-7}} $ \\
\hline
\end{tabular}
}
\vspace{-10pt} 
\end{table*}

\end{document}